\newtheorem{thm}{Theorem}
\newtheorem*{thm*}{Theorem}
\newtheorem{lem}[thm]{Lemma}
\newtheorem*{lem*}{Lemma}
\newtheorem{prop}[thm]{Proposition}
\newtheorem*{prop*}{Proposition}
\newtheorem{defn}[thm]{Definition}
\newcommand{\X}{{\mathcal X}}
\renewcommand{\H}{{\mathcal H}}
\newcommand{\C}{{\mathcal C}}
\renewcommand{\S}{{\mathcal S}}
\newcommand{\R}{{\mathbb R}}
\newcommand{\1}{{\mathbf 1}}
\newcommand{\0}{{\mathbf 0}}
\renewcommand{\u}{{\mathbf u}}
\renewcommand{\v}{{\mathbf v}}
\newcommand{\x}{{\mathbf x}}
\newcommand{\balpha}{{\mathbf \alpha}}
\renewcommand{\>}{\rightarrow}
\newcommand{\E}{{\mathbf E}}
\newcommand{\supp}{\textup{\textrm{supp}}}
\newcommand{\curr}{\textup{\textrm{curr}}}
\newcommand{\UB}{\textup{\textrm{UB}}}
\newcommand{\lft}{\textup{\textrm{left}}}
\newcommand{\rght}{\textup{\textrm{right}}}
\renewcommand{\hat}{\widehat}
\icmltitlerunning{Mixture Proportion Estimation via Kernel Embedding of Distributions}
\begin{document} 

\twocolumn[
\icmltitle{Mixture Proportion Estimation 
via Kernel Embedding of Distributions}
\icmlauthor{Harish G. Ramaswamy${}^\dagger$}{hgramasw@in.ibm.com}
\icmladdress{IBM Research, Bangalore, India \\
					Indian Institute of Science, Bangalore, India}
\vspace{-0.6em}					
\icmlauthor{Clayton Scott}{clayscot@umich.edu} 
\vspace{-0.1em}
\icmladdress{EECS and Statistics, University of Michigan,
            Ann Arbor, USA}
 \vspace{-0.6em}					           
\icmlauthor{Ambuj Tewari}{tewaria@umich.edu}
\vspace{-0.12em}
\icmladdress{Statistics and EECS, University of Michigan,
            Ann Arbor, USA}
\icmlkeywords{mixture proportion estimation, kernel means,  }
\vskip 0.3in
]

\begin{abstract} 
Mixture proportion estimation (MPE) is the problem of estimating the weight of a component distribution in a mixture, given samples from the mixture and component. This problem constitutes a key part in many ``weakly supervised learning'' problems like learning with positive and unlabelled samples, learning with label noise, anomaly detection and crowdsourcing. While there have been several methods proposed to solve this problem, to the best of our knowledge no efficient algorithm with a proven convergence rate towards the true proportion exists for this problem. We fill this gap by constructing a provably correct algorithm for MPE, and derive convergence rates under certain assumptions on the distribution. Our method is based on embedding distributions onto an RKHS, and implementing it only requires solving a simple convex quadratic programming problem a few times. We run our algorithm on several standard classification datasets, and demonstrate that it performs comparably to or better than other algorithms on most datasets.
\end{abstract} 

\section{Introduction}
Mixture proportion estimation (MPE) is the problem of estimating the weight of a component distribution in a mixture, given samples from the mixture and component. Solving this problem happens to be a key step in solving several ``weakly supervised'' learning problems. For example, MPE is a crucial ingredient in solving the weakly supervised learning problem of learning from positive and unlabelled samples (LPUE), in which one has access to unlabelled data and positively labelled data but wishes to construct a classifier distinguishing between positive and negative data \cite{Liu+02, Denis+05, Ward+09}.  MPE also arises naturally in the task of learning a classifier with noisy labels in the training set, i.e., positive instances have a certain chance of being mislabelled as negative and vice-versa, independent of the observed feature vector \cite{LawrSchol01, BouveyGir09, StemRal09, LongServ10, Natarajan+13}. \citet{Natarajan+13} show that this problem can be solved by minimizing an appropriate cost sensitive loss. But the cost parameter depends on the label noise parameters, the computation of which can be broken into two MPE problems \cite{Scott+13}.  MPE also has applications in several other problems like anomaly rejection \cite{SandersonSc14} and crowdsourcing \cite{Raykar+10}.

When no assumptions are made on the mixture and the components, the problem is ill defined as the mixture proportion is not identifiable \cite{Scott15}. While several methods have been proposed to solve the MPE problem \cite{Blanchard+10, SandersonSc14, Scott15, ElkanNoto08, duPlessisSugi14, Jain+16}, to the best of our knowledge no provable and efficient  method is known for solving this problem in the general non-parametric setting with minimal assumptions. Some papers propose estimators that converge to the true proportion under certain conditions \cite{Blanchard+10, Scott+13, Scott15}, but they cannot be efficiently computed. Hence they use a method which is motivated based on the provable method but has no direct guarantees of convergence to the true proportion. Some papers propose an estimator that can be implemented efficiently \cite{ElkanNoto08, duPlessisSugi14}, but the resulting estimator is correct only under very restrictive conditions (see Section \ref{sec:other-algos}) on the distribution. Further, all these methods except the one by \citet{duPlessisSugi14} require an accurate binary conditional probability estimator as a sub-routine and use methods like logistic regression to achieve this. In our opinion, requiring an accurate conditional probability estimate (which is a real valued function over the instance space) for estimating the mixture proportion (a single number) is too roundabout.

Our main contribution in this paper is an efficient algorithm for mixture proportion estimation along with convergence rates of the estimate to the true proportion (under certain conditions on the distribution). The algorithm is based on embedding the distributions \cite{Gretton+12} into a reproducing kernel Hilbert space (RKHS), and only requires a simple quadratic programming solver as a sub-routine. Our method does not require the computation of a conditional probability estimate and is hence potentially better than other methods  in terms of accuracy and efficiency. We test our method on some standard datasets, compare our results against several other algorithms designed for mixture proportion estimation and find that our method performs better than or comparable to previously known algorithms on most datasets.

The rest of the paper is organised as follows. The problem set up and notations are given in Section \ref{sec:prob-setup}. In Section \ref{sec:RKHS-distance} we introduce the main object of our study, called the $\C$-distance, which essentially maps a candidate mixture proportion value to a measure of how `bad' the candidate is. We give a new condition on the mixture and component distributions that we call `separability' in Section \ref{sec:separability}, under which the $\C$-distance function explicitly reveals the true mixture proportion, and propose two estimators based on this. In Section \ref{sec:convergence-rates} we give the rates of convergence of the proposed estimators to the true mixture proportion. We give an explicit implementation of one of the estimators based on a simple binary search procedure in Section \ref{sec:grad-thres-algo}. We give brief summaries of other known algorithms for mixture proportion estimation in Section \ref{sec:other-algos} and list their characteristics and shortcomings. We give details of our experiments in Section \ref{sec:expts} and conclude in Section \ref{sec:concl}.

\section{Problem Setup and Notations}
\label{sec:prob-setup}
\label{submission}
Let $G,H$ be distributions over a compact metric space $\X$ with supports given by $\supp(G),\supp(H)$. Let $\kappa^*\in [0,1)$ and let $F$ be a distribution that is given by a convex combination (or equivalently, a mixture) of $G$ and $H$ as follows:
\[
F= (1-\kappa^*) G + \kappa^* H .
\]
Equivalently, we can write
\[
G= (\lambda^*) F + (1-\lambda^*) H ,
\]
where $\lambda^*=\frac{1}{1-\kappa^*}$.  Given samples $\{x_1,x_2,\ldots,x_n\}$ drawn i.i.d. from $F$ and $\{x_{n+1},\ldots,x_{n+m}\}$ drawn i.i.d. from $H$, the objective in mixture proportion estimation (MPE) \cite{Scott15} is to estimate $\kappa^*$. 

Let $\H$ be a reproducing kernel Hilbert space (RKHS) \cite{Aronszajn50, BerlinetThomas04} with a positive semi-definite kernel $k:\X\times\X\>\R$. Let $\phi:\X\>\H$ represent the kernel mapping $x\mapsto k(x,.)$. For any distribution $P$ over $\X$, let $\phi(P)=\E_{X\sim P} \phi(X)$. It can be seen that for any distribution $P$ and $f\in\H$, that $\E_{X\sim P} f(X) = \langle f,\phi(P)\rangle_\H$. Let $\Delta_{n+m}\subseteq\R^{n+m}$ be the $(n+m-1)$-dimensional probability simplex given by $\Delta_{n+m}=\{\mathbf p\in[0,1]^{n+m}: \sum_i p_i=1\}$. Let $\C,\C_S$ be defined as
\begin{align*}
\C & =\{w \in \H: w=\phi(P), \text{for some distribution } P \}, \\
\C_S & =\{w \in \H: w=\sum_{i=1}^{n+m} \alpha_i \phi(x_i), \text{ for some } \balpha\in\Delta_{n+m} \}.
\end{align*}
Clearly, $\C_S\subseteq \C$, and both $\C,\C_S$ are convex sets. 

Let $\hat{F}$ be the distribution over $\X$ that is uniform over $\{x_1,x_2,\ldots,x_n\}$.  Let $\hat{H}$ be the distribution over $\X$ that is uniform over $\{x_{n+1},\ldots,x_{n+m}\}$. As $F$ is a mixture of $G$ and $H$, we have that some $S_1\subseteq \{x_1,\ldots,x_n\}$ is drawn from $G$ and the rest from $H$. We let $\hat{G}$ denote the uniform distribution over $S_1$. On average, we expect the cardinality of $S_1$ to be $\frac{n}{\lambda^*}$. Note that we do not know $S_1$ and hence cannot compute $\phi(\hat{G})$ directly, however we have that $\phi(\hat{G})\in\C_S$.

\section{RKHS Distance to Valid Distributions}
\label{sec:RKHS-distance}
Define the ``$\C$-distance'' function $d:[0,\infty)\>[0,\infty)$ as follows:
\begin{align}
d(\lambda) = \inf_{w\in\C} \| \lambda \phi(F) + (1-\lambda)\phi(H) - w \|_\H .
\label{eqn:d-defn}
\end{align}
Intuitively, $d(\lambda)$  reconstructs $\phi(G)$ from $F$ and $H$ assuming $\lambda^*=\lambda$, and computes its distance to $\C$.  Also, define the empirical version of the $\C$-distance function, $\hat d:[0,\infty)\>[0,\infty)$, which we call the $\C_S$-distance function, as
\begin{align}
\hat d(\lambda) = \inf_{w\in\C_S} \| \lambda \phi(\hat{F}) + (1-\lambda)\phi(\hat H) - w \|_\H \,.
\label{eqn:d-hat-defn}
\end{align}

Note that the $\C_S$-distance function $\hat d(\lambda)$ can be computed efficiently via solving a quadratic program.  For any $\lambda\geq0$, let $\u_\lambda\in\R^{n+m}$ be such that $\u_\lambda^\top = \frac{\lambda}{n} ([\1_n^\top, \0_m^\top]) + \frac{1-\lambda}{m} ([\0_n^\top, \1_m^\top])$, where $\1_n$ is the $n$-dimensional all ones vector, and $\0_m$ is the $m$-dimensional all zeros vector. Let $K\in\R^{(n+m)\times(n+m)}$ be the kernel matrix given by $K_{i,j}=k(x_i,x_j)$. We then have 
\[
(\hat d(\lambda))^2=  \inf_{\v\in\Delta_{n+m}} (\u_\lambda - \v)^\top K (\u_\lambda - \v)\,.
\] 

We now give some basic properties of the $\C$-distance function and the $\C_S$-distance function that will be of use later. All proofs not found in the paper can be found in the supplementary material.
\begin{prop}
\label{prop:d-is-zero}
\begin{align*}
d(\lambda) &=0,   \qquad \forall \lambda\in[0,\lambda^*], \\
\hat d(\lambda) &=0,   \qquad \forall \lambda\in[0,1]\,.
\end{align*} 
\end{prop}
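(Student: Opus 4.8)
The plan is to show that in each case the element of $\H$ whose distance to $\C$ (respectively $\C_S$) is being measured already belongs to that set, so that the infimum defining $d(\lambda)$ (respectively $\hat d(\lambda)$) is attained at $0$. The single fact I rely on throughout is that $P\mapsto\phi(P)$ is affine under mixtures: if $R=aP+bQ$ with $a,b\ge0$ and $a+b=1$, then $\phi(R)=a\,\phi(P)+b\,\phi(Q)$, which is immediate from $\phi(P)=\E_{X\sim P}\phi(X)$ and linearity of expectation.

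For the first claim I would start from $G=\lambda^* F+(1-\lambda^*)H$, rearrange it to $F=\tfrac{1}{\lambda^*}G+(1-\tfrac{1}{\lambda^*})H$ — a genuine convex combination since $\lambda^*=\tfrac{1}{1-\kappa^*}\ge1$ — and apply affinity of $\phi$ to get $\phi(F)=\tfrac{1}{\lambda^*}\phi(G)+(1-\tfrac{1}{\lambda^*})\phi(H)$. Substituting into $\lambda\phi(F)+(1-\lambda)\phi(H)$ and simplifying yields $\lambda\phi(F)+(1-\lambda)\phi(H)=\tfrac{\lambda}{\lambda^*}\phi(G)+(1-\tfrac{\lambda}{\lambda^*})\phi(H)$. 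For $\lambda\in[0,\lambda^*]$ the coefficient $\tfrac{\lambda}{\lambda^*}$ lies in $[0,1]$, so the right-hand side is $\phi\!\big(\tfrac{\lambda}{\lambda^*}G+(1-\tfrac{\lambda}{\lambda^*})H\big)$, the embedding of an honest probability distribution, hence an element of $\C$; therefore $d(\lambda)=0$.

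For the second claim I would use $\phi(\hat F)=\tfrac{1}{n}\sum_{i=1}^{n}\phi(x_i)$ and $\phi(\hat H)=\tfrac{1}{m}\sum_{i=n+1}^{n+m}\phi(x_i)$, so that $\lambda\phi(\hat F)+(1-\lambda)\phi(\hat H)=\sum_{i=1}^{n+m}(\u_\lambda)_i\,\phi(x_i)$ with $\u_\lambda$ the vector defined earlier. For $\lambda\in[0,1]$ every entry of $\u_\lambda$ is nonnegative and the entries sum to $\lambda+(1-\lambda)=1$, so $\u_\lambda\in\Delta_{n+m}$ and the point lies in $\C_S$ by definition of $\C_S$; hence $\hat d(\lambda)=0$. (Equivalently, $\lambda\hat F+(1-\lambda)\hat H$ is itself a distribution supported on $\{x_1,\dots,x_{n+m}\}$.)

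I do not expect a genuine obstacle here; the only care needed is the bookkeeping that the coefficients produced by the algebra are nonnegative and sum to one on the stated ranges of $\lambda$ — in particular that $\lambda/\lambda^*\le1$ precisely when $\lambda\le\lambda^*$ — so that what is obtained is the embedding of a bona fide probability distribution (a point of $\C$ or $\C_S$) rather than of a signed measure. Once this is checked, membership in the relevant set, and hence the vanishing of the infimum, is immediate.
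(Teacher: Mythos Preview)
Your proposal is correct and follows essentially the same route as the paper: both arguments rewrite $\lambda\phi(F)+(1-\lambda)\phi(H)$ as $\tfrac{\lambda}{\lambda^*}\phi(G)+(1-\tfrac{\lambda}{\lambda^*})\phi(H)$ and observe this lies in $\C$ for $\lambda\in[0,\lambda^*]$, while the empirical claim follows in both cases from $\phi(\hat F),\phi(\hat H)\in\C_S$ and convexity of $\C_S$ (you just make this explicit via $\u_\lambda\in\Delta_{n+m}$).
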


\begin{prop}
\label{prop:d-inc-conv}
 $d(.)$ and $\hat d(.)$ are non-decreasing convex functions on $[0,\infty)$.
\end{prop}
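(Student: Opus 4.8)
The plan is to prove both statements by a single structural argument, since $\hat d$ is the same construction as $d$ with $(F,H,\C)$ replaced by $(\hat F,\hat H,\C_S)$, and the only inputs needed are the convexity of $\C$ and $\C_S$ (noted in Section~\ref{sec:prob-setup}) together with Proposition~\ref{prop:d-is-zero}. For convexity, I would first rewrite $\lambda\phi(F)+(1-\lambda)\phi(H)=\phi(H)+\lambda(\phi(F)-\phi(H))$, which is an affine map $g:\R\to\H$ of $\lambda$, so that $d(\lambda)=\inf_{w\in\C}\|g(\lambda)-w\|_\H=\mathrm{dist}(g(\lambda),\C)$. I would then check that $x\mapsto\mathrm{dist}(x,\C)$ is convex on $\H$: for $x_1,x_2\in\H$, $t\in[0,1]$, and any $w_1,w_2\in\C$, convexity of $\C$ makes $tw_1+(1-t)w_2$ admissible, so the triangle inequality gives $\mathrm{dist}(tx_1+(1-t)x_2,\C)\le t\|x_1-w_1\|_\H+(1-t)\|x_2-w_2\|_\H$, and taking the infimum over $w_1,w_2$ yields convexity. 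Since a convex function composed with an affine map is convex, $d=\mathrm{dist}(\cdot,\C)\circ g$ is convex on $[0,\infty)$; the identical argument with $\C_S$ gives convexity of $\hat d$. (Equivalently, one can skip naming $\mathrm{dist}(\cdot,\C)$ and simply run the triangle-inequality estimate on near-minimizers of the two infima defining $d(\lambda_1)$ and $d(\lambda_2)$.)

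For monotonicity, I would combine convexity with the facts that $d\ge 0$ (immediate from the definition) and $d(0)=0$ (Proposition~\ref{prop:d-is-zero}, since $0\in[0,\lambda^*]$); thus $d$ attains its minimum at the left endpoint of its domain. Then for $0\le\lambda_1<\lambda_2$, writing $\lambda_1=(1-t)\cdot 0+t\lambda_2$ with $t=\lambda_1/\lambda_2\in[0,1)$ and applying convexity gives $d(\lambda_1)\le(1-t)d(0)+t\,d(\lambda_2)=t\,d(\lambda_2)\le d(\lambda_2)$, where the last step uses $t\le 1$ and $d(\lambda_2)\ge 0$. Hence $d$ is non-decreasing, and the same computation, now using $\hat d(0)=0$ (since $0\in[0,1]$), shows $\hat d$ is non-decreasing.

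There is no substantial obstacle here — the result is essentially an observation — but the one pitfall to avoid is inferring convexity of $d$ from the (true) convexity of $\lambda\mapsto\|g(\lambda)-w\|_\H$ for each fixed $w$, because a pointwise infimum of convex functions is not convex in general. The convexity of $\C$ (and of $\C_S$), which is what lets convex combinations of admissible $w$'s remain admissible, is precisely the ingredient that rescues the argument.
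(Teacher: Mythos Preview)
Your proof is correct. For convexity you package the argument as ``distance-to-a-convex-set is convex, then compose with the affine map $\lambda\mapsto\phi(H)+\lambda(\phi(F)-\phi(H))$,'' and you explicitly note that this unwinds to the near-minimizer-plus-triangle-inequality computation; that computation is exactly what the paper does, so the convexity arguments coincide.

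For monotonicity the two proofs diverge slightly. The paper observes that $d\equiv 0$ on the interval $[0,\lambda^*]$, so the subgradient is $0$ there, and then invokes monotonicity of the subdifferential of a convex function to conclude all subgradients on $[0,\infty)$ are nonnegative. Your route is more elementary: you only use $d(0)=0$ and $d\ge 0$, then interpolate $\lambda_1$ between $0$ and $\lambda_2$ to get $d(\lambda_1)\le t\,d(\lambda_2)\le d(\lambda_2)$. This avoids subdifferential calculus entirely and uses a strictly weaker input from Proposition~\ref{prop:d-is-zero}; the paper's version, on the other hand, makes the connection to $\partial d$ explicit, which is consonant with how the gradient of $\hat d$ is used later in the paper. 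Both are standard and equally valid for this level of result.
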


Below, we give a simple reformulation of the $\C$-distance function and basic lower and upper bounds that reveal its structure.

\begin{prop}
\label{prop-d-reformulation}
For all $\mu\geq 0$,
\begin{align*}
d(\lambda^*+\mu)	&= \inf_{w\in\C} \| \phi(G) + \mu(\phi(F) - \phi(H)) - w \|_\H\,.
\end{align*}
\end{prop}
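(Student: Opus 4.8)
The plan is to reduce the claim to a single affine identity in $\H$ and then substitute into the definition \eqref{eqn:d-defn}. The starting point is the mixture relation $F = (1-\kappa^*)G + \kappa^* H$. Since $\phi(P) = \E_{X\sim P}\phi(X)$ and expectation is linear, this relation transfers directly to the embeddings: $\phi(F) = (1-\kappa^*)\phi(G) + \kappa^*\phi(H)$. (The Bochner integral defining $\phi(P)$ exists for every distribution $P$ because $k$, and hence $\|\phi(\cdot)\|_\H$, is bounded on the compact space $\X$, so each of $\phi(F),\phi(G),\phi(H)$ is a well-defined element of $\H$ and the identity is legitimate.)

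Next I would solve this identity for $\phi(G)$. Dividing by $1-\kappa^* > 0$ and recalling $\lambda^* = \tfrac{1}{1-\kappa^*}$ gives $\phi(G) = \lambda^*\phi(F) - \lambda^*\kappa^*\phi(H)$, and since $\lambda^*\kappa^* = \tfrac{\kappa^*}{1-\kappa^*} = \lambda^* - 1$, this simplifies to $\phi(G) = \lambda^*\phi(F) + (1-\lambda^*)\phi(H)$. Note that this is a signed rather than convex combination, but that is immaterial here since we are merely manipulating fixed vectors in a Hilbert space.

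Finally, I would set $\lambda = \lambda^* + \mu$ in \eqref{eqn:d-defn}. The element whose distance to $\C$ is measured is then $(\lambda^*+\mu)\phi(F) + (1-\lambda^*-\mu)\phi(H)$. Regrouping and using the identity just derived, $(\lambda^*+\mu)\phi(F) + (1-\lambda^*-\mu)\phi(H) = \big(\lambda^*\phi(F)+(1-\lambda^*)\phi(H)\big) + \mu\big(\phi(F)-\phi(H)\big) = \phi(G) + \mu(\phi(F)-\phi(H))$. Substituting this back into the infimum over $w\in\C$ yields exactly the claimed expression for $d(\lambda^*+\mu)$.

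I do not anticipate any real obstacle: the argument is just linearity of the kernel mean embedding followed by a two-line rearrangement. The only point meriting a word of care is the passage from a mixture of distributions to the corresponding affine combination of their embeddings, which is precisely where boundedness of the kernel on the compact domain $\X$ is invoked.
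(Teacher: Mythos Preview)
Your argument is correct and is essentially the same as the paper's: both amount to the two-line rearrangement $(\lambda^*+\mu)\phi(F)+(1-\lambda^*-\mu)\phi(H)=\big(\lambda^*\phi(F)+(1-\lambda^*)\phi(H)\big)+\mu(\phi(F)-\phi(H))=\phi(G)+\mu(\phi(F)-\phi(H))$, using linearity of the kernel mean embedding applied to the mixture relation. Your extra remarks on Bochner integrability are harmless but not needed in the paper's setup.
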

\begin{prop}
\label{prop:basic-bounds-d}
For all $\lambda,\mu \geq 0$,
\begin{align}
d(\lambda) &\geq \lambda \|\phi(F) - \phi(H)\| - \sup_{w\in\C} \|\phi(H)-w\|,  \label{eqn:bound-d-1}\\
d(\lambda^*+ \mu) &\leq  \mu \|\phi(F) - \phi(H)\| \,.\label{eqn:bound-d-4}
\end{align}
\end{prop}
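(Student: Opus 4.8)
The plan is to prove each of the two bounds by an elementary manipulation of the vector inside the infimum defining $d$: for the lower bound \eqref{eqn:bound-d-1} I apply the reverse triangle inequality uniformly over the feasible set $\C$, and for the upper bound \eqref{eqn:bound-d-4} I simply exhibit a convenient feasible point.

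For \eqref{eqn:bound-d-1}, I would start from the definition \eqref{eqn:d-defn} and split the argument of the norm as
\[
\lambda \phi(F) + (1-\lambda)\phi(H) - w = \lambda\bigl(\phi(F) - \phi(H)\bigr) + \bigl(\phi(H) - w\bigr).
\]
The reverse triangle inequality then gives, for every $w \in \C$,
\[
\| \lambda \phi(F) + (1-\lambda)\phi(H) - w \|_\H \;\geq\; \lambda\|\phi(F) - \phi(H)\|_\H - \|\phi(H) - w\|_\H .
\]
Taking the infimum over $w\in\C$ on both sides, and using $\inf_{w\in\C}\bigl(-\|\phi(H)-w\|_\H\bigr) = -\sup_{w\in\C}\|\phi(H)-w\|_\H$, yields \eqref{eqn:bound-d-1}. (The supremum is finite since $\X$ is compact and $k$ bounded, though finiteness is not needed for the inequality to hold.)

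For \eqref{eqn:bound-d-4}, I would invoke Proposition \ref{prop-d-reformulation} to write $d(\lambda^*+\mu) = \inf_{w\in\C}\|\phi(G) + \mu(\phi(F)-\phi(H)) - w\|_\H$. Since $G$ is a distribution over $\X$, the point $\phi(G)$ lies in $\C$ by the very definition of $\C$, so it is a feasible choice of $w$; substituting $w = \phi(G)$ leaves $\|\mu(\phi(F)-\phi(H))\|_\H = \mu\|\phi(F)-\phi(H)\|_\H$, using $\mu\geq 0$. There is essentially no hard step here — the only things to check are that $\phi(G)\in\C$ (immediate) and that the reformulation of Proposition \ref{prop-d-reformulation} is in hand, which it is; the main "obstacle" is merely bookkeeping the decomposition of the norm argument correctly in each case.
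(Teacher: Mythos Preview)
Your proposal is correct and matches the paper's own argument essentially line for line: the paper proves \eqref{eqn:bound-d-1} by applying the (reverse) triangle inequality to the definition \eqref{eqn:d-defn}, and proves \eqref{eqn:bound-d-4} by invoking Proposition~\ref{prop-d-reformulation} and choosing $w=\phi(G)$. There is nothing to add.
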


Using standard results of  \citet{Smola+07}, we can show that the kernel mean embeddings of the empirical versions of $F$, $H$ and $G$ are close to the embeddings of the distributions themselves.

\begin{lem}
\label{prop:empirical-close-to-truth}
Let the kernel $k$ be such that $k(x,x)\leq 1$ for all $x\in\X$. Let $\delta\in(0,1/4]$. The following holds with probability $1-4\delta$ (over the sample $x_1,\ldots,x_{n+m}$) if $n>2(\lambda^*)^2 \log\left(\frac{1}{\delta}\right)$,
\begin{align*}
\|\phi(F) - \phi(\hat{F})\|_\H &\leq 
\frac{3\sqrt{\log(1/\delta)}}{\sqrt{n}}\\
\|\phi(H) - \phi(\hat{H})\|_\H &\leq
\frac{3\sqrt{\log(1/\delta)}}{\sqrt{m}}  \\
\|\phi(G) - \phi(\hat{G})\|_\H  &\leq
\frac{3\sqrt{\log(1/\delta)}}{\sqrt{n/(2\lambda^*)}}.
\end{align*}
\end{lem}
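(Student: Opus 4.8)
The plan is to obtain all three inequalities from one concentration bound for the kernel mean of an i.i.d.\ sample, and then deal separately with the fact that the ``$G$-sample'' $S_1$ is random in both its membership and its cardinality. The building block: fix a distribution $P$ on $\X$ and let $\hat P$ be the empirical distribution of an i.i.d.\ sample $y_1,\dots,y_N\sim P$. Viewing $(y_1,\dots,y_N)\mapsto \|\phi(P)-\phi(\hat P)\|_\H$ as a function of the sample, replacing one coordinate changes it by at most $\tfrac2N\sup_x\|\phi(x)\|_\H\le \tfrac2N$ since $k(x,x)\le 1$, so McDiarmid's inequality gives, with probability $1-\delta$,
\[
\|\phi(P)-\phi(\hat P)\|_\H \le \E\|\phi(P)-\phi(\hat P)\|_\H + \sqrt{\tfrac{2\log(1/\delta)}{N}}\,.
\]
A direct variance computation plus Jensen's inequality gives $\E\|\phi(P)-\phi(\hat P)\|_\H\le \big(\tfrac1N \E_{X\sim P}k(X,X)\big)^{1/2}\le \tfrac{1}{\sqrt N}$, so $\|\phi(P)-\phi(\hat P)\|_\H\le \tfrac{1+\sqrt{2\log(1/\delta)}}{\sqrt N}$; since $\delta\le 1/4$ forces $\log(1/\delta)\ge\log4>1$, this is at most $\tfrac{3\sqrt{\log(1/\delta)}}{\sqrt N}$. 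This is exactly the bound of \citet{Smola+07} referenced before the statement. Applying it with $(P,\hat P,N)=(F,\hat F,n)$ and with $(H,\hat H,m)$ yields the first two inequalities, each on an event of probability $\ge 1-\delta$.

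For $\phi(G)$ the extra work is to control $|S_1|$. I would write the generation of the $F$-sample as: draw latent bits $b_1,\dots,b_n$ i.i.d.\ Bernoulli with $\Pr(b_i=1)=1-\kappa^*=1/\lambda^*$, then set $x_i\sim G$ if $b_i=1$ and $x_i\sim H$ otherwise, so that $S_1=\{x_i:b_i=1\}$ and $|S_1|\sim\mathrm{Bin}(n,1/\lambda^*)$. Conditioned on any fixed value of $(b_1,\dots,b_n)$ with $\sum_i b_i=N_1$, the points of $S_1$ are $N_1$ i.i.d.\ draws from $G$ (independent of the $H$-sample), hence the same is true conditioned on $\{|S_1|=N_1\}$; on this event the building block gives $\|\phi(G)-\phi(\hat G)\|_\H\le \tfrac{3\sqrt{\log(1/\delta)}}{\sqrt{N_1}}$ with conditional probability $\ge 1-\delta$. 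Separately, Hoeffding's inequality for the binomial lower tail gives
\[
\Pr\big(|S_1|\le \tfrac{n}{2\lambda^*}\big)=\Pr\big(|S_1|-\E|S_1|\le -\tfrac{n}{2\lambda^*}\big)\le \exp\!\big(-\tfrac{n}{2(\lambda^*)^2}\big)\le \delta,
\]
where the last inequality is precisely the hypothesis $n>2(\lambda^*)^2\log(1/\delta)$; and on $\{|S_1|\ge n/(2\lambda^*)\}$ we have $\tfrac{1}{\sqrt{N_1}}\le \tfrac{1}{\sqrt{n/(2\lambda^*)}}$. A union bound over these two events gives the third inequality on an event of probability $\ge 1-2\delta$.

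Finally, a union bound over the three events (probabilities $\ge1-\delta$, $\ge1-\delta$, $\ge1-2\delta$) shows all three bounds hold simultaneously with probability $\ge 1-4\delta$, as claimed. The only step that needs genuine care rather than routine estimation is the conditioning argument for $S_1$: one must verify that conditioning on the random cardinality $|S_1|$ (equivalently, on the latent bits) does not distort the conditional law of the selected points, so that the i.i.d.\ concentration bound legitimately applies with $N_1$ in the denominator, and then decouple this from the binomial tail bound on $|S_1|$ via the stated sample-size assumption.
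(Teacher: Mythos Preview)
Your proposal is correct and essentially matches the paper's proof: both use a McDiarmid-based concentration bound for $\|\phi(P)-\phi(\hat P)\|_\H$ on the $F$- and $H$-samples, use Hoeffding on the latent Bernoulli labels to ensure $|S_1|\ge n/(2\lambda^*)$, apply the same concentration conditionally on $|S_1|$, and take a union bound over four events. The only cosmetic difference is that the paper bounds the expectation $\E\|\phi(P)-\phi(\hat P)\|_\H$ via the Rademacher complexity (quoting \citet{Smola+07} to get $2R_n(\H,P)+\sqrt{\log(1/\delta)/n}\le 2/\sqrt n+\sqrt{\log(1/\delta)/n}$), whereas you bound it directly by the second moment to get $1/\sqrt n+\sqrt{2\log(1/\delta)/n}$; both simplify to $3\sqrt{\log(1/\delta)}/\sqrt n$ under $\delta\le 1/4$.
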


We will call this $1-4\delta$ high probability event as $E_\delta$. All our results hold under this event. 

Using Lemma \ref{prop:empirical-close-to-truth} one can show that the $\C$-distance function and the $\C_S$-distance function are close to each other. Of particular use to us is an upper bound on the $\C_S$-distance function $\hat d(\lambda)$ for $\lambda\in[1,\lambda^*]$, and a general lower bound on $\hat d(\lambda) - d(\lambda)$.
\begin{lem}
\label{prop:d_hat_upper}
Let $k(x,x)\leq 1$ for all $x\in\X$. Assume $E_\delta$. For all $\lambda\in[1,\lambda^*]$ we have that
\[
\hat d(\lambda) \leq 
\left(2-\frac{1}{\lambda^*}+\frac{\sqrt{2}}{\sqrt{\lambda^*}}\right)\lambda\cdot \frac{3\sqrt{\log(1/\delta)}}{\sqrt{\min(m,n)}}.
\]
\end{lem}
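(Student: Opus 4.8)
The plan is to bound $\hat d(\lambda)$ from above by plugging a single, explicitly chosen feasible point $w_\lambda\in\C_S$ into the infimum defining $\hat d(\lambda)$, and then controlling $\|\lambda\phi(\hat F)+(1-\lambda)\phi(\hat H)-w_\lambda\|_\H$ by the triangle inequality and Lemma~\ref{prop:empirical-close-to-truth}. The right choice of $w_\lambda$ is dictated by the population identity underlying Proposition~\ref{prop:d-is-zero}: since $G=\lambda^*F+(1-\lambda^*)H$ we have $\phi(F)=\tfrac{1}{\lambda^*}\phi(G)+\big(1-\tfrac{1}{\lambda^*}\big)\phi(H)$, so $\lambda\phi(F)+(1-\lambda)\phi(H)=\tfrac{\lambda}{\lambda^*}\phi(G)+\big(1-\tfrac{\lambda}{\lambda^*}\big)\phi(H)$, and for $\lambda\in[1,\lambda^*]$ the coefficients $\tfrac{\lambda}{\lambda^*}$ and $1-\tfrac{\lambda}{\lambda^*}$ lie in $[0,1]$. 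Mimicking this on the empirical side, I would take
\[
w_\lambda \;:=\; \tfrac{\lambda}{\lambda^*}\,\phi(\hat G) + \big(1-\tfrac{\lambda}{\lambda^*}\big)\,\phi(\hat H).
\]

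First I would check $w_\lambda\in\C_S$: $\hat G$ is uniform on $S_1\subseteq\{x_1,\dots,x_n\}$ and $\hat H$ is uniform on $\{x_{n+1},\dots,x_{n+m}\}$, so $w_\lambda$ is a combination of $\phi(x_1),\dots,\phi(x_{n+m})$ with nonnegative weights summing to one (using $\tfrac{\lambda}{\lambda^*}\in[0,1]$ for $\lambda\in[1,\lambda^*]$, and noting that under $E_\delta$, $|S_1|\ge n/(2\lambda^*)>0$ so $\hat G$ is well defined). A short algebraic simplification then gives
\[
\lambda\phi(\hat F)+(1-\lambda)\phi(\hat H)-w_\lambda \;=\; \lambda\Big(\phi(\hat F) - \tfrac{1}{\lambda^*}\phi(\hat G) - \big(1-\tfrac{1}{\lambda^*}\big)\phi(\hat H)\Big).
\]
Now I would subtract the population identity $\phi(F)-\tfrac{1}{\lambda^*}\phi(G)-\big(1-\tfrac{1}{\lambda^*}\big)\phi(H)=0$ inside the norm and apply the triangle inequality, using $1-\tfrac{1}{\lambda^*}\ge 0$ (which holds since $\lambda^*\ge1$):
\begin{align*}
\Big\|\phi(\hat F) - \tfrac{1}{\lambda^*}\phi(\hat G) - \big(1-\tfrac{1}{\lambda^*}\big)\phi(\hat H)\Big\|_\H
&\le \|\phi(\hat F)-\phi(F)\|_\H + \tfrac{1}{\lambda^*}\|\phi(\hat G)-\phi(G)\|_\H \\
&\quad + \big(1-\tfrac{1}{\lambda^*}\big)\|\phi(\hat H)-\phi(H)\|_\H.
\end{align*}

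Finally I would substitute the three estimates from Lemma~\ref{prop:empirical-close-to-truth}, which hold on $E_\delta$; in particular $\|\phi(\hat G)-\phi(G)\|_\H\le 3\sqrt{\log(1/\delta)}\,\sqrt{2\lambda^*}/\sqrt n$, so the $\tfrac{1}{\lambda^*}$ prefactor converts it into $\tfrac{\sqrt2}{\sqrt{\lambda^*}}\cdot\tfrac{3\sqrt{\log(1/\delta)}}{\sqrt n}$. Bounding $1/\sqrt n$ and $1/\sqrt m$ by $1/\sqrt{\min(m,n)}$ collects the coefficients into $1+\tfrac{\sqrt2}{\sqrt{\lambda^*}}+\big(1-\tfrac{1}{\lambda^*}\big)=2-\tfrac{1}{\lambda^*}+\tfrac{\sqrt2}{\sqrt{\lambda^*}}$, and restoring the leftover factor $\lambda$ gives exactly the stated bound. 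I do not expect a genuine obstacle here: the one idea is guessing the witness $w_\lambda$ from the population identity, after which the argument is the triangle inequality plus bookkeeping. The only thing needing care is the pair of sign/feasibility conditions $\tfrac{\lambda}{\lambda^*}\in[0,1]$ and $1-\tfrac{1}{\lambda^*}\ge0$, both of which follow from $1\le\lambda\le\lambda^*$ — precisely the range in which the lemma is stated.
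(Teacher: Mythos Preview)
Your proof is correct and uses essentially the same approach as the paper: the same witness $w_\lambda=\tfrac{\lambda}{\lambda^*}\phi(\hat G)+\big(1-\tfrac{\lambda}{\lambda^*}\big)\phi(\hat H)$, followed by the triangle inequality and the bounds from Lemma~\ref{prop:empirical-close-to-truth}. Your decomposition is in fact a bit more direct than the paper's (which first swaps $\hat F,\hat H$ to $F,H$, incurring a $(2\lambda-1)$ term, and then swaps $H$ back to $\hat H$), but both routes collapse to the identical coefficient $2-\tfrac{1}{\lambda^*}+\tfrac{\sqrt2}{\sqrt{\lambda^*}}$.
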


\begin{lem}
\label{prop:d_hat_lower}
Let $k(x,x)\leq 1$ for all $x\in\X$. Assume $E_\delta$. For all $\lambda\geq 1$, we have
\[
\hat d(\lambda) \geq 
d(\lambda) - (2\lambda-1)\cdot \frac{3\sqrt{\log(1/\delta)}}{\sqrt{\min(m,n)}}\,.
\]
\end{lem}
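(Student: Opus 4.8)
The plan is to exploit the inclusion $\C_S\subseteq\C$ together with the triangle inequality, reducing everything to the deviation bounds of Lemma~\ref{prop:empirical-close-to-truth}. Write $v(\lambda)=\lambda\phi(F)+(1-\lambda)\phi(H)$ and $\hat v(\lambda)=\lambda\phi(\hat F)+(1-\lambda)\phi(\hat H)$, so that $d(\lambda)=\inf_{w\in\C}\|v(\lambda)-w\|_\H$ and $\hat d(\lambda)=\inf_{w\in\C_S}\|\hat v(\lambda)-w\|_\H$.

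First I would control $\|v(\lambda)-\hat v(\lambda)\|_\H$. By the triangle inequality it is at most $\lambda\,\|\phi(F)-\phi(\hat F)\|_\H + |1-\lambda|\,\|\phi(H)-\phi(\hat H)\|_\H$; since $\lambda\geq 1$ we have $|1-\lambda|=\lambda-1$, and under $E_\delta$ Lemma~\ref{prop:empirical-close-to-truth} gives $\|\phi(F)-\phi(\hat F)\|_\H\leq 3\sqrt{\log(1/\delta)}/\sqrt{n}$ and $\|\phi(H)-\phi(\hat H)\|_\H\leq 3\sqrt{\log(1/\delta)}/\sqrt{m}$. Bounding both by $3\sqrt{\log(1/\delta)}/\sqrt{\min(m,n)}$ yields $\|v(\lambda)-\hat v(\lambda)\|_\H\leq (2\lambda-1)\cdot 3\sqrt{\log(1/\delta)}/\sqrt{\min(m,n)}$.

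Next, fix any $w\in\C_S$. Since every $\balpha\in\Delta_{n+m}$ is the weight vector of the discrete distribution $\sum_i\alpha_i\delta_{x_i}$, whose embedding is $\sum_i\alpha_i\phi(x_i)$, we have $\C_S\subseteq\C$, so $w$ is an admissible competitor in the infimum defining $d(\lambda)$. Hence $d(\lambda)\leq\|v(\lambda)-w\|_\H\leq\|\hat v(\lambda)-w\|_\H+\|v(\lambda)-\hat v(\lambda)\|_\H$. Taking the infimum over $w\in\C_S$ on the right-hand side gives $d(\lambda)\leq\hat d(\lambda)+(2\lambda-1)\cdot 3\sqrt{\log(1/\delta)}/\sqrt{\min(m,n)}$, which rearranges to the claim.

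There is no serious obstacle; the only points needing care are the sign of $1-\lambda$ (handled by the hypothesis $\lambda\geq 1$) and the observation that $d$ is defined via an infimum over the \emph{larger} set $\C$, so a $\C_S$-competitor can be transferred into the bound for $d$ at no cost. It is precisely this direction of the inclusion that produces the one-sided estimate $\hat d(\lambda)\geq d(\lambda)-(\text{error})$ rather than a two-sided comparison.
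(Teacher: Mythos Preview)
Your argument is correct and is essentially identical to the paper's own proof: both use the triangle inequality on $\lambda\phi(F)+(1-\lambda)\phi(H)$ versus its empirical counterpart, invoke the concentration bounds of Lemma~\ref{prop:empirical-close-to-truth} under $E_\delta$, and then use the inclusion $\C_S\subseteq\C$ to pass from the $\C_S$-infimum to $d(\lambda)$. The only cosmetic difference is that the paper starts from $\hat d(\lambda)$ and applies the reverse triangle inequality before enlarging the feasible set, whereas you start from $d(\lambda)$ and bound it above by any $\C_S$-competitor; the two presentations are equivalent.
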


\section{Mixture Proportion Estimation under a Separability Condition}
\label{sec:separability}
\citet{Blanchard+10, Scott15} observe that without any assumptions on $F,G$ and $H$, the mixture proportion $\kappa^*$ is not identifiable, and postulate an ``irreducibility'' assumption under which $\kappa^*$ becomes identifiable. The irreducibility assumption  essentially states that $G$ cannot be expressed as a non-trivial mixture of $H$ and some other distribution. \citet{Scott15} propose a stronger assumption than irreducibility under which they provide convergence rates of the estimator proposed by \citet{Blanchard+10} to the true mixture proportion $\kappa^*$. We call this condition as the ``anchor set'' condition as it is similar to the ``anchor words'' condition of \citet{Arora+12} when the domain $\X$ is finite. 

\begin{defn}  A family of subsets $\S \subseteq 2^\X$,  and distributions $G, H$ are said to satisfy the \emph{anchor set condition with margin $\gamma>0$}, if  there exists a compact set $ A\in\S$ such that $A\subseteq \supp(H) \setminus \supp(G)$ and $H(A)\geq \gamma$.
\end{defn}

We propose another condition which is similar to the anchor set condition (and is defined for a class of functions on $\X$ rather than subsets of $\X$). Under this condition we show that the $\C$-distance function (and hence the $\C_S$-distance function) reveals the true mixing proportion $\lambda^*$.

\begin{defn} A class of functions $\H\subseteq\R^\X$, and distributions $G, H$ are said to satisfy \emph{separability condition with margin $\alpha>0$ and tolerance $\beta$}, if $\exists h\in\H, \|h\|_\H\leq 1$  and
\begin{equation*}
 \E_{X\sim G} h(X) \leq \inf_{x} h(x) + \beta \leq \E_{X\sim H} h(X) - \alpha\,.
\end{equation*}
We say that a kernel $k$ and distributions $G,H$ satisfy the separability condition, if the unit norm ball in its RKHS and distributions $G,H$ satisfy the separability condition.
\end{defn}

Given a family of subsets satisfying the anchor set condition with margin $\gamma$, it can be easily seen that the family of functions given by the indicator functions of the family of subsets satisfy the separability condition with margin $\alpha=\gamma$ and tolerance $\beta=0$. Hence this represents a natural extension of the anchor set condition to a function space setting. 

Under separability one can show that $\lambda^*$ is the ``departure point from zero'' for the $\C$-distance function.

\begin{thm}
\label{thm:d_lower}
Let the kernel $k$, and distributions $G, H$ satisfy the separability condition with margin $\alpha>0$ and tolerance $\beta$. Then $\forall \mu>0$
\[
d(\lambda^*+\mu) \geq \frac{\alpha \mu}{\lambda^*} - \beta\,.
\]
\end{thm}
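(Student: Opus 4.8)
I would start from the reformulation in Proposition~\ref{prop-d-reformulation}, which gives
\[
d(\lambda^*+\mu) = \inf_{w\in\C} \| \phi(G) + \mu(\phi(F) - \phi(H)) - w \|_\H .
\]
Since the distance to $\C$ equals the supremum of linear functionals that are bounded on $\C$ (a duality / separating-hyperplane step), the idea is to exhibit a single test function $h$ that witnesses the lower bound. The separability condition hands us exactly such an $h$: $\|h\|_\H\le 1$, with $\E_{X\sim G} h(X) \le \inf_x h(x) + \beta$ and $\E_{X\sim H} h(X) \ge \inf_x h(x) + \beta + \alpha$. The key observation is that for \emph{every} distribution $P$, $\langle h, \phi(P)\rangle_\H = \E_{X\sim P} h(X) \ge \inf_x h(x)$, so $h$ (after an affine shift) is a functional that is bounded below on all of $\C$.

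**Key steps, in order.** First, fix $c := \inf_x h(x)$ and note $\langle h, w\rangle_\H \ge c$ for all $w\in\C$, because any $w\in\C$ is $\phi(P)$ for some distribution $P$ and $\langle h,\phi(P)\rangle = \E_P h \ge c$. Second, for any $w\in\C$ estimate
\[
\| \phi(G) + \mu(\phi(F) - \phi(H)) - w \|_\H \;\ge\; \langle h,\, \phi(G) + \mu(\phi(F) - \phi(H)) - w \rangle_\H
\]
using Cauchy--Schwarz and $\|h\|_\H\le 1$. Third, expand the right-hand side via the identity $\langle h,\phi(P)\rangle = \E_P h$: it equals $\E_G h + \mu(\E_F h - \E_H h) - \langle h,w\rangle$. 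Fourth, bound each piece: $\E_G h \le c+\beta$; $\langle h,w\rangle \ge c$; and since $F = (1-\kappa^*)G + \kappa^* H$ with $\kappa^* = 1 - 1/\lambda^*$, we get $\E_F h - \E_H h = (1-\kappa^*)(\E_G h - \E_H h) = \frac{1}{\lambda^*}(\E_G h - \E_H h) \le -\frac{\alpha}{\lambda^*}$ — wait, that has the wrong sign, so instead I use $\E_H h - \E_F h = \frac{1}{\lambda^*}(\E_H h - \E_G h) \ge \frac{\alpha}{\lambda^*}$, i.e. $\E_F h - \E_H h \le -\frac{\alpha}{\lambda^*}$; since we want a lower bound and $\mu>0$ multiplies this, I should instead choose the functional $-h$ in the Cauchy--Schwarz step, or equivalently track signs carefully. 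Putting the pieces together with the sign chosen so the $\mu$-term is positive, one obtains
\[
\| \phi(G) + \mu(\phi(F) - \phi(H)) - w \|_\H \;\ge\; \frac{\alpha\mu}{\lambda^*} - \beta
\]
uniformly in $w\in\C$, and taking the infimum over $w$ gives the claim. Finally I would double-check the edge case $\mu>0$ versus $\mu\ge 0$ and that the bound is allowed to be negative (in which case it is vacuous and nothing needs proving).

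**Main obstacle.** The only real subtlety is getting the signs right: the functional to test against must be oriented so that the $\mu(\phi(F)-\phi(H))$ term contributes $+\frac{\alpha\mu}{\lambda^*}$ rather than cancelling, and one must verify that this same orientation makes $\langle \pm h, \phi(G)\rangle$ bounded \emph{above} by $c+\beta$ while $\langle \pm h, w\rangle$ is bounded \emph{below} by $c$ (so that their difference is controlled by $\beta$); this is exactly why the separability condition is stated with $h$ small on $G$ and near its infimum there. Beyond bookkeeping with these inequalities and the translation $F=(1-\kappa^*)G+\kappa^* H \Leftrightarrow \kappa^* = 1-1/\lambda^*$, the argument is a one-line Cauchy--Schwarz estimate, so I do not anticipate any genuinely hard step.
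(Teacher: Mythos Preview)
Your approach is correct and is essentially the same as the paper's: both start from Proposition~\ref{prop-d-reformulation}, rewrite $\mu(\phi(F)-\phi(H))=\tfrac{\mu}{\lambda^*}(\phi(G)-\phi(H))$, lower-bound the norm by the inner product with the separability witness, and use $\E_P h\ge\inf_x h$ together with the two inequalities in the separability condition. The sign issue you flag is handled in the paper by writing $\|v\|=\sup_{\|h\|\le 1}\langle -v,h\rangle$ (i.e.\ flipping the argument inside the norm before taking the sup), which is exactly your ``use $-h$'' fix; once that is done the computation goes through verbatim.
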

\begin{proof}(Sketch)
For any inner product $\langle . , . \rangle$ and its norm $\|.\|$ over the vector space $\H$, we have that $\| f \| \geq \langle f, g \rangle$ for all $g\in \H$ with $\|g\| \leq 1$. 
The proof mainly follows by lower bounding the norm in the definition of $d(.)$, with an inner product with the witness $g$ of the separability condition. 
\end{proof}
Further, one can link the separability condition and the anchor set condition via universal kernels (like the Gaussian RBF kernel) \cite{Michelli+06}, which are kernels whose RKHS is dense in the space of all continuous functions over a compact domain. 

\begin{thm}
\label{thm:kernel-separability-universality-anchor-set}
Let the kernel $k:\X\times\X\>[0,\infty)$ be universal. Let the distributions $G,H$ be such that they satisfy the anchor set condition with margin $\gamma>0$ for some family of subsets of $\X$. Then, for all $\theta>0$, there exists a $\beta>0$ such that the kernel $k$, and distributions $G,H$ satisfy the separability condition with margin $\beta \theta$ and tolerance $\beta$.
\end{thm}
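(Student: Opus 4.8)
The plan is to build a witness for the separability condition by smoothing the indicator of an anchor set and then pulling the smoothed function into $\H$ using universality. First I would invoke the anchor set condition to fix a compact $A$ with $A\subseteq\supp(H)\setminus\supp(G)$ and $H(A)\geq\gamma$. Since $\X$ is a compact metric space, $\supp(G)$ is closed, and $A$ and $\supp(G)$ are disjoint compact sets, so they are separated by a positive distance; hence (by Urysohn's lemma, or explicitly via $g(x)=\mathrm{dist}(x,\supp(G))/(\mathrm{dist}(x,\supp(G))+\mathrm{dist}(x,A))$, whose denominator never vanishes) there is a continuous $g:\X\to[0,1]$ with $g\equiv 0$ on $\supp(G)$ and $g\equiv 1$ on $A$. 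For this $g$ we immediately get $\E_{X\sim G}g(X)=0$, $\inf_x g(x)=0$ and $\E_{X\sim H}g(X)\geq H(A)\geq\gamma$, so $g$ is a continuous ``almost-witness'': it satisfies the separability inequalities with margin $\gamma$ and tolerance $0$, and its only defect is that it need not lie in $\H$.

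Next I would fix a small $\epsilon>0$ (to be chosen) and use universality --- which gives that $\H$ is $\|\cdot\|_\infty$-dense in the space of continuous functions on $\X$ --- to pick $\tilde h\in\H$ with $\|\tilde h-g\|_\infty\leq\epsilon$. Pushing this bound through expectations and the infimum yields $\E_{X\sim G}\tilde h(X)\leq\epsilon$, $\inf_x\tilde h(x)\geq-\epsilon$, $\inf_x\tilde h(x)\leq\E_{X\sim G}\tilde h(X)\leq\epsilon$, and $\E_{X\sim H}\tilde h(X)\geq\gamma-\epsilon$. Setting $b=2\epsilon$, the first two bounds give $\E_{X\sim G}\tilde h(X)\leq\inf_x\tilde h(x)+b$, while $\inf_x\tilde h(x)+(1+\theta)b\leq(3+2\theta)\epsilon$, which is at most $\gamma-\epsilon\leq\E_{X\sim H}\tilde h(X)$ precisely when $\epsilon\leq\gamma/(2(\theta+2))$. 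Taking $\epsilon=\gamma/(2(\theta+2))$ (so $b=\gamma/(\theta+2)>0$) makes $\tilde h$ satisfy $\E_{X\sim G}\tilde h(X)\leq\inf_x\tilde h(x)+b$ and $\inf_x\tilde h(x)+b\leq\E_{X\sim H}\tilde h(X)-\theta b$.

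Finally I would normalize: since $\gamma\leq H(A)\leq 1$ and $\theta>0$ we have $\epsilon<1$, so $\|\tilde h\|_\infty\geq 1-\epsilon>0$ and hence $\|\tilde h\|_\H>0$; put $h=\tilde h/\|\tilde h\|_\H$ and $\beta=b/\|\tilde h\|_\H>0$. Both displayed inequalities are homogeneous of degree one in the function (the infimum and the two expectations all scale by the positive factor $1/\|\tilde h\|_\H$), so $h$ has unit RKHS norm and satisfies $\E_{X\sim G}h(X)\leq\inf_x h(x)+\beta$ together with $\inf_x h(x)+\beta\leq\E_{X\sim H}h(X)-\theta\beta$ --- that is, the separability condition with margin $\beta\theta$ and tolerance $\beta$. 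Each step is routine; the part needing care is the bookkeeping in the middle step --- keeping the infimum and both expectations within a single $\epsilon$-budget so that one choice of $b$ serves both inequalities, and checking $\tilde h\neq 0$ before normalizing so that $\beta$ is well-defined. A minor point is that the construction of $g$ relies on $\supp(G)$ being closed and on $\X$ being metric, which is what makes the explicit separating function available.
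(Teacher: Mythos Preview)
Your proof is correct and follows essentially the same route as the paper's own proof: construct a continuous separating function that is $0$ on $\supp(G)$ and $1$ on the anchor set $A$, approximate it in sup norm by an RKHS element using universality, verify the resulting $\epsilon$-perturbed inequalities, and normalize. Your choice $\epsilon=\gamma/(2(\theta+2))$ and $\beta=2\epsilon/\|\tilde h\|_\H$ coincides exactly with the paper's $\epsilon=\gamma/(2\theta+4)$ and $\beta=2\gamma/((2\theta+4)\|h_\epsilon\|_\H)$; the only minor differences are that you give an explicit Urysohn-type formula for the separating function and you explicitly justify $\|\tilde h\|_\H>0$ before normalizing, which the paper leaves implicit.
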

\begin{proof} (Sketch)
As the distributions $G, H$ satisfy the anchor set condition, there must exist a continuous non-negative function that is zero on the support of $G$ and greater than one on the set $A$ that witnesses the anchor set condition. Due to universality of the kernel $k$, there must exist an element in its RKHS that arbitrarily approximates this function. The normalised version of this function forms a witness to the separability condition.
\end{proof}
The ultimate objective in mixture proportion estimation is to estimate $\kappa^*$ (or equivalently $\lambda^*$). If one has direct access to $d(.)$ and the kernel $k$ and  distributions $G,H$ satisfy the separability condition with tolerance $\beta=0$, then we have by Proposition \ref{prop:d-is-zero} and Theorem \ref{thm:d_lower} that
\[
\lambda^* = \inf\{\lambda:d(\lambda)>0\}.
\]
We do not have direct access to $d(.)$, but we can calculate $\hat d(.)$. From Lemmas \ref{prop:d-is-zero} and \ref{prop:d_hat_lower}, we have that for all $\lambda\in[0,\lambda^*]$, $\hat d(\lambda)$ converges to $0$ as the sample size $\min(m,n)$ increases. From Lemma \ref{prop:d_hat_lower} we have that for all $\lambda\geq 0$, $\hat d(\lambda) \geq d(\lambda) - \epsilon$ for any $\epsilon>0$ if $\min(m,n)$ is large enough. Hence $\hat d(.)$ is a good surrogate for $d(.)$ and based on this observation we propose two strategies of estimating $\lambda^*$ and show that the errors of both these strategies can be made to approach $0$ under the separability condition. 

The first estimator is called the value thresholding estimator. For some $\tau\in[0,\infty)$ it is defined as,
\[
\hat \lambda^V_\tau = \inf\{ \lambda: \hat d(\lambda)\geq\tau \}\,.
\]

The second estimator is called the gradient thresholding estimator. For some $\nu\in[0,\infty)$ it is defined as
\[
\hat \lambda^G_\nu = \inf\{ \lambda: \exists g \in \partial \hat d(\lambda), g\geq\nu \},
\] 
where $\partial \hat d(\lambda)$ is the sub-differential of $\hat d(.)$ at $\lambda$. As $\hat d(.)$  is a convex function, the slope of $\hat d(.)$ is a non-decreasing function and thus thresholding the gradient is also a viable strategy for estimating $\lambda^*$. 

To illustrate some of the ideas above, we plot $\hat d(.)$ and $\nabla \hat d(.)$ for two different true mixing proportions $\kappa^*$ and sample sizes in Figure \ref{fig:plots-illus}. The data points from the component and mixture distribution used for computing the plot are taken from the \texttt{waveform} dataset.

\section{Convergence of Value and Gradient Thresholding Estimators}
\label{sec:convergence-rates}
\begin{figure*}[t]
\begin{center}
\begin{subfigure}[b]{0.48\textwidth}
\includegraphics[width=\textwidth]{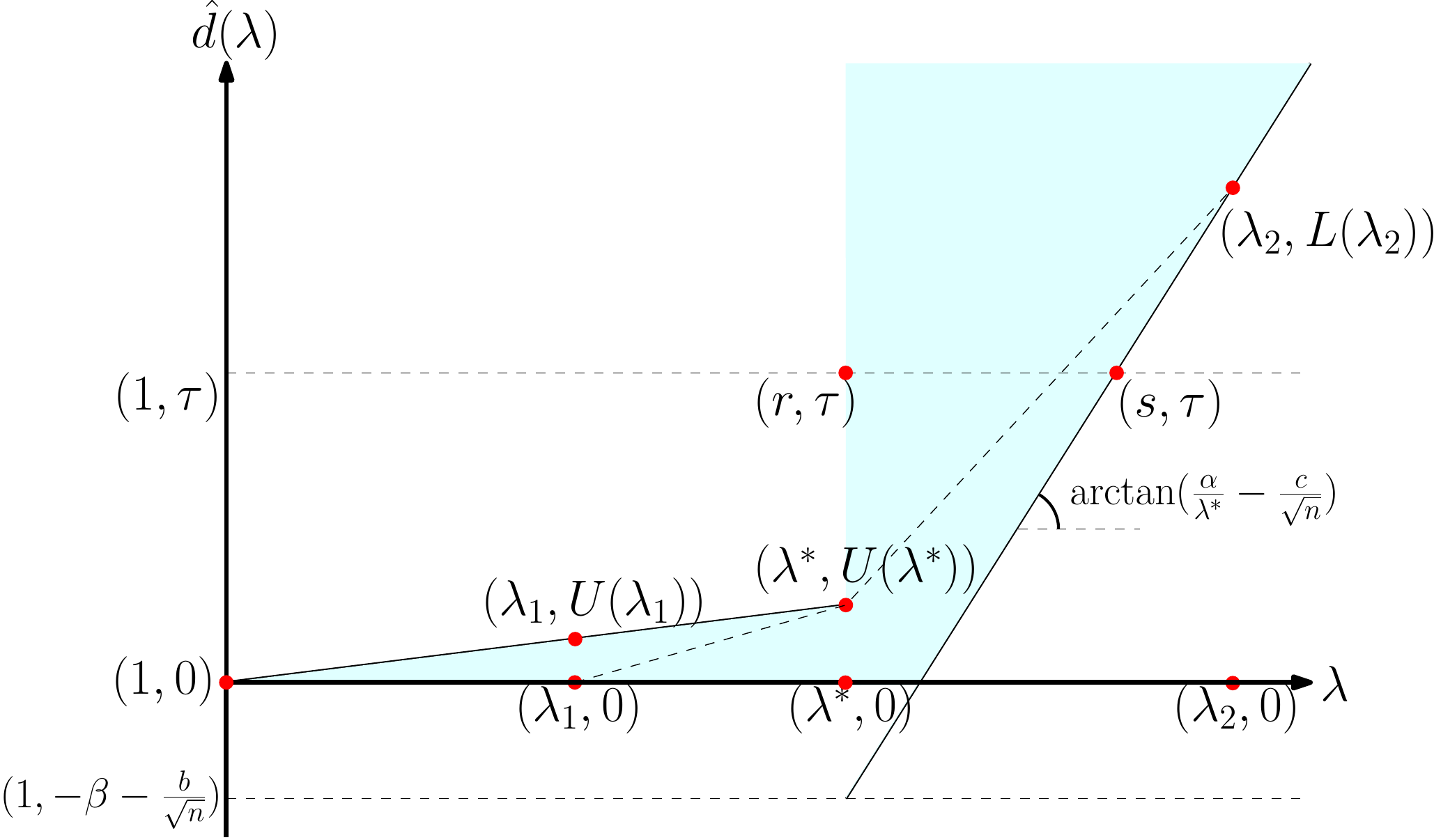}
\caption{The feasible pairs of $(\lambda,\hat d(\lambda))$  is shaded in light cyan.}
\label{subfig:distance-lambda} 
\end{subfigure}
\hspace{1em}
\begin{subfigure}[b]{0.48\textwidth}
\includegraphics[width=\textwidth]{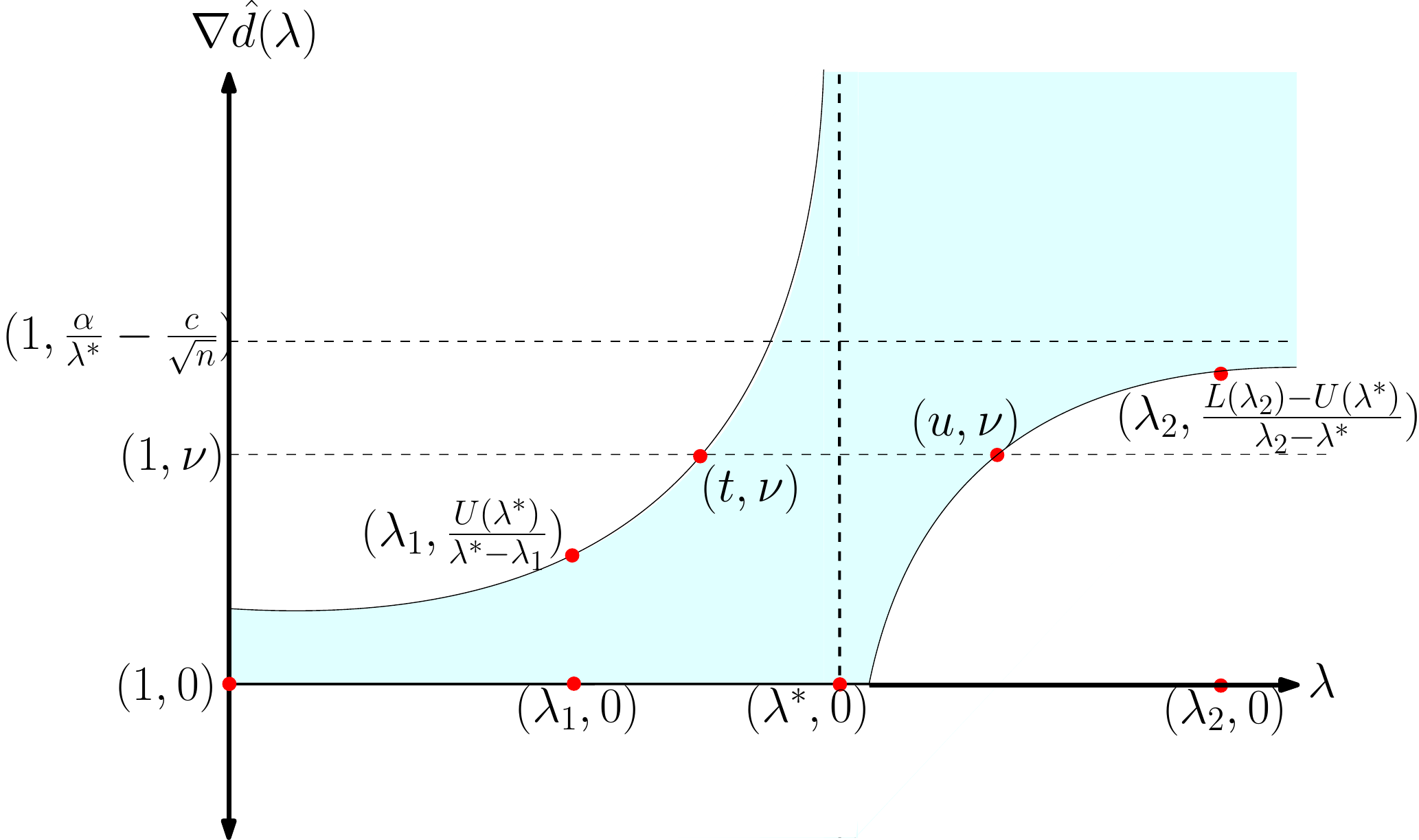}
\caption{The feasible pairs of $(\lambda,\nabla\hat d(\lambda))$ is shaded in light cyan.}
\label{subfig:grad-distance-lambda}
\end{subfigure}
\end{center}
\caption{Illustration of the upper and lower bounds on $\hat d(\lambda)$ and $\nabla\hat d(\lambda)$, under separability conditions (with margin $\alpha$ and tolerance $\beta$) and event $E_\delta$.}
\label{fig:proof-by-picture}
\end{figure*}

We now show that both the value thresholding estimator $\hat\lambda^V_\tau$ and the gradient thresholding estimator $\hat\lambda^G_\nu$ converge to $\lambda^*$ under appropriate conditions.

\begin{thm}
\label{thm:val-thresh-error}
Let $\delta\in(0,\frac{1}{4}]$. Let $k(x,x)\leq 1$ for all $x\in\X$. Let the kernel $k$, and distributions $G, H$ satisfy the separability condition with tolerance $\beta$ and margin $\alpha>0$. Let the number of samples be large enough such that  $\min(m,n)>\frac{(12  \cdot \lambda^*)^2\log(1/\delta)}{\alpha^2}$. Let the threshold $\tau$ be such that $\frac{3\lambda^*\sqrt{\log(1/\delta)}(2-1/\lambda^*+\sqrt{2/\lambda^*})}{\sqrt{\min(m,n)}} \leq \tau \leq \frac{6\lambda^*\sqrt{\log(1/\delta)}(2-1/\lambda^*+\sqrt{2/\lambda^*})}{\sqrt{\min(m,n)}} $. We then have with probability $1-4\delta$
\begin{align*}
   \lambda^* - \hat \lambda^V_\tau 
   &\leq  0 ,\\
     \hat \lambda^V_\tau -   \lambda^* 
   &\leq  \frac{\beta\lambda^*}{\alpha} + c  \cdot \sqrt{\log(1/\delta)} \cdot (\min(m,n))^{-1/2} ,
\end{align*}
where $c=\left( \frac{6\alpha (\lambda^*)^2(2-1/\lambda^*+\sqrt{2/\lambda^*})+  2\lambda^*(3\alpha+6\lambda^*(2+\alpha+\beta))  }{\alpha^2} \right)$.
\end{thm}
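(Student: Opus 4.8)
The plan is to prove the two displayed inequalities separately. Write $\epsilon := \sqrt{\log(1/\delta)}\,(\min(m,n))^{-1/2}$ and $B := 2-\tfrac1{\lambda^*}+\sqrt{\tfrac2{\lambda^*}}$, so that the hypotheses read $\epsilon < \tfrac{\alpha}{12\lambda^*}$ and $3\lambda^* B\,\epsilon \le \tau \le 6\lambda^* B\,\epsilon$; Lemma~\ref{prop:d_hat_upper} becomes $\hat d(\lambda)\le 3B\lambda\,\epsilon$ on $[1,\lambda^*]$, and Lemma~\ref{prop:d_hat_lower} becomes $\hat d(\lambda)\ge d(\lambda)-3(2\lambda-1)\,\epsilon$ on $[1,\infty)$. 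Everything is on the event $E_\delta$ (probability $\ge 1-4\delta$); note the sample-size hypothesis, together with $\alpha\le 2$, forces $n>2(\lambda^*)^2\log(1/\delta)$, so Lemma~\ref{prop:empirical-close-to-truth} applies. Since $\hat d$ is convex (Proposition~\ref{prop:d-inc-conv}), hence continuous, and non-decreasing, the set $\{\lambda:\hat d(\lambda)\ge\tau\}$ is a closed half-line whose left endpoint is $\hat\lambda^V_\tau$.

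\emph{Lower bound $\lambda^*-\hat\lambda^V_\tau\le 0$.} It is enough to show $\hat d(\lambda)<\tau$ for all $\lambda\in[0,\lambda^*)$, for then $\{\lambda:\hat d(\lambda)\ge\tau\}\subseteq[\lambda^*,\infty)$ and the infimum is at least $\lambda^*$. For $\lambda\le 1$, Proposition~\ref{prop:d-is-zero} gives $\hat d(\lambda)=0<\tau$ (here $\tau>0$ since $\epsilon>0$ and $B>0$). For $1\le\lambda<\lambda^*$, Lemma~\ref{prop:d_hat_upper} and the strict inequality $\lambda<\lambda^*$ give $\hat d(\lambda)\le 3B\lambda\,\epsilon<3B\lambda^*\,\epsilon\le\tau$. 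This direction uses neither separability nor the sample-size bound.

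\emph{Upper bound on $\hat\lambda^V_\tau-\lambda^*$.} Put $\mu_0:=\tfrac{\beta\lambda^*}{\alpha}+c\,\epsilon$ and $\lambda_0:=\lambda^*+\mu_0$, so $\lambda_0\ge\lambda^*\ge 1$ and $\mu_0>0$. I claim $\hat d(\lambda_0)\ge\tau$; granting this, $\lambda_0\in\{\lambda:\hat d(\lambda)\ge\tau\}$ forces $\hat\lambda^V_\tau\le\lambda_0$, i.e.\ $\hat\lambda^V_\tau-\lambda^*\le\tfrac{\beta\lambda^*}{\alpha}+c\,\epsilon$, which is the asserted bound. To prove the claim, chain Lemma~\ref{prop:d_hat_lower} and then Theorem~\ref{thm:d_lower}:
\[
\hat d(\lambda_0)\ \ge\ d(\lambda_0)-3(2\lambda_0-1)\,\epsilon\ \ge\ \frac{\alpha\mu_0}{\lambda^*}-\beta-3(2\lambda_0-1)\,\epsilon .
\]
The choice of $\mu_0$ is engineered so that $\tfrac{\alpha\mu_0}{\lambda^*}-\beta=\tfrac{\alpha c\,\epsilon}{\lambda^*}$ exactly; this cancellation is what makes the coefficient on $\tfrac{\beta\lambda^*}{\alpha}$ equal to $1$. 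Expanding $2\lambda_0-1=2\lambda^*-1+\tfrac{2\beta\lambda^*}{\alpha}+2c\,\epsilon$ and using $\epsilon<\tfrac{\alpha}{12\lambda^*}$ to bound the second-order term $6c\,\epsilon^2\le\tfrac{c\alpha}{2\lambda^*}\,\epsilon$, one gets
\[
\hat d(\lambda_0)\ \ge\ \epsilon\left(\frac{\alpha c}{2\lambda^*}-6\lambda^*+3-\frac{6\beta\lambda^*}{\alpha}\right).
\]
Since $\tau\le 6\lambda^* B\,\epsilon$, it remains to verify the algebraic inequality $\tfrac{\alpha c}{2\lambda^*}-6\lambda^*+3-\tfrac{6\beta\lambda^*}{\alpha}\ge 6\lambda^* B$, equivalently $c\ge\tfrac{2\lambda^*}{\alpha}\big(6\lambda^* B+6\lambda^*-3+\tfrac{6\beta\lambda^*}{\alpha}\big)$; substituting $B=2-\tfrac1{\lambda^*}+\sqrt{\tfrac2{\lambda^*}}$ and collecting terms (using $\alpha\le 2$, which holds because $k(x,x)\le 1$ and $\|h\|_\H\le1$) shows this is satisfied by the value of $c$ in the statement. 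Both bounds hold on $E_\delta$, hence with probability $\ge 1-4\delta$.

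\emph{Main obstacle.} The delicate point is the self-referential occurrence of $\lambda_0$: it enters the error term $3(2\lambda_0-1)\epsilon$ of Lemma~\ref{prop:d_hat_lower} and also produces a second-order $c\,\epsilon^2$ contribution, so $c$ must be chosen large enough that, once the sample-size hypothesis absorbs this feedback, the lower bound on $\hat d(\lambda_0)$ still dominates the worst-case threshold $6\lambda^* B\,\epsilon$. Insisting on coefficient exactly $1$ (not $2$) on $\tfrac{\beta\lambda^*}{\alpha}$ is what forces the particular shift $\mu_0$ and the exact $\beta$-cancellation above; working directly at $\hat\lambda^V_\tau$ (where $\hat d(\hat\lambda^V_\tau)=\tau$ by continuity) would only yield coefficient $2$. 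Everything else is routine bookkeeping.
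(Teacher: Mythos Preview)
Your lower-bound argument ($\hat\lambda^V_\tau \ge \lambda^*$) is correct and matches the paper's. The upper-bound argument has a genuine gap: the final algebraic verification fails. Your route requires
\[
c \ \ge\ \frac{2\lambda^*}{\alpha}\Big(6\lambda^* B + 6\lambda^* - 3 + \frac{6\beta\lambda^*}{\alpha}\Big),
\]
but the $c$ in the statement does \emph{not} satisfy this for all admissible parameters. Take $\alpha=2$, $\beta=0$, $\lambda^*=10$: then $B\approx 2.347$, the right-hand side above is $\approx 1978$, while the stated $c$ evaluates to $\approx 1934$. The culprit is the step $6c\,\epsilon^2 \le \tfrac{c\alpha}{2\lambda^*}\,\epsilon$: this halves the available ``budget'' $\tfrac{\alpha c}{\lambda^*}$ and in effect doubles the coefficient on the $\lambda^* B$ term (from $6(\lambda^*)^2/\alpha$ to $12(\lambda^*)^2/\alpha$), which the theorem's $c$ cannot absorb once $\lambda^*$ is moderately large. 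So ``collecting terms'' does not close the proof; your approach would need a strictly larger constant than the one stated.

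Your diagnosis that working directly at $\hat\lambda^V_\tau$ ``would only yield coefficient $2$'' is also mistaken, and this is exactly what the paper does. Using $\hat d(\hat\lambda^V_\tau)=\tau$ (continuity) together with Lemma~\ref{prop:d_hat_lower} and Theorem~\ref{thm:d_lower} gives, when $\hat\lambda^V_\tau\ge\lambda^*$,
\[
\hat\lambda^V_\tau \ \le\ \lambda^*\cdot\frac{\tau+\alpha+\beta+3\epsilon}{\alpha-6\lambda^*\epsilon}.
\]
The trick that rescues coefficient $1$ is to split off the main term \emph{before} bounding the denominator:
\[
\frac{\tau+\alpha+\beta+3\epsilon}{\alpha-6\lambda^*\epsilon}
= \frac{\tau+\alpha+\beta}{\alpha} + \frac{\big(3+\tfrac{6\lambda^*(\tau+\alpha+\beta)}{\alpha}\big)\epsilon}{\alpha-6\lambda^*\epsilon},
\]
and only then use $\alpha-6\lambda^*\epsilon>\alpha/2$ on the residual. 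This yields $\hat\lambda^V_\tau-\lambda^* \le \tfrac{\beta\lambda^*}{\alpha} + \tfrac{\tau\lambda^*}{\alpha} + O(\epsilon)$ with coefficient exactly $1$ on $\tfrac{\beta\lambda^*}{\alpha}$; bounding $\tau\le 6\lambda^* B\,\epsilon$ in the middle term and $\tau$ by a constant inside the $O(\epsilon)$ term produces precisely the stated $c$. The naive version you anticipated---replacing the denominator by $\alpha/2$ immediately---does give coefficient $2$, but the paper's algebraic split avoids it.
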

\begin{proof}(Sketch)
Under event $E_\delta$, Lemma \ref{prop:d_hat_upper} gives an upper bound on $\hat d(\lambda)$ for $\lambda\in[1,\lambda^*]$, which is denoted by the line $(\lambda,U(\lambda))$ in Figure \ref{subfig:distance-lambda}. Under event $E_\delta$ and the separability condition, Lemma \ref{prop:d_hat_lower} and Theorem \ref{thm:d_lower} give a lower bound on $\hat d(\lambda)$ for $\lambda\geq\lambda^*$ and is denoted by the line $(\lambda,L(\lambda))$ in Figure \ref{subfig:distance-lambda}. These two bounds immediately give upper and lower bounds on the value thresholding estimator $\hat\lambda^V_\tau$ for any $\tau\in[0,\infty)$. An illustration is provided in Figure \ref{subfig:distance-lambda} by the horizontal line through $(1,\tau)$. The points of intersection of this line with the feasible values of $(\lambda,\hat d(\lambda))$ as  in Figure \ref{subfig:distance-lambda}, given by $r$ and $s$ in the figure form lower and upper bounds respectively for $\hat\lambda^V_\tau$.

\end{proof}

\begin{thm}
\label{thm:grad-thresh-error}
Let $k(x,x)\leq 1$ for all $x\in\X$. Let the kernel $k$, and distributions $G, H$ satisfy the separability condition with tolerance $\beta$ and margin $\alpha>0$. Let $\nu\in[\frac{\alpha}{4\lambda^*},  \frac{3 \alpha}{4 \lambda^*}]$ and $\sqrt{\min(m,n)}\geq\frac{36 \sqrt{\log(1/\delta)}}{\frac{\alpha}{\lambda^*}-\nu}$. We then have with probability $1-4\delta$
\begin{align*}
   \lambda^* - \hat \lambda^G_\nu 
   &\leq  c  \cdot \sqrt{\log(1/\delta)} \cdot (\min(m,n))^{-1/2} ,\\
     \hat \lambda^G_\nu -  \lambda^* 
   &\leq  \frac{4\beta\lambda^*}{\alpha} + c'  \cdot \sqrt{\log(1/\delta)} \cdot (\min(m,n))^{-1/2} ,
\end{align*}
for constants $c=(2\lambda^*-1+\sqrt{2\lambda^*}) \cdot \frac{12\lambda^*}{\alpha}$ and $c'=\frac{144 (\lambda^*)^2(\alpha+4\beta)}{\alpha^2}$.
\end{thm}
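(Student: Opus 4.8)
The plan is to mirror the structure of the proof of Theorem~\ref{thm:val-thresh-error}, but working with the slope (subdifferential) of $\hat d$ rather than its value. As in Figure~\ref{subfig:grad-distance-lambda}, I want to sandwich the graph of $\nabla\hat d$ between an upper bound valid on $[1,\lambda^*]$ and a lower bound valid on $[\lambda^*,\infty)$, and then read off where the slope first crosses the threshold $\nu$. The key quantitative inputs are: (i) $\hat d\equiv 0$ on $[0,1]$ (Proposition~\ref{prop:d-is-zero}), so the left part of the graph has zero slope; (ii) the upper bound on $\hat d(\lambda)$ for $\lambda\in[1,\lambda^*]$ from Lemma~\ref{prop:d_hat_upper}; (iii) the lower bound $\hat d(\lambda)\geq d(\lambda)-(2\lambda-1)\cdot 3\sqrt{\log(1/\delta)}/\sqrt{\min(m,n)}$ from Lemma~\ref{prop:d_hat_lower}; and (iv) the separability lower bound $d(\lambda^*+\mu)\geq \alpha\mu/\lambda^* - \beta$ from Theorem~\ref{thm:d_lower}, together with the elementary upper bound $d(\lambda^*+\mu)\leq \mu\|\phi(F)-\phi(H)\|\leq 2\mu$ from Proposition~\ref{prop:basic-bounds-d} and $k\le 1$.

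For the \emph{upper} bound on $\hat\lambda^G_\nu$ (i.e.\ lower bound on $\lambda^*-\hat\lambda^G_\nu$ being impossible to be too negative, equivalently $\hat\lambda^G_\nu$ not much bigger than $\lambda^*$): since $\hat d$ is convex, if at some $\lambda>\lambda^*$ the value $\hat d(\lambda)$ has grown large enough then the slope at that point (or just before) must exceed $\nu$; more precisely, for convex $\hat d$, a secant slope $(\hat d(\lambda_2)-\hat d(\lambda_1))/(\lambda_2-\lambda_1)$ lower-bounds some element of $\partial\hat d(\lambda_2)$. I would take $\lambda_1$ slightly above $\lambda^*$ and $\lambda_2$ further out, use (iii)+(iv) to lower bound $\hat d(\lambda_2)$ and (ii)-type/Lemma~\ref{prop:d_hat_upper} reasoning or the crude bound to control $\hat d(\lambda_1)$, and solve for the $\lambda_2$ at which the secant slope first reaches $\nu$. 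The condition $\sqrt{\min(m,n)}\ge 36\sqrt{\log(1/\delta)}/(\tfrac{\alpha}{\lambda^*}-\nu)$ is exactly what makes the ``noise'' term $3(2\lambda-1)\sqrt{\log(1/\delta)}/\sqrt{\min(m,n)}$ negligible compared with the separability slope $\alpha/\lambda^*$ near $\lambda=\lambda^*$, so that the effective slope strictly exceeds $\nu$ at a controlled distance $O(\beta\lambda^*/\alpha) + O(\sqrt{\log(1/\delta)/\min(m,n)})$ past $\lambda^*$; the constants $c'=144(\lambda^*)^2(\alpha+4\beta)/\alpha^2$ and the leading term $4\beta\lambda^*/\alpha$ should fall out of this optimization.

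For the \emph{lower} bound on $\hat\lambda^G_\nu$ (showing $\hat\lambda^G_\nu$ cannot be much smaller than $\lambda^*$): I need to show that for $\lambda$ a little below $\lambda^*$, every subgradient of $\hat d$ at $\lambda$ is still $<\nu$. Here I use convexity in the other direction: a subgradient at $\lambda<\lambda^*$ is upper-bounded by the forward secant slope $(\hat d(\lambda')-\hat d(\lambda))/(\lambda'-\lambda)$ for $\lambda'\in(\lambda,\lambda^*]$, and on the whole interval $[1,\lambda^*]$ Lemma~\ref{prop:d_hat_upper} bounds $\hat d$ by a line of slope $O(\lambda^*\sqrt{\log(1/\delta)/\min(m,n)})$ through the origin-ish region, so the slope everywhere on $[1,\lambda^*]$ is at most that small quantity, which is $<\nu$ once $\min(m,n)$ is large. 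Combined with $\hat d\equiv 0$ (hence slope $0<\nu$) on $[0,1]$, this gives $\hat\lambda^G_\nu\ge \lambda^* - O(\sqrt{\log(1/\delta)/\min(m,n)})$, with $c=(2\lambda^*-1+\sqrt{2\lambda^*})\cdot 12\lambda^*/\alpha$ coming from how far below $\lambda^*$ the Lemma~\ref{prop:d_hat_upper} slope bound can be pushed before it conflicts with the separability lower bound.

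The main obstacle I anticipate is the bookkeeping around subdifferentials of $\hat d$: unlike the value-thresholding argument, where one just intersects a horizontal line with a region, here one must convert the value bounds (Lemmas~\ref{prop:d_hat_upper},~\ref{prop:d_hat_lower}, Theorem~\ref{thm:d_lower}) into slope bounds via secant-slope sandwiching, and be careful that the ``gradient first exceeds $\nu$'' event is governed by \emph{one-sided} derivatives at the threshold point. Getting the monotone-slope comparisons to line up on the precise intervals $[1,\lambda^*]$ and $[\lambda^*,\infty)$ — and checking that the stated sample-size condition makes the relevant secant slopes strictly separated from $\nu$ on both sides — is the delicate part; the rest is substituting the bounds and collecting the constants $c$ and $c'$.
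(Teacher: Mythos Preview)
Your overall strategy matches the paper's: convert the value bounds on $\hat d$ (Lemmas~\ref{prop:d_hat_upper}, \ref{prop:d_hat_lower}, Theorem~\ref{thm:d_lower}) into bounds on subgradients via secant slopes and convexity, then invert at level $\nu$. The paper does exactly this through two intermediate lemmas: for $\lambda\in[1,\lambda^*]$ it bounds $\sup\partial\hat d(\lambda)\le \hat d(\lambda^*)/(\lambda^*-\lambda)$ and then applies Lemma~\ref{prop:d_hat_upper} at $\lambda^*$; for $\lambda>\lambda^*$ it bounds $\inf\partial\hat d(\lambda)\ge (\hat d(\lambda)-\hat d(\lambda^*))/(\lambda-\lambda^*)$ and applies Lemma~\ref{prop:d_hat_lower}$+$Theorem~\ref{thm:d_lower} to the numerator's first term and Lemma~\ref{prop:d_hat_upper} to the second. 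In the upper-bound half, the anchor point is $\lambda_1=\lambda^*$ exactly (not ``slightly above''), since Lemma~\ref{prop:d_hat_upper} is valid at $\lambda^*$ and gives the needed control on $\hat d(\lambda^*)$. Proposition~\ref{prop:basic-bounds-d} is not used.

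There is, however, a genuine gap in your lower-bound argument. You assert that because Lemma~\ref{prop:d_hat_upper} bounds $\hat d$ on $[1,\lambda^*]$ by a line of small slope, ``the slope everywhere on $[1,\lambda^*]$ is at most that small quantity.'' This inference is false for convex functions: knowing $\hat d(1)=0$ and $\hat d(\lambda)\le C\lambda$ on $[1,\lambda^*]$ does \emph{not} bound the subgradient of $\hat d$ by $C$; $\hat d$ could be flat on most of the interval and then rise steeply just before $\lambda^*$. If your claim were correct you would conclude $\hat\lambda^G_\nu\ge\lambda^*$ outright, which is stronger than what the theorem asserts and not what follows. The correct step (and what the paper does) is to take the forward secant specifically to $\lambda'=\lambda^*$, giving $\sup\partial\hat d(\lambda)\le \hat d(\lambda^*)/(\lambda^*-\lambda)$; this bound diverges as $\lambda\uparrow\lambda^*$, and setting it equal to $\nu$ yields $\hat\lambda^G_\nu\ge\lambda^*-\hat d(\lambda^*)/\nu$. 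Plugging in Lemma~\ref{prop:d_hat_upper} at $\lambda^*$ and $\nu\ge\alpha/(4\lambda^*)$ then produces $c=(2\lambda^*-1+\sqrt{2\lambda^*})\cdot 12\lambda^*/\alpha$. Note also that the separability lower bound (Theorem~\ref{thm:d_lower}) plays no role in this half of the argument, contrary to your closing remark about $c$.
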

\begin{proof}(Sketch)
The upper and lower bounds on $\hat d(\lambda)$ given by Lemmas \ref{prop:d_hat_lower}, \ref{prop:d_hat_upper} and Theorem \ref{thm:d_lower} also immediately translate into upper and lower bounds for $\nabla \hat d(\lambda)$ (assume differentiability of $\hat d(.)$ for convenience) due to convexity of $\hat d(.)$. As shown in Figure \ref{subfig:distance-lambda}, the gradient of $\hat d(.)$ at some $\lambda_1<\lambda^*$ is upper bounded by the slope of the line joining $(\lambda_1,0)$ and $(\lambda^*,U(\lambda^*))$. Similarly, the gradient of $\hat d(.)$ at some $\lambda_2>\lambda^*$ is lower bounded by the slope of the line joining $(\lambda^*,U(\lambda^*))$ and $(\lambda_2,L(\lambda_2))$.
Along with trivial bounds on $\nabla \hat d(\lambda)$, these bounds give the set of feasible values for the ordered pair $(\lambda, \nabla \hat d(\lambda))$, as illustrated in Figure \ref{subfig:grad-distance-lambda}. This immediately gives bounds on $\hat\lambda^G_\nu$ for any $\nu\in[0,\infty)$. An illustration is provided in Figure \ref{subfig:grad-distance-lambda} by the horizontal line through $(1,\nu)$. The points of intersection of this line with the feasible values of $(\lambda,\nabla\hat d(\lambda))$ as in  Figure \ref{subfig:grad-distance-lambda}, given by $t$ and $u$ in the figure form lower and upper bounds respectively for $\hat\lambda^G_\nu$.	
\end{proof}
\textbf{Remark:} Both the value and gradient thresholding estimates converge to $\lambda^*$ with rates $O(m^{-\frac{1}{2}})$, if the kernel satisfies the separability condition with a tolerance $\beta=0$. In the event of the kernel only satisfying the separability condition with tolerance $\beta>0$, the estimates converge to within an additive factor of $\frac{\beta\lambda^*}{\alpha}$. As shown in Theorem \ref{thm:kernel-separability-universality-anchor-set}, with a universal kernel the ratio $\frac{\beta}{\alpha}$ can be made arbitrarily low, and hence both the estimates actually converge to $\lambda^*$, but a specific rate is not possible, due to the dependence of the constants on $\alpha$ and $\beta$, without further assumptions on $G$ and $H$.

\section{The Gradient Thresholding Algorithm}
\label{sec:grad-thres-algo}
As can be seen in Theorems \ref{thm:val-thresh-error} and \ref{thm:grad-thresh-error}, the value and gradient thresholding estimators both converge to $\lambda^*$ at a rate of $O(m^{-\frac{1}{2}})$, in the scenario where we know the optimal threshold. In practice, one needs to set the threshold heuristically, and we observe that the  estimate $\hat\lambda^V_\tau$ is much more sensitive to the threshold $\tau$, than the gradient thresholding estimate $\hat\lambda^G_\nu$ is to the threshold $\nu$. This agrees with our intuition of the asymptotic behavior of $\hat d(\lambda)$ and $\nabla\hat d(\lambda)$ -- the curve of $\hat d(\lambda)$ vs $\lambda$ is close to a hinge, whereas the curve of $\nabla\hat d(\lambda)$ vs $\lambda$ is close to a step function. This can also be seen in Figure \ref{subfig:3200_75}. Hence, our estimator of choice is the gradient thresholding estimator and we give an algorithm for implementing it in this section. 

Due to the convexity of $\hat d(.)$, the slope $\nabla \hat d(.)$ is an increasing function, and thus the gradient thresholding estimator $\hat\lambda^G_\nu$ can be computed efficiently via binary search. The details of the computation are  given in Algorithm \ref{alg:GT_KM}.

Algorithm \ref{alg:GT_KM} maintains upper and lower bounds ($\lambda_\lft$ and $\lambda_\rght$) on the gradient thresholding estimator,\footnote{We assume an initial upper bound of 10 for convenience, as we don't gain much by searching over higher values. $\hat\lambda^G_\nu=10$ corresponds to a mixture proportion estimate of $\hat\kappa=0.9$.} estimates the slope at the current point $\lambda_\curr$ and adjusts the upper and lower bounds based on the computed slope. The slope at the current point $\lambda_\curr$ is estimated numerically by computing the value of $\hat d(.)$ at $\lambda_\curr \pm \frac{\epsilon}{4}$ (lines 9 to 15). We compute the value of $\hat d(\lambda)$ for some given $\lambda$ using the general purpose convex programming solver CVXOPT.
\footnote{The accuracy parameter $\epsilon$ must be set large enough so that the optimization error in computing $\hat d(\lambda_\curr \pm \frac{\epsilon}{4})$ is small when compared to $\hat d(\lambda_\curr + \frac{\epsilon}{4}) - \hat d(\lambda_\curr - \frac{\epsilon}{4})$.}

We employ the following simple strategy for model selection (choosing the kernel $k$ and threshold $\nu$). Given a set of kernels, we choose the kernel for which the empirical RKHS distance between the distributions $F$ and $H$, given by $\|\phi(\hat F) - \phi(\hat H) \|_\H$ is maximized. This corresponds to choosing a kernel for which the ``roof'' of the step-like function $\nabla\hat d(.)$ is highest. We follow two different strategies for setting the gradient threshold $\nu$. One strategy is motivated by Lemma \ref{prop:d_hat_upper}, where we can see that the slope of $\hat d(\lambda)$ for $\lambda\in[1,\lambda^*]$ is $O(1/\sqrt{\min(m,n)})$ and based on this we set $\nu=1/\sqrt{\min(m,n)}$. The other strategy is based on empirical observation, and is set as a convex combination of the initial slope of $\hat d$ at $\lambda=1$ and the final slope at $\lambda=\infty$ which is equal to the RKHS distance  between the distributions $F$ and $H$, given by $\|\phi(\hat F) - \phi(\hat H) \|_\H$. We call the resulting two algorithms as ``KM1'' and ``KM2'' respectively in our experiments.\footnote{In  KM2,  $\nu=0.8* \text{init\_slope} + 0.2*\text{final\_slope}$}

\begin{algorithm}[t]
 \caption{Kernel mean based gradient thresholder} 
 \label{alg:GT_KM}
 \begin{algorithmic}[1]
   \STATE {\bfseries Input: }$\x_1,\x_2,\ldots,\x_n$ drawn from mixture $F$ and $\x_{n+1},\ldots,\x_{n+m}$ drawn from component $H$
   \STATE {\bfseries Parameters:} $k:\X\times\X\>[0,\infty)$, $\nu\in[0,\infty)$
   \STATE {\bfseries Output: } $\hat \lambda^G_\nu$
   \STATE {\bfseries Constants: } $\epsilon=0.04, \lambda_\UB=10$
   \STATE $\lambda_\lft=1, \lambda_\rght=\lambda_\UB$
   \STATE $K_{i,j} = k(\x_i,\x_j)$ for $1\leq i,j \leq n+m$
   \STATE \textbf{while } $\lambda_\rght - \lambda_\lft \geq \epsilon$
   \STATE ~~~~~~~~$\lambda_\curr=\frac{\lambda_\rght+\lambda_\lft}{2}$
   \STATE ~~~~~~~~$\lambda_1=\lambda_\curr - \epsilon/4$
   \STATE ~~~~~~~~$\u_1 = \frac{\lambda_1}{n} ([\1_n^\top, \0_m^\top]) + \frac{1-\lambda_1}{m} ([\0_n^\top, \1_m^\top])$
   \STATE ~~~~~~~~$d_1=\hat d(\lambda_1)^2= \displaystyle\min_{\v\in\Delta_{n+m}} (\u_1 -\v)^\top K (\u_1 - \v)$
   \STATE ~~~~~~~~$\lambda_2=\lambda_\curr + \epsilon/4$
   \STATE ~~~~~~~~$\u_2 = \frac{\lambda_2}{n} ([\1_n^\top, \0_m^\top]) + \frac{1-\lambda_2}{m} ([\0_n^\top, \1_m^\top])$
   \STATE ~~~~~~~~$d_2=\hat d(\lambda_2)^2=\displaystyle\min_{\v\in\Delta_{n+m}} (\u_2 -\v)^\top K (\u_2 - \v)$
   \STATE ~~~~~~~~$s=\frac{\sqrt{d_2} - \sqrt{d_1}}{\lambda_2 - \lambda_1}$
   \STATE ~~~~~~~~\textbf{if } $s>\nu$:
   \STATE ~~~~~~~~~~~~~~~~$\lambda_\rght=\lambda_\curr$
   \STATE ~~~~~~~\textbf{else}:
   \STATE ~~~~~~~~~~~~~~~~$\lambda_\lft=\lambda_\curr$
   \STATE \textbf{return }$\lambda_\curr$
 \end{algorithmic}
\end{algorithm}

\section{Other Methods for Mixture Proportion Estimation}
\label{sec:other-algos}
\citet{Blanchard+10} propose an estimator based on the following equality, which holds under an irreducibility condition (which is a strictly weaker requirement than the anchor set condition),
$
 \kappa^*=\inf_{\S \in \Theta, H(\S)>0} \frac{F(\S)}{H(\S)} ,
$
where $\Theta$ is the set of measurable sets in $\X$. The estimator proposed replaces the exact terms $F(\S)$ and $H(\S)$ in the above ratio with the empirical quantities $\hat F(\S)$ and $\hat H(\S)$ and includes VC-inequality  based correction terms in the numerator and denominator and restricts $\Theta$ to a sequence of VC classes. \citet{Blanchard+10} show that the proposed estimator converges to the true proportion under the irreducibility condition and also show that the convergence can be arbitrarily slow. Note that the requirement of taking infimum over VC classes makes a direct implementation of this estimator computationally infeasible.

\citet{Scott15} show that the estimator of \citet{Blanchard+10} converges to the true proportion at the rate of $1/\sqrt{\min(m,n)}$ under the anchor set condition, and also make the observation that the infimum over the sequence of VC classes can be replaced by an infimum over just the collection of base sets (e.g. the set of all open balls). Computationally, this observation reduces the complexity of a direct implementation of the estimator to $O(N^d)$ where $N=m+n$ is the number of data points, and $d$ is the data dimension. But the estimator still remains intractable for even datasets with moderately large number of features. 

\citet{SandersonSc14, Scott15} propose algorithms based on the estimator of \citet{Blanchard+10}, which treats samples from $F$ and samples from $H$ as positive and negative classes, builds a conditional probability estimator and computes the estimate of $\kappa^*$ from the constructed ROC (receiver operating characteristic) curve. These algorithms return the correct answer when the conditional probability function learned is exact, but the effect of error in this step is not clearly understood. This method is referred to as ``ROC'' in our experimental section.

\citet{ElkanNoto08} propose another method for estimating $\kappa^*$ by constructing a conditional probability estimator which treats samples from $F$ and samples from $H$ as positive and negative classes. Even in the limit of infinite data, it is known that this estimator gives the right answer only if the supports of $G$ and $H$ are completely distinct. This method is referred to as ``EN'' in our experiments.

\citet{duPlessisSugi14} propose a method for estimating $\kappa^*$ based on Pearson divergence minimization. It can be seen as similar in spirit to the method of \citet{ElkanNoto08}, and thus has the same shortcoming of being exact only when the supports of $G$ and $H$ are disjoint, even in the limit of infinite data. The main difference between the two is that this method does not require the estimation of a conditional probability model as an intermediate object, and computes the mixture proportion directly. 

Recently, \citet{Jain+16} have proposed another method for the estimation of mixture proportion which is based on maximizing the ``likelihood'' of the mixture proportion. The algorithm suggested by them  computes a likelihood associated with each possible value of $\kappa^*$, and returns the smallest value for which the likelihood drops significantly. In a sense, it is similar to our gradient thresholding algorithm, which also computes a distance associated to each possible value of $\lambda^*$, and returns the smallest value for which the distance increases faster than a threshold. Their algorithm also requires a conditional probability model distinguishing $F$ and $H$ to be learned. It also has no guarantees of convergence to the true estimate $\kappa^*$. This method is referred to as ``alphamax'' in our experiments.

\citet{Menon+15, LiuTao15} and \citet{Scott+13b} propose to estimate the mixture proportion $\kappa^*$, based on the observation that, if the distributions $F$ and $H$ satisfy the anchor set condition, then $\kappa^*$ can be directly related to the maximum value of the conditional probability given by $\max_x \eta(x)$, where $\eta$ is the conditional probability function in the binary classification problem treating samples from $F$ as positive and samples from $H$  negative. Thus one can get an estimate of $\kappa^*$ from an estimate of the conditional probability $\hat \eta$ through $\max_x \hat\eta(x)$. This method clearly requires estimating a conditional probability model, and is also less robust to errors in estimating the conditional probability due to the form of the estimator.

\section{Experiments}
\label{sec:expts}
We ran our algorithm with 6 standard binary classification datasets\footnote{\texttt{shuttle, pageblocks, digits} are originally multiclass datasets, they are used as binary datasets by either grouping or ignoring classes.} taken from the UCI machine learning repository, the details of which are given below in Table \ref{tab:datasets}.\footnote{In our experiments, we project the data points from the \texttt{digits} and \texttt{mushroom} datasets onto a 50-dimensional space given by PCA.}

\begin{table}
\caption{Dataset statistics}
\label{tab:datasets}
\begin{center}
 \begin{tabular}{|c|c|c|c|}
  \hline 
  Dataset & \# of samples & Pos. frac. & Dim. \\ \hline
  \texttt{waveform} & 	3343	& 	0.492	&	21	\\ \hline
  \texttt{mushroom} & 	8124	& 	0.517	&	117	\\ \hline
  \texttt{pageblocks} & 5473	& 	0.897	&	10	\\ \hline
  \texttt{shuttle} & 	58000	& 	0.785	&	9	\\ \hline
  \texttt{spambase} & 	4601	& 	0.394	&	57	\\ \hline
  \texttt{digits} & 	13966	& 	0.511	&	784	\\ \hline  
 \end{tabular}
\end{center} 
\end{table}

From each binary dataset containing positive and negative labelled data points, we derived 6 different pairs of mixture and component distributions ($F$ and $H$ respectively) as follows. We chose a fraction of the positive data points to be part of the component distribution, the positive data points not chosen and the negative data points constitute the mixture distribution. The fraction of positive data points chosen to belong to the component distribution was one of $\{0.25,0.5,0.75\}$ giving 3 different pairs of distributions. The positive and negative labels were flipped and the above procedure was repeated to get 3 more pairs of distributions. From each such distribution we drew a total of either 400,800,1600 or 3200 samples and ran the two variants of our kernel mean based gradient thresholding algorithm given by ``KM1'' and ``KM2''. Our candidate kernels were five Gaussian RBF kernels, with the kernel width taking values uniformly in the log space between a tenth of the median pairwise distance and ten times the median distance, and among these kernels the kernel for which $\|\phi(\hat F)-\phi(\hat H)\|$ is highest is chosen.  We also ran the  ``alphamax'', ``EN'' and ``ROC'' algorithms for comparison.\footnote{The code for our algorithms KM1 and KM2 are at \url{http://web.eecs.umich.edu/\~cscott/code.html\#kmpe}. The code for ROC was taken from \url{http://web.eecs.umich.edu/\~cscott/code/mpe.zip}. The codes for the alphamax and EN algorithms were the same as in \citet{Jain+16}, and acquired through personal communication.} The above was repeated 5 times with different random seeds, and the average error $|\hat\kappa - \kappa^*|$ was computed. The results are plotted in Figure \ref{fig:plot_data} and the actual error values used in the plots is given in the supplementary material Section \ref{sec:expts-table}. Note that points in all plots are an average of 30 error terms arising from the 6 distributions for each dataset, and 5 different sets of samples for each distribution arising due to different random seeds. 

\begin{figure*}[tp]
\begin{center}
\begin{subfigure}[b]{0.4\textwidth} 
\includegraphics[width=\textwidth]{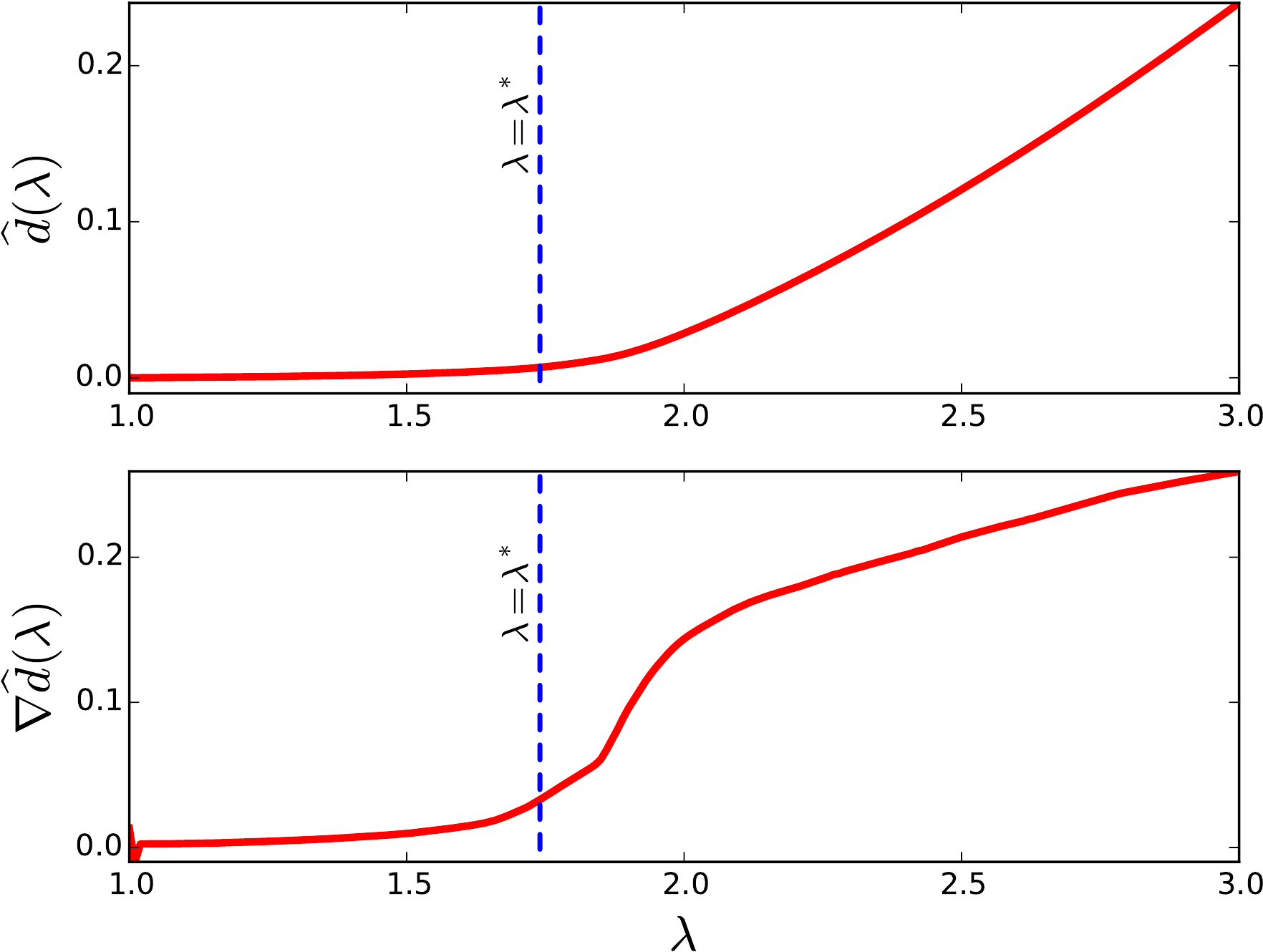}
\caption{$n+m=3200, \kappa^*=0.43, \lambda^*=1.75$}
\label{subfig:3200_25}
\end{subfigure} 
\hspace{1em}
\begin{subfigure}[b]{0.4\textwidth} 
\includegraphics[width=\textwidth]{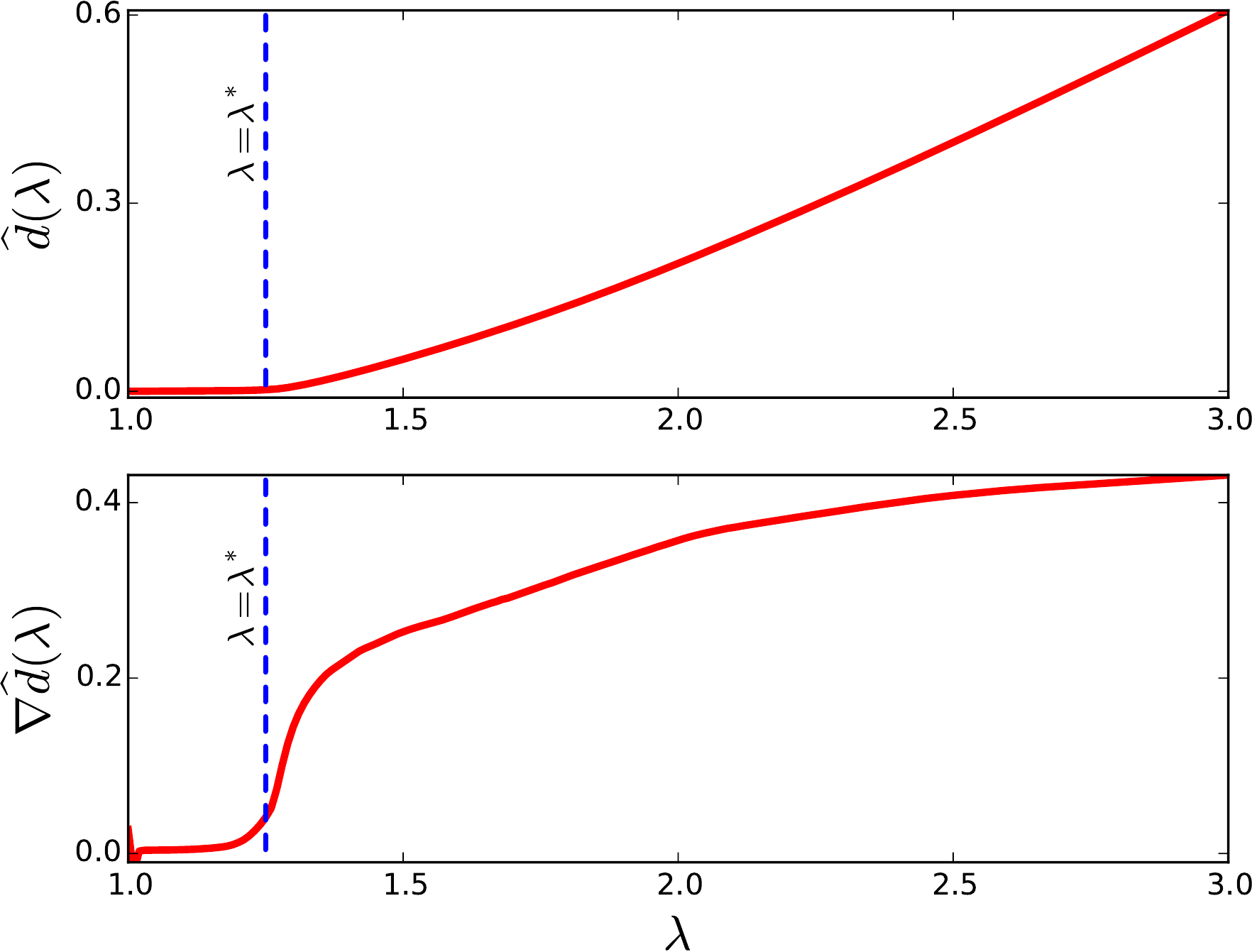}
\caption{$n+m=3200, \kappa^*=0.2, \lambda^*=1.25$}
\label{subfig:3200_75}
\end{subfigure}

\begin{subfigure}[b]{0.4\textwidth} 
\includegraphics[width=\textwidth]{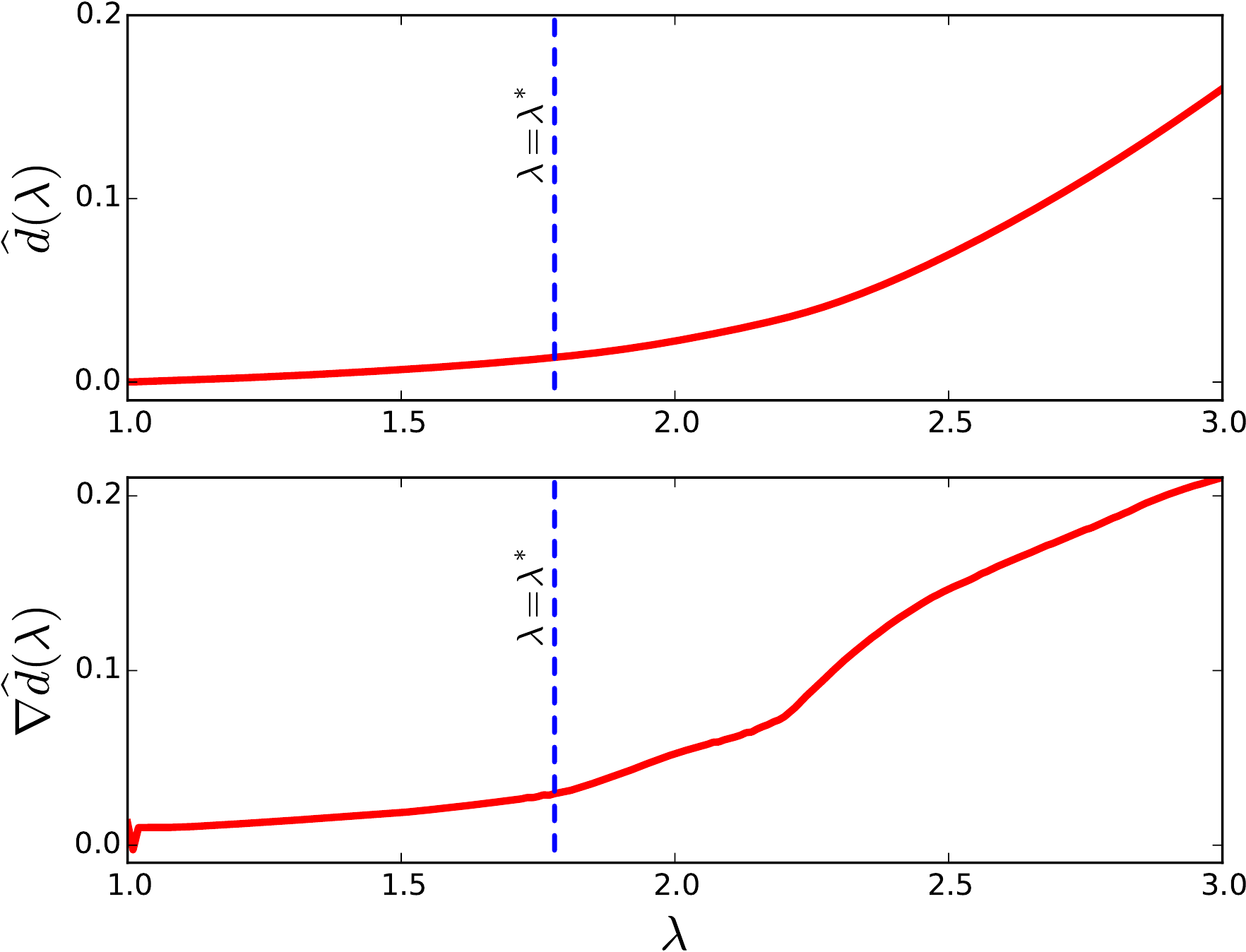}
\caption{$n+m=800, \kappa^*=0.43, \lambda^*=1.75$ }
\label{subfig:800_25}
\end{subfigure} 
\hspace{1em}
\begin{subfigure}[b]{0.4\textwidth} 
\includegraphics[width=\textwidth]{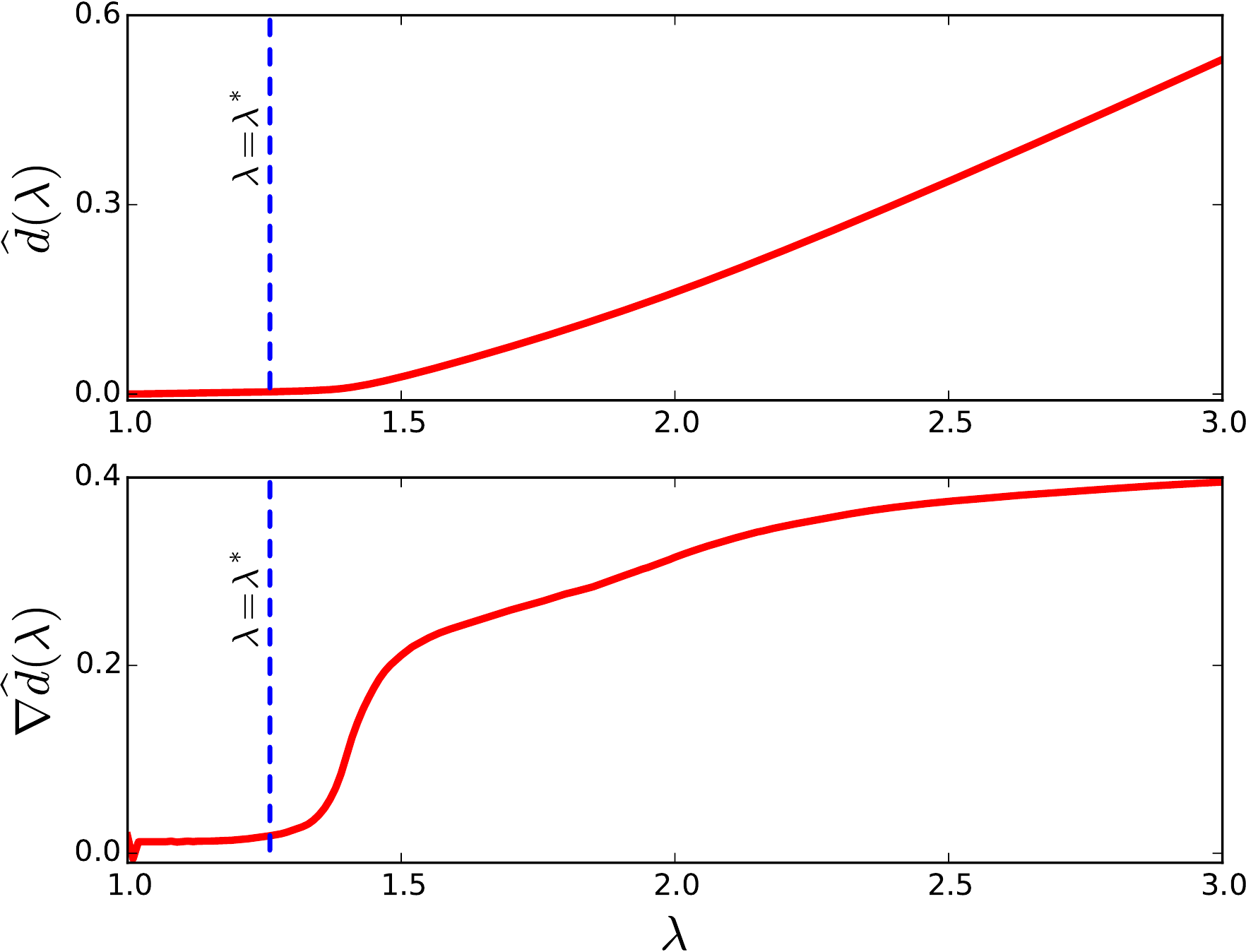}
\caption{$n+m=800, \kappa^*=0.2, \lambda^*=1.25$}
\label{subfig:800_75}
\end{subfigure}
\end{center}
\caption{$\hat d(.)$ and $\nabla\hat d(.)$ are plotted for two different sample sizes and true positive proportions.}
\label{fig:plots-illus} 
\end{figure*}
\begin{figure*}
 \begin{center}
  \includegraphics[width=0.85\textwidth]{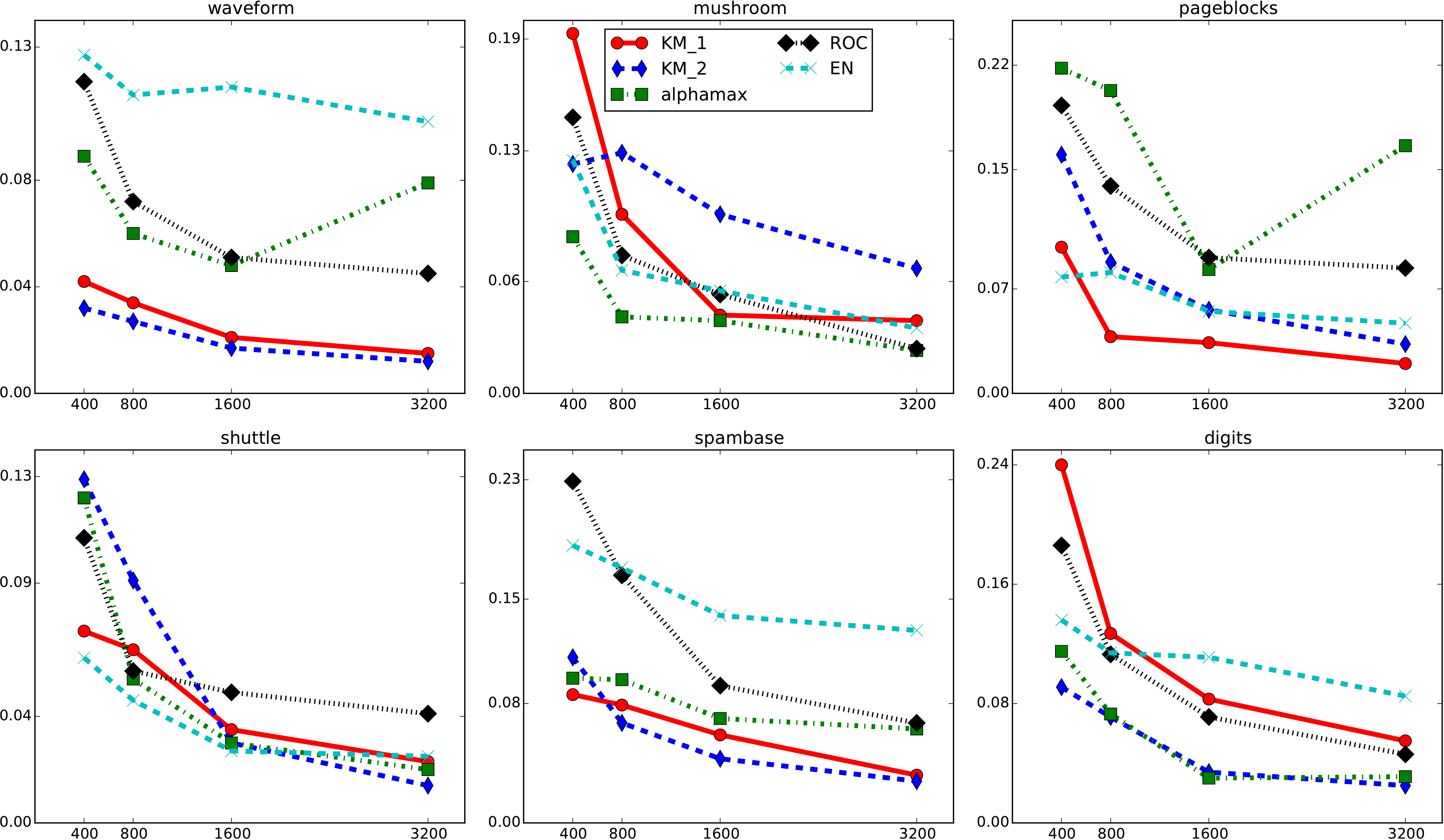}
 \end{center}
 \caption{The average error made by the KM, alphamax, ROC and EN algorithms in predicting the mixture proportion $\kappa^*$ for various datasets as a function of the total number of samples from the mixture and component.}
 \label{fig:plot_data}
\end{figure*}
It can be seen from the plots in Figure \ref{fig:plot_data}, that our algorithms (KM1 and KM2) perform comparably to or better than other algorithms for all datasets except \texttt{mushroom}.

\section{Conclusion}
\label{sec:concl}
Mixture proportion estimation is an interesting and important problem that arises naturally in many `weakly supervised learning' settings. In this paper, we give an efficient kernel mean embedding based method for this problem, and show convergence of the algorithm to the true mixture proportion under certain conditions. We also demonstrate the effectiveness of our algorithm in practice by running it on several benchmark datasets.

\section*{Acknowledgements}
This work was supported in part by NSF Grants No. 1422157, 1217880, and 1047871.

\newpage

\bibliography{kernel_MPE}
\bibliographystyle{icml2016}

\newpage

\onecolumn
\appendix
\allowdisplaybreaks
\begin{center}
\textbf{\Large Mixture Proportion Estimation 
via Kernel Embeddings of Distributions}
\\[16pt]
\textbf{\Large Supplementary Material} 
\\[16pt]
\end{center}

\section{Proof of Propositions \ref{prop:d-is-zero}, \ref{prop:d-inc-conv}, \ref{prop-d-reformulation} and \ref{prop:basic-bounds-d}}
\begin{prop*}
\begin{align*}
d(\lambda) &=0 ,  \qquad \forall \lambda\in[0,\lambda^*] , \\
\hat d(\lambda) &=0 ,  \qquad \forall \lambda\in[0,1].
\end{align*} 
\end{prop*}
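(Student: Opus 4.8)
The plan is to observe that in each case the vector whose $\H$-distance is being minimized already belongs to the feasible set, so the infimum is attained and equals $0$. Both $d$ and $\hat d$ are infima of norms, hence nonnegative; and if the argument $\lambda\phi(F)+(1-\lambda)\phi(H)$ (resp.\ $\lambda\phi(\hat F)+(1-\lambda)\phi(\hat H)$) lies in $\C$ (resp.\ $\C_S$), then taking $w$ equal to it shows the infimum is $\le 0$, hence $=0$. So the whole proof reduces to two membership checks.

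The key fact is that the embedding map $\phi$ is affine with respect to mixtures: from $\phi(P)=\E_{X\sim P}\phi(X)$ and linearity of expectation, any distribution $R$ that equals a combination $\sum_j c_j P_j$ of distributions with $\sum_j c_j=1$ satisfies $\phi(R)=\sum_j c_j\phi(P_j)$ (the $c_j$ need not be nonnegative, as long as $R$ is a genuine distribution). Applying this to the given identity $G=\lambda^* F+(1-\lambda^*)H$ gives $\phi(G)=\lambda^*\phi(F)+(1-\lambda^*)\phi(H)$; solving for $\phi(F)$ and substituting, one computes
\[
\lambda\phi(F)+(1-\lambda)\phi(H)=\tfrac{\lambda}{\lambda^*}\,\phi(G)+\Bigl(1-\tfrac{\lambda}{\lambda^*}\Bigr)\phi(H)=\phi\!\left(\tfrac{\lambda}{\lambda^*}\,G+\Bigl(1-\tfrac{\lambda}{\lambda^*}\Bigr)H\right).
\]
For $\lambda\in[0,\lambda^*]$ the coefficient $\lambda/\lambda^*$ lies in $[0,1]$, so the distribution inside $\phi(\cdot)$ is an honest convex combination of $G$ and $H$; hence the vector lies in $\C$ and $d(\lambda)=0$. (The endpoints $\lambda=0$ and $\lambda=\lambda^*$ just recover $\phi(H)\in\C$ and $\phi(G)\in\C$.)

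For $\hat d$, I would write $\phi(\hat F)=\sum_{i=1}^{n+m}a_i\phi(x_i)$ with $a_i=1/n$ for $i\le n$ and $a_i=0$ otherwise, and $\phi(\hat H)=\sum_{i=1}^{n+m}b_i\phi(x_i)$ with $b_i=0$ for $i\le n$ and $b_i=1/m$ otherwise; both coefficient vectors $\mathbf a,\mathbf b$ lie in $\Delta_{n+m}$. Then for $\lambda\in[0,1]$,
\[
\lambda\phi(\hat F)+(1-\lambda)\phi(\hat H)=\sum_{i=1}^{n+m}\bigl(\lambda a_i+(1-\lambda)b_i\bigr)\phi(x_i),
\]
and $\lambda\mathbf a+(1-\lambda)\mathbf b$, being a convex combination of two elements of $\Delta_{n+m}$, is again in $\Delta_{n+m}$; so this vector lies in $\C_S$ and $\hat d(\lambda)=0$.

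There is no serious obstacle; the only point requiring care is the algebraic rewriting in the first part: although $\lambda F+(1-\lambda)H$ fails to be a probability measure once $\lambda>1$, the embedded vector $\lambda\phi(F)+(1-\lambda)\phi(H)$ is nevertheless the embedding of the genuine distribution $\tfrac{\lambda}{\lambda^*}G+(1-\tfrac{\lambda}{\lambda^*})H$ precisely when $\lambda\le\lambda^*$, which is exactly the range claimed.
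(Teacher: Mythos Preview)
Your proof is correct and follows essentially the same approach as the paper: you rewrite $\lambda\phi(F)+(1-\lambda)\phi(H)$ as $\tfrac{\lambda}{\lambda^*}\phi(G)+(1-\tfrac{\lambda}{\lambda^*})\phi(H)$ to see it lies in $\C$, and for $\hat d$ you use that $\C_S$ is convex and contains $\phi(\hat F)$ and $\phi(\hat H)$. The paper's proof is identical in substance, just slightly terser (it omits the explicit simplex-coordinate check for $\hat d$).
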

\begin{proof}
The second equality is obvious and follows from convexity of $\C_S$ and that both $\phi(\hat F)$ and $\phi(\hat H)$ are in $\C_S$.

The first statement is due to the following. Let $\lambda\in[0,\lambda^*]$, then we have that,
\begin{align*}
 d(\lambda) 
 &= \inf_{w\in\C} \| \lambda \phi(F) + (1-\lambda)\phi(H) - w \|_\H\\
 &= \inf_{w\in\C} \left\| \frac{\lambda}{\lambda^*}(\lambda^* \phi(F) + (1-\lambda^*)\phi(H)) + \left(1-\frac{\lambda}{\lambda^*} \right) \phi(H)  - w \right\|_\H\\
 &= \inf_{w\in\C} \left\| \frac{\lambda}{\lambda^*}(\phi(G)) + \left(1-\frac{\lambda}{\lambda^*} \right) \phi(H)  - w \right\|_\H\\
 &= 0 \,.
\end{align*} 
\end{proof}

\begin{prop*}
 $d(.)$ and $\hat d(.)$ are non-decreasing convex functions. 
\end{prop*}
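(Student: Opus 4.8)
The plan is to obtain both properties from one structural observation, namely that $d$ and $\hat d$ are partial minimizations of a jointly convex function over a convex set, and that they are non-negative and vanish at $\lambda=0$.

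First I would put the objective in affine form. Since $\lambda\phi(F)+(1-\lambda)\phi(H)=\phi(H)+\lambda(\phi(F)-\phi(H))$, the map $(\lambda,w)\mapsto \phi(H)+\lambda(\phi(F)-\phi(H))-w$ from $[0,\infty)\times\H$ into $\H$ is affine, so composing it with $\|\cdot\|_\H$ yields a function that is jointly convex in $(\lambda,w)$. The set $\C$ is convex (as noted right after its definition), so I would then invoke the standard fact that partially minimizing a jointly convex function over a convex set leaves a convex function: given $\lambda_1,\lambda_2\ge 0$, $t\in[0,1]$, and $w_1,w_2\in\C$ that are $\epsilon$-optimal in the definitions of $d(\lambda_1)$ and $d(\lambda_2)$, joint convexity of the objective together with $tw_1+(1-t)w_2\in\C$ gives $d(t\lambda_1+(1-t)\lambda_2)\le t\,d(\lambda_1)+(1-t)\,d(\lambda_2)+\epsilon$, and letting $\epsilon\downarrow 0$ proves convexity of $d$. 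The identical argument with $\C_S$ (which is also convex) and $\phi(\hat F),\phi(\hat H)$ in place of $\C$ and $\phi(F),\phi(H)$ shows convexity of $\hat d$.

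For monotonicity I would use that $d$ is an infimum of norms, hence $d\ge 0$, together with Proposition~\ref{prop:d-is-zero}, which gives $d(\lambda)=0$ for all $\lambda\in[0,\lambda^*]$, and in particular $d(0)=0$. Fix $0\le\lambda_1<\lambda_2$. If $\lambda_2\le\lambda^*$ then $d(\lambda_1)=d(\lambda_2)=0$; if $\lambda_1\le\lambda^*<\lambda_2$ then $d(\lambda_1)=0\le d(\lambda_2)$; and if $\lambda^*\le\lambda_1<\lambda_2$, write $\lambda_1=t\lambda^*+(1-t)\lambda_2$ with $t=\frac{\lambda_2-\lambda_1}{\lambda_2-\lambda^*}\in(0,1]$, so convexity and $d(\lambda^*)=0$ give $d(\lambda_1)\le t\,d(\lambda^*)+(1-t)\,d(\lambda_2)=(1-t)\,d(\lambda_2)\le d(\lambda_2)$, the last step using $d(\lambda_2)\ge0$. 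Thus $d$ is non-decreasing on $[0,\infty)$. Replacing $[0,\lambda^*]$ by $[0,1]$ and using the second half of Proposition~\ref{prop:d-is-zero} gives the same conclusion for $\hat d$.

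The only mildly delicate point is the partial-minimization step, where the infima defining $d$ and $\hat d$ need not be attained because $\C$ and $\C_S$ need not be closed; the $\epsilon$-optimal-point argument above sidesteps this entirely, so I do not anticipate a genuine obstacle here.
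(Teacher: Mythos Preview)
Your proof is correct and follows essentially the same approach as the paper: convexity via partial minimization of a jointly convex objective using $\epsilon$-optimal witnesses $w_1,w_2\in\C$, and monotonicity from convexity together with $d\ge 0$ and $d\equiv 0$ on an initial interval. The only cosmetic difference is that the paper phrases the monotonicity step through the subdifferential (zero gradient on $[0,\lambda^*]$ forces nonnegative subgradients everywhere), whereas you argue directly by writing $\lambda_1$ as a convex combination of $\lambda^*$ and $\lambda_2$; both are straightforward consequences of the same facts.
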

\begin{proof}
Let $0<\lambda_1<\lambda_2$. Let $\epsilon>0$. 
Let $w_1,w_2\in\C$ be such that
\begin{align*}
d(\lambda_1)&\geq \| (\lambda_1) \phi(F) + (1-\lambda_1)\phi(H) - w_1 \|_\H - \epsilon , \\ 
d(\lambda_2)&\geq \| (\lambda_2) \phi(F) + (1-\lambda_2)\phi(H) - w_2 \|_\H - \epsilon \,.
\end{align*}
By definition of $d(.)$ such $w_1,w_2$ exist for all $\epsilon>0$.

Let $\gamma\in[0,1]$,  $\lambda_\gamma= (1-\gamma)\lambda_1+\gamma\lambda_2$ and $w_\gamma=(1-\gamma)w_1+\gamma w_2$. We then have that
\begin{align*}
d(\lambda_\gamma) 
&\leq \| (\lambda_\gamma) \phi(F) + (1-\lambda_\gamma)\phi(H) - w_\gamma \|_\H \\
&= \| ((1-\gamma)\lambda_1+\gamma\lambda_2)\phi(F) + (1-(1-\gamma)\lambda_1-\gamma\lambda_2)\phi(H) - w_\gamma \|_\H \\
&= \| ((1-\gamma)\lambda_1+\gamma\lambda_2)\phi(F) + ((1-\gamma)(1-\lambda_1)+\gamma(1-\lambda_2))\phi(H) - w_\gamma \|_\H \\
&= \left\| (1-\gamma) \left( \lambda_1 \phi(F) + (1-\lambda_1)\phi(H) - w_1 \right) + \gamma \left(\lambda_2 \phi(F) + (1-\lambda_2)\phi(H) - w_2 \right) \right\| \\
&\leq (1-\gamma)\left\| \left( \lambda_1 \phi(F) + (1-\lambda_1)\phi(H) - w_1 \right)\right\| + \gamma\left\| \left(\lambda_2 \phi(F) + (1-\lambda_2)\phi(H) - w_2 \right) \right\| \\
&\leq (1-\gamma) (d(\lambda_1)+\epsilon) + \gamma (d(\lambda_2)+\epsilon)\\
&= (1-\gamma) d(\lambda_1) + \gamma d(\lambda_2) + \epsilon \, .
\end{align*}
As the above holds for all $\epsilon>0$ and $d(\lambda_\gamma)$ is independent of $\epsilon$, we have
\[ d(\lambda_\gamma) = d((1-\gamma)\lambda_1+\gamma\lambda_2) \leq (1-\gamma) d(\lambda_1) + \gamma d(\lambda_2). \]
Thus we have that $d(.)$ is convex.  

As $\C$ is convex and $\phi(H),\phi(F)\in\C$, we have that $d(\lambda)=0$ for $\lambda\in [0,\lambda^*]$, and hence $\nabla d(\lambda)=0$ for $\lambda\in[0,\lambda^*]$. By convexity, we then have that for all $\lambda\geq 0$, all elements of the sub-differential $\partial d(\lambda)$ are non-negative and hence $d(.)$ is a non-decreasing function.

By very similar arguments, we can also show that $\hat d(.)$ is convex and non-decreasing.  
\end{proof}

\begin{prop*}
For all $\mu\geq 0$
\begin{align*}
d(\lambda^*+\mu)	&= \inf_{w\in\C} \| \phi(G) + \mu(\phi(F) - \phi(H)) - w \|_\H.
\end{align*}
\end{prop*}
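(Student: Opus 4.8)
The plan is to prove the identity by direct algebraic manipulation of the definition of $d$, using only the linearity of the kernel mean embedding. The key observation is that the map $P \mapsto \phi(P)$ is linear in $P$: for any $f\in\H$ and any distribution $P$ we have $\E_{X\sim P} f(X) = \langle f,\phi(P)\rangle_\H$, and this identity is linear in $P$, so it extends to signed combinations of distributions as well.

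First I would record the relation between the distributions themselves. From $F = (1-\kappa^*)G + \kappa^* H$ and $\lambda^* = \frac{1}{1-\kappa^*}$ we get $G = \lambda^* F + (1-\lambda^*)H$ as an identity of signed measures. Applying the embedding and testing against an arbitrary $f\in\H$,
\begin{align*}
\langle f,\phi(G)\rangle_\H &= \E_{X\sim G} f(X) = \lambda^*\E_{X\sim F} f(X) + (1-\lambda^*)\E_{X\sim H} f(X)\\
&= \langle f,\, \lambda^*\phi(F) + (1-\lambda^*)\phi(H)\rangle_\H,
\end{align*}
and since this holds for every $f\in\H$ we conclude $\phi(G) = \lambda^*\phi(F) + (1-\lambda^*)\phi(H)$ in $\H$.

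Next I would substitute $\lambda = \lambda^*+\mu$ into the definition \eqref{eqn:d-defn} of $d$ and rewrite the vector inside the norm:
\begin{align*}
(\lambda^*+\mu)\phi(F) + \bigl(1-\lambda^*-\mu\bigr)\phi(H)
&= \bigl(\lambda^*\phi(F) + (1-\lambda^*)\phi(H)\bigr) + \mu\bigl(\phi(F)-\phi(H)\bigr)\\
&= \phi(G) + \mu\bigl(\phi(F)-\phi(H)\bigr).
\end{align*}
Taking $\inf_{w\in\C}$ of the norm of both sides yields exactly the claimed formula for $d(\lambda^*+\mu)$.

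The only point requiring any care — and the natural candidate for the ``main obstacle'' — is the extension of the embedding's linearity to the signed combination $G = \lambda^* F + (1-\lambda^*)H$ (the coefficient $1-\lambda^*$ is negative whenever $\kappa^*>0$). But this is not a genuine difficulty: the identity $\langle f,\phi(P)\rangle_\H = \E_{X\sim P} f(X)$ is already linear in $P$, so the argument above goes through verbatim. Everything else is a one-line rearrangement, so the proof is short.
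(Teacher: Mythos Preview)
Your proof is correct and is essentially identical to the paper's: both substitute $\lambda=\lambda^*+\mu$ into the definition, split the linear combination as $\bigl(\lambda^*\phi(F)+(1-\lambda^*)\phi(H)\bigr)+\mu(\phi(F)-\phi(H))$, and recognize the first bracket as $\phi(G)$. If anything, you are slightly more careful than the paper in justifying the linearity of $\phi$ on the signed combination, which the paper simply writes as $\phi(\lambda^* F + (1-\lambda^*)H)$ without comment.
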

\begin{proof}
\begin{align*}
d(\lambda^*+\mu) &= \inf_{w\in\C} \| (\lambda^*+\mu) \phi(F) + (1-\lambda^*-\mu)\phi(H) - w \|_\H \\
&=	\inf_{w\in\C} \| \lambda^* \phi(F) + (1-\lambda^* ) \phi(H) + \mu(\phi(F)-\phi(H)) - w \|_\H \\
&=	\inf_{w\in\C} \|  \phi(\lambda^* F + (1-\lambda^* ) H) + \mu(\phi(F)-\phi(H)) - w \|_\H \,.
\end{align*}
\end{proof}

\begin{prop*}
For all $\lambda,\mu \geq 0$,
\begin{align}
d(\lambda) &\geq \lambda \|\phi(F) - \phi(H)\| - \sup_{w\in\C} \|\phi(H)-w\| , \\
d(\lambda^*+ \mu) &\leq  \mu \|\phi(F) - \phi(H)\| ,.
\end{align}
\end{prop*}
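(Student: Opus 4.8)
The two bounds are essentially immediate consequences of the triangle inequality together with the reformulation already established in Proposition \ref{prop-d-reformulation}; the only ``ideas'' needed are (i) splitting the vector inside the norm into a fixed part and a part that can be driven to zero, and (ii) observing that $\phi(G)\in\C$.

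\textbf{Lower bound \eqref{eqn:bound-d-1}.} The plan is to write, for any $w\in\C$,
\[
\lambda\phi(F) + (1-\lambda)\phi(H) - w = \lambda\bigl(\phi(F)-\phi(H)\bigr) + \bigl(\phi(H)-w\bigr),
\]
and apply the reverse triangle inequality to get
\[
\| \lambda\phi(F)+(1-\lambda)\phi(H)-w\|_\H \;\geq\; \lambda\|\phi(F)-\phi(H)\|_\H - \|\phi(H)-w\|_\H .
\]
Taking the infimum over $w\in\C$ turns $-\|\phi(H)-w\|_\H$ into $-\sup_{w\in\C}\|\phi(H)-w\|_\H$ and leaves the first term untouched, which is exactly \eqref{eqn:bound-d-1}. (Note $\lambda\geq 0$ is used so that $\|\lambda(\phi(F)-\phi(H))\|_\H = \lambda\|\phi(F)-\phi(H)\|_\H$.)

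\textbf{Upper bound \eqref{eqn:bound-d-4}.} Here I would start from the reformulation of Proposition \ref{prop-d-reformulation},
\[
d(\lambda^*+\mu) = \inf_{w\in\C}\|\phi(G) + \mu(\phi(F)-\phi(H)) - w\|_\H ,
\]
and simply plug in the feasible choice $w=\phi(G)$, which lies in $\C$ because $G$ is a distribution. This gives $d(\lambda^*+\mu) \leq \|\mu(\phi(F)-\phi(H))\|_\H = \mu\|\phi(F)-\phi(H)\|_\H$ since $\mu\geq 0$.

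\textbf{Main obstacle.} There is no real obstacle: the content is entirely the reverse/forward triangle inequality plus the membership $\phi(G)\in\C$. The only point requiring a moment's care is that the infimum in \eqref{eqn:bound-d-1} distributes correctly over the difference of the two terms (one term is constant in $w$, the other is manipulated into a supremum with the sign flip handled), and that the reformulation from Proposition \ref{prop-d-reformulation} is invoked for the upper bound rather than working directly from the definition \eqref{eqn:d-defn}.
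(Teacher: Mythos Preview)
Your proposal is correct and matches the paper's own proof essentially line for line: the paper derives \eqref{eqn:bound-d-1} by applying the triangle inequality directly to the definition \eqref{eqn:d-defn}, and obtains \eqref{eqn:bound-d-4} from Proposition~\ref{prop-d-reformulation} by choosing $w=\phi(G)$. Your write-up merely makes explicit the decomposition $\lambda(\phi(F)-\phi(H)) + (\phi(H)-w)$ and the handling of the infimum, which the paper leaves implicit.
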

\begin{proof}
The proof of the first inequality above follows from applying triangle inequality to $d(.)$ from Equation \eqref{eqn:d-defn}.

The proof of the second inequality above follows from Proposition \ref{prop-d-reformulation} by setting $h=\phi(G)$.
\end{proof}

\section{Proof of Lemma \ref{prop:empirical-close-to-truth}}
\begin{lem*}
Let the kernel $k$ be such that $k(x,x)\leq 1$ for all $x\in\X$. Let $\delta\in(0,1/4]$. We have that, the following holds with probability $1-4\delta$ (over the sample $x_1,\ldots,x_{n+m}$) if $n>2(\lambda^*)^2 \log\left(\frac{1}{\delta}\right)$.
\begin{align*}
\|\phi(F) - \phi(\hat{F})\|_\H &\leq 
\frac{3\sqrt{\log(1/\delta)}}{\sqrt{n}} , \\
\|\phi(H) - \phi(\hat{H})\|_\H &\leq
\frac{3\sqrt{\log(1/\delta)}}{\sqrt{m}}  , \\
\|\phi(G) - \phi(\hat{G})\|_\H  &\leq
\frac{3\sqrt{\log(1/\delta)}}{\sqrt{n/(2\lambda^*)}} \,.
\end{align*}
\end{lem*}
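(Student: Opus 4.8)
The plan is to reduce all three inequalities to a single generic concentration statement for kernel mean embeddings, and then handle the third bound by an extra binomial tail argument to control the (random, unknown) size of the sub-sample drawn from $G$, closing with a union bound.

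First I would record the generic fact, following \citet{Smola+07}: if $\hat P$ is the empirical distribution of an i.i.d.\ sample of size $N$ from $P$, then (i) by Jensen, $\E\|\phi(P)-\phi(\hat P)\|_\H \le \big(\E\|\phi(P)-\phi(\hat P)\|_\H^2\big)^{1/2} = \big(\tfrac1N(\E_{X\sim P}k(X,X)-\|\phi(P)\|_\H^2)\big)^{1/2}\le \tfrac1{\sqrt N}$ using $k(x,x)\le 1$; and (ii) the function $(x_1,\dots,x_N)\mapsto\|\phi(P)-\phi(\hat P)\|_\H$ has bounded differences, since replacing a single $x_i$ changes $\phi(\hat P)$ by $\tfrac1N\big(\phi(x_i)-\phi(x_i')\big)$, whose norm is at most $\tfrac2N$. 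McDiarmid's inequality then gives, with probability at least $1-\delta$, $\|\phi(P)-\phi(\hat P)\|_\H\le \tfrac1{\sqrt N}+\sqrt{\tfrac{2\log(1/\delta)}{N}}$; and because $\delta\le 1/4$ forces $\log(1/\delta)\ge\log 4>1$, the right-hand side is at most $(1+\sqrt2)\tfrac{\sqrt{\log(1/\delta)}}{\sqrt N}\le \tfrac{3\sqrt{\log(1/\delta)}}{\sqrt N}$. Applying this with $(P,\hat P,N)=(F,\hat F,n)$ and $(P,\hat P,N)=(H,\hat H,m)$ yields the first two displayed bounds, each on an event of probability at least $1-\delta$.

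The third bound is the delicate one, since $\hat G$ is uniform over the random subset $S_1\subseteq\{x_1,\dots,x_n\}$ of points actually drawn from $G$. As each of the $n$ draws from $F=(1-\kappa^*)G+\kappa^*H$ independently comes from $G$ with probability $1-\kappa^*=1/\lambda^*$, the cardinality $N_1:=|S_1|$ is $\mathrm{Binomial}(n,1/\lambda^*)$ with $\E N_1=n/\lambda^*$, and conditionally on $N_1$ the points of $S_1$ are i.i.d.\ from $G$. I would first apply Hoeffding's inequality to the binomial: $\Pr\big(N_1\le \tfrac{n}{2\lambda^*}\big)=\Pr\big(N_1\le \E N_1-\tfrac{n}{2\lambda^*}\big)\le \exp\!\big(-\tfrac{n}{2(\lambda^*)^2}\big)\le\delta$, where the last step is exactly the hypothesis $n>2(\lambda^*)^2\log(1/\delta)$. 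On the complementary event $\{N_1>\tfrac{n}{2\lambda^*}\}$, I apply the generic bound conditionally on $N_1$ with $(P,\hat P,N)=(G,\hat G,N_1)$, obtaining with conditional probability at least $1-\delta$ that $\|\phi(G)-\phi(\hat G)\|_\H\le \tfrac{3\sqrt{\log(1/\delta)}}{\sqrt{N_1}}\le \tfrac{3\sqrt{\log(1/\delta)}}{\sqrt{n/(2\lambda^*)}}$.

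Finally, a union bound over the four bad events — failure of the $F$ bound, failure of the $H$ bound, the event $\{N_1\le n/(2\lambda^*)\}$, and failure of the conditional $G$ bound — shows all three inequalities hold simultaneously with probability at least $1-4\delta$. The only genuine obstacle is the $G$ bound: one must condition on $N_1$ before invoking the embedding concentration inequality and then absorb the binomial lower-tail event, and this lower-tail estimate is precisely what forces the sample-size requirement $n>2(\lambda^*)^2\log(1/\delta)$; everything else is the textbook mean-embedding concentration of \citet{Smola+07}.
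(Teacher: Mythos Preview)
Your proposal is correct and follows essentially the same architecture as the paper: a generic kernel-mean concentration bound for the $F$ and $H$ samples, a Hoeffding lower-tail bound on the binomial count $N_1$ of $G$-samples (which is exactly where the condition $n>2(\lambda^*)^2\log(1/\delta)$ enters), a conditional application of the concentration bound to $\hat G$, and a union bound over four events. The only difference is that you derive the base concentration inequality directly via a bounded-differences/McDiarmid argument (obtaining $1/\sqrt{N}+\sqrt{2\log(1/\delta)/N}$), whereas the paper quotes the Rademacher-complexity form of \citet{Smola+07} (obtaining $2/\sqrt{N}+\sqrt{\log(1/\delta)/N}$); both routes are standard and both land inside $3\sqrt{\log(1/\delta)}/\sqrt{N}$ once $\delta\le 1/4$.
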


The proof for the first two statements is a direct application of Theorem 2 of Smola et al. \cite{Smola+07}, along with bounds on the Rademacher complexity. The proof of the third statement also uses Hoeffding's inequality to show that out of the $n$ samples drawn from $F$, at least $n/(2\lambda^*)$ samples are drawn from $G$.
 
\begin{lem}
\label{lem:Rademacher-bounds}
Let the kernel $k$ be such that $k(x,x)\leq 1$ for all $x\in\X$. Then we have the following
\begin{enumerate}
\item
For all $h\in\H$ such that $\|h\|_\H \leq 1$ we have that $\sup_{x\in\X} |h(x)|\leq 1$.  
\item 
For all distributions $P$ over $\X$, the Rademacher complexity of $\H$ is bounded above as follows:
\[
R_n(\H,P)= \frac{1}{n} \E_{x_1,\ldots,x_n \sim P} \E_{\sigma_1,\ldots,\sigma_n} \left[ \sup_{h:\|h\|_\H\leq 1} \left| \sum_{i=1}^n \sigma_i h(x_i) \right| \right] \leq \frac{1}{\sqrt{n}} \,.
\] 
\end{enumerate} 
\end{lem}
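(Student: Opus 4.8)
The plan is to derive both statements directly from the reproducing property of $\H$ and the normalization $k(x,x)\le 1$; no external results are needed. For the first part, write $h(x)=\langle h,\phi(x)\rangle_\H$ and apply Cauchy--Schwarz together with $\|\phi(x)\|_\H^2=\langle\phi(x),\phi(x)\rangle_\H=k(x,x)$, which gives $|h(x)|\le\|h\|_\H\sqrt{k(x,x)}\le\|h\|_\H\le 1$; taking the supremum over $x\in\X$ finishes it.

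For the second part, the key step is to replace the inner supremum by a norm in $\H$. Fixing $x_1,\dots,x_n$ and signs $\sigma_1,\dots,\sigma_n$, the reproducing property gives $\sum_i\sigma_i h(x_i)=\langle h,\sum_i\sigma_i\phi(x_i)\rangle_\H$, and since $\sup_{\|h\|_\H\le1}|\langle h,v\rangle_\H|=\|v\|_\H$ (attained at $h=\pm v/\|v\|_\H$, the case $v=0$ being trivial),
\[
\sup_{h:\|h\|_\H\le1}\Big|\sum_i\sigma_i h(x_i)\Big| = \Big\|\sum_i\sigma_i\phi(x_i)\Big\|_\H .
\]
Next I would take the expectation over $\sigma$, use Jensen's inequality to move it inside the concave square root, expand the squared norm, and use $\E_\sigma[\sigma_i\sigma_j]=\mathbf{1}[i=j]$:
\[
\E_\sigma\Big\|\sum_i\sigma_i\phi(x_i)\Big\|_\H \le \sqrt{\E_\sigma\Big\|\sum_i\sigma_i\phi(x_i)\Big\|_\H^2} = \sqrt{\sum_{i,j}\E_\sigma[\sigma_i\sigma_j]\,k(x_i,x_j)} = \sqrt{\sum_i k(x_i,x_i)}\le\sqrt n .
\]
This bound holds for every realization of $x_1,\dots,x_n$, so taking the expectation over $x_1,\dots,x_n\sim P$ and dividing by $n$ gives $R_n(\H,P)\le n^{-1/2}$.

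There is no genuine obstacle here: the argument is just Cauchy--Schwarz, the fact that a bounded linear functional attains its norm on the unit ball at the normalized representer, Jensen's inequality, and the cancellation of the off-diagonal Rademacher cross terms. The only points worth stating explicitly are the degenerate case $v=0$ in the supremum identity and the observation that the final numeric bound $\sqrt{\sum_i k(x_i,x_i)}\le\sqrt n$ is deterministic in the $x_i$, so it passes unchanged through the outer expectation over $P$.
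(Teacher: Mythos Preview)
Your proposal is correct and follows essentially the same argument as the paper's proof: Cauchy--Schwarz plus the reproducing property for part~1, and for part~2 the reduction of the supremum to $\|\sum_i\sigma_i\phi(x_i)\|_\H$, followed by Jensen's inequality and the cancellation of off-diagonal Rademacher terms. If anything, your write-up is slightly more explicit about the edge case $v=0$ and about why the outer expectation over $P$ is harmless.
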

\begin{proof}
The first item simply follows from Cauchy-Schwarz and the reproducing property of $\H$
\[
|h(x)| = |\langle h, k(x,.) \rangle | \leq \|h\|_\H \|k(x,.)\|_\H \leq 1 \,.
\]

The second item is also a standard result and follows from the reproducing property and Jensen's inequality.
{\allowdisplaybreaks
\begin{align*}
\frac{1}{n} \E_{\sigma_1,\ldots,\sigma_n} \left[ \sup_{h:\|h\|_\H\leq 1} \left| \sum_{i=1}^n \sigma_i h(x_i) \right| \right]
&=
\frac{1}{n} \E_{\sigma_1,\ldots,\sigma_n} \left[ \sup_{h:\|h\|_\H\leq 1} \left| \langle \sum_{i=1}^n \sigma_i k(x_i,.), h \rangle \right| \right] \\
&=
\frac{1}{n} \E_{\sigma_1,\ldots,\sigma_n} \left[  \left\|  \sum_{i=1}^n \sigma_i k(x_i,.)  \right\| \right] \\
&\leq
\frac{1}{n} \sqrt{\E_{\sigma_1,\ldots,\sigma_n} \left[  \left\|  \sum_{i=1}^n \sigma_i k(x_i,.)  \right\|^2 \right]} \\
&=
\frac{1}{n} \sqrt{\E_{\sigma_1,\ldots,\sigma_n} \left[    \sum_{i=1}^n  k(x_i,x_i)   \right] } \\
&\leq
\frac{1}{\sqrt{n}} \,.
\end{align*}
}
\end{proof}

\begin{thm}\cite{Smola+07}
\label{thm:Smola-thm}
Let $\delta\in(0,1/4]$. Let  all $h\in\H$ with $\|h\|_\H\leq 1$ be such that  $\sup_{x\in\X} |h(x)|\leq R$. Let $\hat P$ be the empirical distribution induced by $n$ i.i.d. samples from a distribution.  Then with probability at least $1-\delta$ 
\[
\|\phi(P) - \phi(\hat P)\| \leq 2 R_n(\H,P) + R \sqrt{\frac{\log\left(\frac{1}{\delta}\right)}{n}} \,.
\]
\end{thm}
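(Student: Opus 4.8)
The plan is to recognize the left-hand side as the supremum of an empirical process over the unit ball of $\H$, and then apply the standard two-step recipe of McDiarmid concentration followed by symmetrization. First I would use the reproducing property to write, for any $h\in\H$, $\langle h, \phi(P) - \phi(\hat P)\rangle_\H = \E_{X\sim P} h(X) - \frac{1}{n}\sum_{i=1}^n h(x_i)$, and then invoke the dual characterization of the Hilbert norm, $\|v\|_\H = \sup_{\|h\|_\H\le 1}\langle h, v\rangle_\H$, to obtain the identity
\[
\|\phi(P) - \phi(\hat P)\|_\H = \sup_{h:\|h\|_\H\le 1}\left(\E_{X\sim P}h(X) - \frac{1}{n}\sum_{i=1}^n h(x_i)\right) =: g(x_1,\ldots,x_n).
\]
Here the supremum without an absolute value suffices because the unit ball is symmetric under $h\mapsto -h$; this same symmetry is what will later let me match the absolute-value form appearing in the definition of $R_n(\H,P)$.

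Next I would control $g$ by bounded differences. Since $\sup_x |h(x)|\le R$ uniformly over the unit ball, replacing a single sample $x_i$ by $x_i'$ changes the empirical average $\frac{1}{n}\sum_j h(x_j)$ by at most $\frac{2R}{n}$, uniformly in $h$; and because a supremum of functions each having bounded difference $c$ again has bounded difference $c$, the map $g$ satisfies the bounded-differences condition with constants $c_i \le \frac{2R}{n}$. McDiarmid's inequality then yields, with probability at least $1-\delta$, a bound of the form $g \le \E[g] + (\text{deviation term})$, where the deviation term is of order $R\sqrt{\log(1/\delta)/n}$.

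It remains to bound $\E[g]$. For this I would run the standard symmetrization argument: introduce an independent ghost sample $x_1',\ldots,x_n'$ drawn from $P$, rewrite $\E_{X\sim P}h(X) = \E'\left[\frac{1}{n}\sum_i h(x_i')\right]$, pull the outer supremum inside the ghost expectation via Jensen's inequality, and then insert i.i.d. Rademacher signs $\sigma_i$, which is valid because $(x_i, x_i')$ and $(x_i', x_i)$ are exchangeable. Exploiting the symmetry of the unit ball once more to restore the absolute value inside the supremum gives $\E[g] \le 2 R_n(\H,P)$. Combining this with the McDiarmid bound produces the claimed inequality.

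The main obstacle I anticipate is not conceptual but a matter of bookkeeping: matching the precise constant in the deviation term. A direct application of McDiarmid with $c_i = \frac{2R}{n}$ produces a term of the form $R\sqrt{2\log(1/\delta)/n}$, so obtaining exactly $R\sqrt{\log(1/\delta)/n}$ requires either a sharper, range-based rather than diameter-based, bounded-differences accounting, or the particular concentration constant used by \citet{Smola+07}; I would reconcile this by tracking the effective range of $h$ on the sample rather than its full sup-norm diameter. The symmetrization step, while routine, is the conceptual crux, and it is the place where the exchangeability of the ghost sample and the symmetry of the unit ball must be invoked carefully in order to land precisely on $R_n(\H,P)$ as defined.
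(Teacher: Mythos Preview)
The paper does not actually prove this theorem; it is quoted verbatim from \citet{Smola+07} and used as a black box in the proof of Lemma~\ref{prop:empirical-close-to-truth}. Your McDiarmid-plus-symmetrization argument is precisely the standard proof that appears in that reference, so in that sense your proposal is correct and matches the intended source.

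Your reservation about the constant is well founded: with bounded-differences constants $c_i = 2R/n$, McDiarmid yields a deviation of $R\sqrt{2\log(1/\delta)/n}$, not $R\sqrt{\log(1/\delta)/n}$. The missing $\sqrt{2}$ is a transcription issue in the statement as copied here (and is immaterial downstream, since Lemma~\ref{prop:empirical-close-to-truth} bounds everything by $3\sqrt{\log(1/\delta)}/\sqrt{n}$, which comfortably absorbs either constant). There is no sharper ``range-based'' bounded-differences argument that removes the $\sqrt{2}$ in this setting, so you need not chase it; simply note the discrepancy and carry the honest constant.
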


\begin{lem}
\label{lem:num-samples-from-P1}
Let $\delta\in(0,1/4]$. Let $n>2(\lambda^*)^2 \log\left(\frac{1}{\delta}\right)$. Then with at least probability $1-\delta$ the following holds. At least $\frac{n}{2\lambda^*}$ of the $n$ samples $x_1,\ldots,x_n$ drawn from $F$ (which is a mixture of $G$ and $H$) are drawn from $G$.
\end{lem}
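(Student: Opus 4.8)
The plan is to model the number of the $n$ samples from $F$ that actually come from $G$ as a binomial random variable and apply a standard lower-tail concentration inequality. Since $F=(1-\kappa^*)G+\kappa^* H$ and $\lambda^*=1/(1-\kappa^*)$, each of the $n$ i.i.d.\ draws from $F$ can be realized hierarchically: first flip an independent coin that selects $G$ with probability $1-\kappa^*=1/\lambda^*$ and $H$ otherwise, then draw from the selected distribution. Let $N$ be the number of coins landing on $G$; then $N\sim\mathrm{Binomial}(n,1/\lambda^*)$ and $\E[N]=n/\lambda^*$. The subset $S_1$ of samples drawn from $G$ has cardinality $N$, so it suffices to show $\Pr[N\geq n/(2\lambda^*)]\geq 1-\delta$.

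Next I would apply Hoeffding's inequality to the lower tail of $N$, written as a sum of $n$ i.i.d.\ Bernoulli$(1/\lambda^*)$ variables each bounded in $[0,1]$: for any $t>0$, $\Pr[N\leq \E[N]-t]\leq \exp(-2t^2/n)$. Choosing $t=\E[N]-n/(2\lambda^*)=n/\lambda^*-n/(2\lambda^*)=n/(2\lambda^*)$ yields $\Pr[N\leq n/(2\lambda^*)]\leq \exp\!\left(-\frac{2(n/(2\lambda^*))^2}{n}\right)=\exp\!\left(-\frac{n}{2(\lambda^*)^2}\right)$.

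Finally, the hypothesis $n>2(\lambda^*)^2\log(1/\delta)$ gives $\frac{n}{2(\lambda^*)^2}>\log(1/\delta)$, hence $\exp(-n/(2(\lambda^*)^2))<\delta$. Since the event of interest is the complement of $\{N<n/(2\lambda^*)\}\subseteq\{N\leq n/(2\lambda^*)\}$, we conclude $\Pr[N<n/(2\lambda^*)]\leq\Pr[N\leq n/(2\lambda^*)]<\delta$, so with probability at least $1-\delta$ at least $n/(2\lambda^*)$ of the samples from $F$ are drawn from $G$.

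There is no genuine obstacle here; the only points requiring care are (i) correctly identifying the hierarchical sampling model so that $N$ is truly $\mathrm{Binomial}(n,1/\lambda^*)$, which uses only the definition of $F$ as a mixture together with the i.i.d.\ assumption, and (ii) tracking the direction of the tail bound and the fact that $\{N\geq n/(2\lambda^*)\}$ is the complement of an event contained in $\{N\leq n/(2\lambda^*)\}$, so the Hoeffding bound on the latter suffices. A multiplicative Chernoff bound would also work, but Hoeffding delivers the stated threshold and constants most directly.
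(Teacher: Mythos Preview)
Your proposal is correct and mirrors the paper's own proof almost exactly: both introduce the i.i.d.\ Bernoulli$(1/\lambda^*)$ indicators for whether each sample comes from $G$, apply Hoeffding's lower-tail inequality with deviation $t=n/(2\lambda^*)$, obtain the bound $\exp\!\bigl(-n/(2(\lambda^*)^2)\bigr)$, and invoke the hypothesis $n>2(\lambda^*)^2\log(1/\delta)$. Your additional care in distinguishing $\{N<n/(2\lambda^*)\}$ from $\{N\leq n/(2\lambda^*)\}$ is fine but not strictly needed given the statement.
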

\begin{proof}
For all $1\leq i \leq n$ let 
\[
z_i=\begin{cases}
		1 &\text{ if } x_i \text{ is drawn from }G \\
		0 &\text{ otherwise}
		\end{cases} \,.
\]
From the definition of $F$, we have that $z_i$ are i.i.d. Bernoulli random variables with a bias of $\frac{1}{\lambda^*}$. Therefore by Hoeffding's inequality we have that,
\begin{align*}
Pr\left(\sum_{i=1}^n z_i > \frac{n}{2\lambda^*}\right)
&= Pr\left(\frac{1}{n}\sum_{i=1}^n z_i  -  \frac{1}{\lambda^*} > \frac{-1}{2\lambda^*}\right) \\
&= 1-Pr\left(\frac{1}{n}\sum_{i=1}^n z_i  -  \frac{1}{\lambda^*} \leq \frac{-1}{2\lambda^*}\right)\\
&\geq 1- e^{-\frac{2n}{(2\lambda^*)^2}} \geq 1-\delta \,.
\end{align*}
\end{proof}
\begin{proof}(\emph{Proof of Lemma \ref{prop:empirical-close-to-truth}})
From Theorem \ref{thm:Smola-thm} and Lemma \ref{lem:Rademacher-bounds}, we have that with probability $1-\delta$
\[
\|\phi(F) - \phi(\hat{F})\|_\H 
\leq
2 \frac{1}{\sqrt{n}} + \sqrt{\frac{\log\left(\frac{1}{\delta}\right)}{n}} \,.
\]
We also have that with probability $1-\delta$
\[
\|\phi(H) - \phi(\hat H)\|_\H 
\leq
2 \frac{1}{\sqrt{m}} + \sqrt{\frac{\log\left(\frac{1}{\delta}\right)}{m}} \,.
\]

Let $n'$ be the number of samples  in $x_1,\ldots,x_n$ drawn from $G$. From Lemma \ref{lem:num-samples-from-P1},  we have that with probability $1-\delta$ the $n' \geq \frac{n}{2\lambda^*}$. 

We also have that with probability $1-\delta$
\[
\|\phi(G) - \phi(\hat G)\|_\H 
\leq
2 \frac{1}{\sqrt{n'}} +  \sqrt{\frac{\log\left(\frac{1}{\delta}\right)}{n'}} \,.
\]

Putting the above four $1-\delta$ probability events together completes the proof.
\end{proof}

\section{Proofs of Lemmas \ref{prop:d_hat_upper} and \ref{prop:d_hat_lower}}

\begin{lem*}
Let $k(x,x)\leq 1$ for all $x\in\X$. Assume $E_\delta$. For all $\lambda\in[1,\lambda^*]$ we have that
\[
\hat d(\lambda) \leq 
\left(2-\frac{1}{\lambda^*}+\frac{\sqrt{2}}{\sqrt{\lambda^*}}\right)\lambda\cdot \frac{3\sqrt{\log(1/\delta)}}{\sqrt{\min(m,n)}}
\]
\end{lem*}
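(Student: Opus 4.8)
The plan is to upper bound $\hat d(\lambda)$ by evaluating the norm in \eqref{eqn:d-hat-defn} at a single cleverly chosen feasible point of $\C_S$, and then control the resulting quantity using Lemma~\ref{prop:empirical-close-to-truth}. The candidate feasible point mirrors the population computation in the proof of Proposition~\ref{prop:d-is-zero}: there, for $\lambda\le\lambda^*$, one writes $\lambda\phi(F)+(1-\lambda)\phi(H)=\tfrac{\lambda}{\lambda^*}\phi(G)+(1-\tfrac{\lambda}{\lambda^*})\phi(H)$, a convex combination of two elements of $\C$. Accordingly I would take the empirical surrogate $w_\lambda:=\tfrac{\lambda}{\lambda^*}\phi(\hat G)+(1-\tfrac{\lambda}{\lambda^*})\phi(\hat H)$. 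Since $1\le\lambda\le\lambda^*$ we have $\tfrac{\lambda}{\lambda^*}\in[0,1]$, and since $\phi(\hat G),\phi(\hat H)\in\C_S$ with $\C_S$ convex, we get $w_\lambda\in\C_S$, hence $\hat d(\lambda)\le\|\lambda\phi(\hat F)+(1-\lambda)\phi(\hat H)-w_\lambda\|_\H$. This is the only place the hypotheses $\lambda\ge1$ and $\lambda\le\lambda^*$ are used.

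Next I would simplify the vector inside the norm. Expanding $w_\lambda$ and collecting the $\phi(\hat H)$ terms gives
\[
\lambda\phi(\hat F)+(1-\lambda)\phi(\hat H)-w_\lambda=\lambda\phi(\hat F)-\tfrac{\lambda}{\lambda^*}\phi(\hat G)-\lambda\bigl(1-\tfrac{1}{\lambda^*}\bigr)\phi(\hat H).
\]
The key observation is that the analogous population expression vanishes: from $G=\lambda^*F+(1-\lambda^*)H$ we get $\phi(G)=\lambda^*\phi(F)+(1-\lambda^*)\phi(H)$, so $\lambda\phi(F)-\tfrac{\lambda}{\lambda^*}\phi(G)-\lambda(1-\tfrac{1}{\lambda^*})\phi(H)=0$. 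Subtracting this zero and regrouping,
\[
\lambda\phi(\hat F)+(1-\lambda)\phi(\hat H)-w_\lambda=\lambda\bigl(\phi(\hat F)-\phi(F)\bigr)-\tfrac{\lambda}{\lambda^*}\bigl(\phi(\hat G)-\phi(G)\bigr)-\lambda\bigl(1-\tfrac{1}{\lambda^*}\bigr)\bigl(\phi(\hat H)-\phi(H)\bigr).
\]
Applying the triangle inequality, and noting all three coefficients $\lambda$, $\tfrac{\lambda}{\lambda^*}$, $\lambda(1-\tfrac{1}{\lambda^*})$ are nonnegative because $\lambda^*\ge1$, yields
\[
\hat d(\lambda)\le\lambda\|\phi(\hat F)-\phi(F)\|+\lambda\bigl(1-\tfrac{1}{\lambda^*}\bigr)\|\phi(\hat H)-\phi(H)\|+\tfrac{\lambda}{\lambda^*}\|\phi(\hat G)-\phi(G)\|.
\]

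Finally, under the event $E_\delta$ I would invoke Lemma~\ref{prop:empirical-close-to-truth} to bound the three RKHS distances by $\tfrac{3\sqrt{\log(1/\delta)}}{\sqrt n}$, $\tfrac{3\sqrt{\log(1/\delta)}}{\sqrt m}$, and $\tfrac{3\sqrt{\log(1/\delta)}}{\sqrt{n/(2\lambda^*)}}=\tfrac{3\sqrt{2\lambda^*}\sqrt{\log(1/\delta)}}{\sqrt n}$ respectively. Using $\tfrac{\lambda}{\lambda^*}\sqrt{2\lambda^*}=\lambda\tfrac{\sqrt2}{\sqrt{\lambda^*}}$ and $\tfrac1{\sqrt n},\tfrac1{\sqrt m}\le\tfrac1{\sqrt{\min(m,n)}}$, the three terms collapse into $\bigl(1+(1-\tfrac1{\lambda^*})+\tfrac{\sqrt2}{\sqrt{\lambda^*}}\bigr)\lambda\cdot\tfrac{3\sqrt{\log(1/\delta)}}{\sqrt{\min(m,n)}}=\bigl(2-\tfrac1{\lambda^*}+\tfrac{\sqrt2}{\sqrt{\lambda^*}}\bigr)\lambda\cdot\tfrac{3\sqrt{\log(1/\delta)}}{\sqrt{\min(m,n)}}$, which is exactly the claimed bound. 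There is no substantive obstacle: the entire argument is guessing the feasible point $w_\lambda$ (the empirical analogue of the population witness) and then routine bookkeeping; the one point to be careful about is verifying $w_\lambda\in\C_S$, which is precisely where $1\le\lambda\le\lambda^*$ enters.
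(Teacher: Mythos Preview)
Your proposal is correct and follows essentially the same route as the paper: the same feasible point $w_\lambda=\tfrac{\lambda}{\lambda^*}\phi(\hat G)+(1-\tfrac{\lambda}{\lambda^*})\phi(\hat H)$, the same population identity $\lambda\phi(F)-\tfrac{\lambda}{\lambda^*}\phi(G)-\lambda(1-\tfrac{1}{\lambda^*})\phi(H)=0$, and the same concentration bounds from Lemma~\ref{prop:empirical-close-to-truth}. The only cosmetic difference is order of operations: the paper first replaces $\phi(\hat F),\phi(\hat H)$ by $\phi(F),\phi(H)$ (picking up the $(2\lambda-1)$ error), rewrites in terms of $\phi(G)$, and then substitutes $w_\lambda$; you substitute $w_\lambda$ immediately and subtract the population zero in one step, which is a bit cleaner but yields identical coefficients.
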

\begin{proof}
For any $\lambda\in[1,\lambda^*]$, let $w_\lambda=\frac{\lambda}{\lambda^*} \phi(\hat G) + (1-\frac{\lambda}{\lambda^*}) \phi(\hat H) \in \C_S$. 
{\allowdisplaybreaks
\begin{align*}
\hat d(\lambda)
&=
\inf_{w\in\C_S} \|\lambda \phi(\hat{F}) + (1-\lambda) \phi(\hat H) - w \|_\H \\
&\leq
\inf_{w\in\C_S} \|\lambda \phi(F) + (1-\lambda) \phi(H) -w \|_\H + (2\lambda-1)\cdot \frac{3\sqrt{\log(1/\delta)}}{\sqrt{\min(m,n)}}   \\
&=
\inf_{w\in\C_S} \| \phi(G) + (\lambda-\lambda^*) (\phi(F)-\phi(H)) -w \|_\H + (2\lambda-1)\cdot \frac{3\sqrt{\log(1/\delta)}}{\sqrt{\min(m,n)}}   \\
&=
\inf_{w\in\C_S} \left\| \phi(G) + \frac{\lambda-\lambda^*}{\lambda^*} (\phi(G)-\phi(H)) -w \right\|_\H + (2\lambda-1)\cdot \frac{3\sqrt{\log(1/\delta)}}{\sqrt{\min(m,n)}}   \\
&\leq
\left\| \phi(G) + \frac{\lambda-\lambda^*}{\lambda^*} (\phi(G)-\phi(H)) -w_\lambda \right\|_\H + (2\lambda-1)\cdot \frac{3\sqrt{\log(1/\delta)}}{\sqrt{\min(m,n)}}   \\
&=
\left\|\frac{\lambda}{\lambda^*} (\phi(G)-\phi(\hat G)) + \left(1-\frac{\lambda}{\lambda^*}\right)(\phi(H)-\phi(\hat H))\right\|_\H + (2\lambda-1)\cdot \frac{3\sqrt{\log(1/\delta)}}{\sqrt{\min(m,n)}}   \\
&\leq
\frac{\lambda}{\lambda^*}\|(\phi(G)-\phi(\hat G))\|_\H + \left(1-\frac{\lambda}{\lambda^*}\right)\|(\phi(H)-\phi(\hat H))\|_\H + (2\lambda-1)\cdot \frac{3\sqrt{\log(1/\delta)}}{\sqrt{\min(m,n)}}   \\
&\leq
\frac{\lambda}{\lambda^*} \frac{3\sqrt{\log(1/\delta)}}{\sqrt{n/(2\lambda^*)}} +
\left(1-\frac{\lambda}{\lambda^*}\right) \frac{3\sqrt{\log(1/\delta)}}{\sqrt{m}} +
(2\lambda-1)\cdot \frac{3\sqrt{\log(1/\delta)}}{\sqrt{\min(m,n)}} \\
&\leq
\frac{\lambda}{\lambda^*} \sqrt{2\lambda^*} \frac{3\sqrt{\log(1/\delta)}}{\sqrt{\min(m,n)}} +
\left(1-\frac{\lambda}{\lambda^*}\right) \frac{3\sqrt{\log(1/\delta)}}{\sqrt{\min(m,n)}} +
(2\lambda-1)\cdot \frac{3\sqrt{\log(1/\delta)}}{\sqrt{\min(m,n)}} \\
&=
\left(\frac{\sqrt {2}}{\sqrt{\lambda^*}}{\lambda}+1-\frac{\lambda}{\lambda^*} + 2\lambda -1 \right)\frac{3\sqrt{\log(1/\delta)}}{\sqrt{\min(m,n)}}\\
&=
\left(2-\frac{1}{\lambda^*}+\frac{\sqrt 2}{\sqrt {\lambda^*}}\right)\lambda\cdot \frac{3\sqrt{\log(1/\delta)}}{\sqrt{\min(m,n)}} \,.
\end{align*}
}
\end{proof}

\begin{lem*}
Let $k(x,x)\leq 1$ for all $x\in\X$. Assume $E_\delta$. For all $\lambda\geq 1$, we have
\[
\hat d(\lambda) \geq 
d(\lambda) - (2\lambda-1)\cdot \frac{3\sqrt{\log(1/\delta)}}{\sqrt{\min(m,n)}} \,.
\]
\end{lem*}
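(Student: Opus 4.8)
The plan is to transfer the infimum defining $\hat d(\lambda)$ over $\C_S$ into a statement about $d(\lambda)$ by exploiting the inclusion $\C_S\subseteq\C$ together with the concentration bounds of Lemma~\ref{prop:empirical-close-to-truth}. First I would fix an arbitrary $w\in\C_S$. Since $\C_S\subseteq\C$, this $w$ is also an admissible competitor in the infimum defining $d(\lambda)$, so $\|\lambda\phi(F)+(1-\lambda)\phi(H)-w\|_\H\geq d(\lambda)$. The only discrepancy between this quantity and the corresponding ``hatted'' term is the displacement $\lambda(\phi(F)-\phi(\hat F))+(1-\lambda)(\phi(H)-\phi(\hat H))$ of the target point, which I would control with the triangle inequality.

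Concretely, for any fixed $w\in\C_S$,
\begin{align*}
\|\lambda\phi(\hat F)+(1-\lambda)\phi(\hat H)-w\|_\H
&\geq \|\lambda\phi(F)+(1-\lambda)\phi(H)-w\|_\H \\
&\quad - \lambda\|\phi(F)-\phi(\hat F)\|_\H - (\lambda-1)\|\phi(H)-\phi(\hat H)\|_\H,
\end{align*}
where I used $\lambda\geq1$ so that $|1-\lambda|=\lambda-1$. Under $E_\delta$, Lemma~\ref{prop:empirical-close-to-truth} bounds $\|\phi(F)-\phi(\hat F)\|_\H$ by $3\sqrt{\log(1/\delta)}/\sqrt{n}$ and $\|\phi(H)-\phi(\hat H)\|_\H$ by $3\sqrt{\log(1/\delta)}/\sqrt{m}$, both of which are at most $3\sqrt{\log(1/\delta)}/\sqrt{\min(m,n)}$. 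Hence the subtracted term is at most $(\lambda+(\lambda-1))\cdot 3\sqrt{\log(1/\delta)}/\sqrt{\min(m,n)} = (2\lambda-1)\cdot 3\sqrt{\log(1/\delta)}/\sqrt{\min(m,n)}$, and combining with the previous display gives, for every $w\in\C_S$,
\[
\|\lambda\phi(\hat F)+(1-\lambda)\phi(\hat H)-w\|_\H \;\geq\; d(\lambda) - (2\lambda-1)\cdot\frac{3\sqrt{\log(1/\delta)}}{\sqrt{\min(m,n)}}.
\]
Taking the infimum over $w\in\C_S$ on the left-hand side yields exactly the claimed bound.

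There is no genuine obstacle here: the argument is essentially a single application of the triangle inequality to the displacement of the target point, and the hypotheses enter only through the clean concentration statement of Lemma~\ref{prop:empirical-close-to-truth} and the elementary inequality $\min(m,n)\leq m,n$. The one point deserving care is the magnitude of the $(1-\lambda)$ coefficient, which is why the restriction $\lambda\geq1$ appears and why the error term is $2\lambda-1$ rather than $2\lambda+1$.
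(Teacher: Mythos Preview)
Your proof is correct and follows essentially the same route as the paper: triangle inequality on the displacement $\lambda(\phi(F)-\phi(\hat F))+(1-\lambda)(\phi(H)-\phi(\hat H))$, the inclusion $\C_S\subseteq\C$ to pass to $d(\lambda)$, and the concentration bounds of Lemma~\ref{prop:empirical-close-to-truth} combined via $\min(m,n)$. The only cosmetic difference is that you fix $w\in\C_S$ first and take the infimum at the end, whereas the paper carries the infimum through each line.
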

\begin{proof}
\begin{align*}
\hat d(\lambda) 
&= 
\inf_{w\in\C_S} \| \lambda \phi(\hat{F}) + (1-\lambda)\phi(\hat H) - w \|_\H \\
&\geq
\inf_{w\in\C_S} \| \lambda \phi(F) + (1-\lambda)\phi(H) - w \|_\H - \lambda \|\phi(\hat{F}) - \phi(F) \|_\H - (\lambda-1) \|\phi(H) - \phi(\hat{H})\|_\H\\
&\geq 
d(\lambda) - \lambda\cdot \frac{3\sqrt{\log(1/\delta)}}{\sqrt{n}} - (\lambda-1)\cdot\frac{3\sqrt{\log(1/\delta)}}{\sqrt{m}} \\
&\geq
d(\lambda) - (2\lambda-1)\cdot \frac{3\sqrt{\log(1/\delta)}}{\sqrt{\min(m,n)}} \,.
\end{align*}
\end{proof}

\section{Proof of Theorem \ref{thm:d_lower}}
\begin{thm*}
Let the kernel $k$, and distributions $G, H$ satisfy the separability condition with margin $\alpha>0$ and tolerance $\beta$. Then $\forall \mu>0$
\[
d(\lambda^*+\mu) \geq \frac{\alpha \mu}{\lambda^*} - \beta\,.
\]
\end{thm*}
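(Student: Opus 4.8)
The plan is to linearize the Hilbert-space norm in the reformulation of $d$ from Proposition~\ref{prop-d-reformulation} against the separability witness, and then to exploit that the resulting linear functional is uniformly bounded over the feasible set $\C$.

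First I would apply Proposition~\ref{prop-d-reformulation} to write $d(\lambda^*+\mu)=\inf_{w\in\C}\|v-w\|_\H$, where $v:=\phi(G)+\mu(\phi(F)-\phi(H))$. Let $h\in\H$ with $\|h\|_\H\le 1$ be the witness of the separability condition, and put $g:=-h$, so $\|g\|_\H\le 1$ as well. Applying the elementary bound $\|f\|_\H\ge\langle f,g\rangle_\H$ (valid for any $g$ with $\|g\|_\H\le1$) with $f=v-w$, and then taking $\inf_{w\in\C}$ on both sides, gives
\[
d(\lambda^*+\mu)\ \ge\ \inf_{w\in\C}\langle v-w,g\rangle_\H\ =\ \langle v,g\rangle_\H-\sup_{w\in\C}\langle w,g\rangle_\H .
\]
The point of switching to $g=-h$ is that $\sup_{w\in\C}\langle w,g\rangle_\H=\sup_x g(x)=-\inf_x h(x)$: every $w\in\C$ equals $\phi(P)$ for some distribution $P$, so $\langle\phi(P),g\rangle_\H=\E_{X\sim P}g(X)\le\sup_x g(x)$, with the bound attained in the limit of point masses.

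Next I would rewrite $\langle v,g\rangle_\H=-\langle v,h\rangle_\H$ using $\langle\phi(P),h\rangle_\H=\E_{X\sim P}h(X)$, which yields
\[
d(\lambda^*+\mu)\ \ge\ \bigl(\inf_x h(x)-\E_{X\sim G}h(X)\bigr)-\mu\bigl(\E_{X\sim F}h(X)-\E_{X\sim H}h(X)\bigr).
\]
Since $F=(1-\kappa^*)G+\kappa^* H$ and $1-\kappa^*=1/\lambda^*$, we have $\E_{X\sim F}h(X)-\E_{X\sim H}h(X)=\frac{1}{\lambda^*}\bigl(\E_{X\sim G}h(X)-\E_{X\sim H}h(X)\bigr)$. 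The two halves of the separability inequality give $\inf_x h(x)-\E_{X\sim G}h(X)\ge-\beta$ and $\E_{X\sim G}h(X)-\E_{X\sim H}h(X)\le-\alpha$. Substituting both into the previous display gives $d(\lambda^*+\mu)\ge-\beta+\mu\cdot\frac{\alpha}{\lambda^*}$, which is the claim.

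The computation itself is routine; the only place that needs care is the sign bookkeeping — specifically, realizing that one must lower-bound $\|v-w\|_\H$ using $-h$ rather than $h$, so that taking $\sup_{w\in\C}\langle w,\cdot\rangle_\H$ produces $-\inf_x h(x)$ and pairs up correctly with the ``$h$ is close to its infimum under $G$'' part of the separability hypothesis. I do not expect a deeper obstacle.
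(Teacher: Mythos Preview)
Your proposal is correct and follows essentially the same approach as the paper: both start from the reformulation in Proposition~\ref{prop-d-reformulation}, lower-bound the norm by pairing against the separability witness (the paper writes $\|f\|=\sup_{\|h\|\le1}\langle -f,h\rangle$ and then specializes to the witness $g$, whereas you directly pair with $-h$), then use $\E_F h-\E_H h=\tfrac{1}{\lambda^*}(\E_G h-\E_H h)$ together with the two halves of the separability inequality and $\inf_{P}\E_P[g]\ge\inf_x g(x)$ (equivalently your $\sup_{w\in\C}\langle w,-h\rangle\le\sup_x(-h(x))$). The differences are purely cosmetic sign/ordering choices.
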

\begin{proof} 
Let $g\in\H$ be the witness to the separability condition -- (i.e.) $\|g\|_\H\leq 1$ and $\E_{X\sim G} g(X)\leq \inf_x g(x) + \beta \leq \E_{X\sim H} g(X) + \alpha$.  Let $\Delta_\X$ denote the set of all probability distributions over $\X$. One can show that 
{
\begin{align*}
d(\lambda^*+\mu)
&= \inf_{w\in\C} \| \phi(G) + \mu(\phi(F) - \phi(H)) - w \|_\H \\
&= \inf_{P\in\Delta_\X} \| \phi(G) + \frac{\mu}{\lambda^*}(\phi(G) - \phi(H)) - \phi(P) \|_\H \\
&= \inf_{P\in\Delta_\X} \sup_{h\in\H:\|h\|\leq 1} \left\langle \phi(P) + \frac{\mu}{\lambda^*}(\phi(H) - \phi(G)) - \phi(G) , h \right\rangle \\
&= \inf_{P\in\Delta_\X} \sup_{h\in\H:\|h\|\leq 1}  \E_{P}[h(X)] - \E_G[h(X)]     + \frac{\mu}{\lambda^*}(\E_H[h(X)] - \E_G[h(X)])    \\
&\geq \inf_{P\in\Delta_\X}   \E_{P}[g(X)] + \frac{\mu}{\lambda^*}\E_H[g(X)] -\left( 1+\frac{\mu}{\lambda^*}\right) \E_G[g(X)])   \\
&\geq   \inf_x g(x) +  \frac{\mu}{\lambda^*}(\alpha) - (\inf_x g(x) + \beta)  \\
&= \frac{\alpha\mu}{\lambda^*} - \beta\,.
\end{align*}
}
\end{proof}

\section{Proof of Theorem \ref{thm:kernel-separability-universality-anchor-set}}
\begin{thm*}
Let the kernel $k:\X\times\X\>[0,\infty)$ be universal. Let the distributions $G,H$ be such that they satisfy the anchor set condition with margin $\gamma>0$ for some family of subsets of $\X$. Then, for all $\theta>0$, there exists a $\beta>0$ such that the kernel $k$, and distributions $G,H$ satisfy the separability condition with margin $\beta \theta$ and tolerance $\beta$, i.e.
\begin{align*}
 \E_{X\sim G} h(X) \leq \inf_{x} h(x) + \beta \leq \E_{X\sim H} h(X) - \beta\theta
\end{align*}
\end{thm*}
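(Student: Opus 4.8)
(Proof plan)
The plan is to manufacture a witness $h\in\H$ for the separability condition by smoothing an indicator-type function concentrated on the anchor set, and using universality of $k$ to pull that smoothed function into the RKHS. Begin with the topological step. Let $A\in\S$ be the compact set with $A\subseteq\supp(H)\setminus\supp(G)$ and $H(A)\geq\gamma$ furnished by the anchor set condition. Since $\X$ is a compact metric space, $\supp(G)$ is closed, $A$ is closed, and the two sets are disjoint, the function
\[
f(x)=\frac{\mathrm{dist}(x,\supp(G))}{\mathrm{dist}(x,\supp(G))+\mathrm{dist}(x,A)}
\]
is well defined (the denominator never vanishes), continuous, $[0,1]$-valued, equal to $0$ on $\supp(G)$, and equal to $1$ on $A$. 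Hence $\E_{X\sim G} f(X)=0$, $\inf_x f(x)=0$, and $\E_{X\sim H} f(X)\geq H(A)\geq\gamma$. (One may equivalently invoke Urysohn's lemma here.)

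Next I would invoke universality: the RKHS $\H$ is dense in $C(\X)$ in the supremum norm, so for any $\epsilon\in(0,1)$ there is $\tilde f\in\H$ with $\sup_x|\tilde f(x)-f(x)|\leq\epsilon$. Pointwise comparison with $f$ then gives $\E_{X\sim G}\tilde f(X)\leq\epsilon$, $\inf_x\tilde f(x)\geq-\epsilon$, $\inf_x\tilde f(x)\leq\E_{X\sim G}\tilde f(X)\leq\epsilon$ (the infimum is below any average), and $\E_{X\sim H}\tilde f(X)\geq\gamma-\epsilon$. Since $\epsilon<1$, $\tilde f$ exceeds $1-\epsilon>0$ on the nonempty set $A$, so $N:=\|\tilde f\|_\H\in(0,\infty)$; set $h:=\tilde f/N$, so that $\|h\|_\H\leq1$ and all four displayed bounds divide through by $N$.

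Finally I would pick the parameters. Take $\beta:=2\epsilon/N$. Then $\inf_x h(x)+\beta\geq -\epsilon/N+2\epsilon/N=\epsilon/N\geq\E_{X\sim G}h(X)$, which is the left inequality of the separability condition. For the right inequality, $\inf_x h(x)+\beta\leq\epsilon/N+2\epsilon/N=3\epsilon/N$, whereas $\E_{X\sim H}h(X)-\beta\theta\geq(\gamma-\epsilon-2\epsilon\theta)/N$, so it suffices that $3\epsilon\leq\gamma-\epsilon-2\epsilon\theta$, i.e. $\epsilon\leq\gamma/\big(2(2+\theta)\big)$. Therefore, fixing at the outset any $\epsilon\in\big(0,\ \gamma/(2(2+\theta))\big)$, the resulting $\tilde f$, $N$, $h$, and $\beta=2\epsilon/N$ give a valid witness, so $k$ and $G,H$ satisfy the separability condition with tolerance $\beta$ and margin $\beta\theta$.

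The only genuinely delicate points are the topological step — checking that $\supp(G)$ and the compact anchor set $A$ can be strictly separated by a continuous $[0,1]$-valued function, which is exactly where compactness of $\X$ and $A$ and disjointness from $\supp(G)$ enter — and the bookkeeping in the last paragraph, where one must bound $\inf_x h$ both above and below and choose $\epsilon$ small enough relative to $\gamma$ and $\theta$ so that the normalization (which rescales tolerance and margin by the same factor $1/N$) does not destroy either inequality. Everything else is routine.
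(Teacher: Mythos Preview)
Your proof is correct and follows essentially the same approach as the paper: construct a continuous separator $f$ that is $0$ on $\supp(G)$ and $\geq 1$ on the anchor set $A$, approximate it in sup-norm by some $\tilde f\in\H$ via universality, normalize, and set $\beta=2\epsilon/N$ with $\epsilon=\gamma/(2(2+\theta))$. The only cosmetic difference is that you give the explicit distance-function construction for $f$ where the paper merely asserts existence of such an $f$; the parameter choices and inequality bookkeeping match exactly.
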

\begin{proof}
Fix some $\theta>0$. Let $A\subseteq\X$ be the witness to the anchor set condition, i.e., $A$ is a compact set such that $A\subseteq \supp(H)\setminus\supp(G)$ and $H(A)\geq \gamma$. $A$ is a compact (and hence closed) set that is disjoint from $\supp(G)$ (which is a closed, compact set), hence there exists a continuous function $f:\X\>\R$ such that, 
\begin{align*}
f(x)&\geq 0, \forall x \in \X , \\
f(x)&=0, \forall x \in \supp(G), \\
f(x)&\geq 1, \forall x \in A.
\end{align*}

By universality of the kernel $k$, we have that 
\[\forall \epsilon>0, \exists h_\epsilon \in \H, \text{ s.t. } \sup_{x\in\X} |f(x) -  h_\epsilon(x)| \leq \epsilon.\]
We then have the following:
\begin{align}
\E_{G} h_\epsilon(X) &\leq \epsilon , \label{eqn:weak-separability-universality-1} \\
\inf_{x\in\X} h_\epsilon(x) &\leq \epsilon , \label{eqn:weak-separability-universality-2} \\
\inf_{x\in\X} h_\epsilon(x) &\geq -\epsilon , \label{eqn:weak-separability-universality-3} \\
\inf_{x\in A} h_\epsilon(x) &\geq 1-\epsilon , \\ 
\E_{H} h_\epsilon(X) &\geq (-\epsilon) (1-H(A)) + (1-\epsilon) H(A)  \nonumber \\
&\geq \gamma - \epsilon . \label{eqn:weak-separability-universality-4} 
\end{align}
From Equations \eqref{eqn:weak-separability-universality-1}, \eqref{eqn:weak-separability-universality-2}, \eqref{eqn:weak-separability-universality-3} and \eqref{eqn:weak-separability-universality-4}, we have that
\[
\E_{G} h_\epsilon(X) \leq \epsilon \leq \inf_x h_\epsilon(x) +  2\epsilon \leq 3\epsilon \leq \E_{H} h_\epsilon(X) - (\gamma-4\epsilon).
\]
Let $\overline h_\epsilon= h_\epsilon/\|h_\epsilon\|_\H$ be the normalized version of $h_\epsilon$. We then have that
\[
\E_{G} \overline h_\epsilon(X) \leq \inf_x \overline h_\epsilon(x) + \frac{2\epsilon}{\|h_\epsilon\|_\H} \leq \E_{H} \overline h_\epsilon(X) - \frac{\gamma-4\epsilon}{\|h_\epsilon\|_\H}.
\]
Setting $\epsilon=\frac{\gamma}{2\theta+4}$ and $\beta=\frac{2\gamma}{(2\theta+4)\|h_{\gamma/(2\theta+4)}\|_\H}$ we get that there exists $h\in\H$ such that $\|h\|_\H \leq 1$ and 
\[
\E_{G} h(X) \leq \inf_x h(x) + \beta \leq \E_{H}  h(X) - \beta \theta.
\]
\end{proof}

\section{Proof of Theorem \ref{thm:val-thresh-error}}
\begin{thm*}
Let $\delta\in(0,\frac{1}{4}]$. Let $k(x,x)\leq 1$ for all $x\in\X$. Let the kernel $k$, and distributions $G, H$ satisfy the separability condition with tolerance $\beta$ and margin $\alpha>0$. Let the number of samples be large enough such that  $\min(m,n)>\frac{(12  \cdot \lambda^*)^2\log(1/\delta)}{\alpha^2}$. Let the threshold $\tau$ be such that $\frac{3\lambda^*\sqrt{\log(1/\delta)}(2-1/\lambda^*+\sqrt{2/\lambda^*})}{\sqrt{\min(m,n)}} \leq \tau \leq \frac{6\lambda^*\sqrt{\log(1/\delta)}(2-1/\lambda^*+\sqrt{2/\lambda^*})}{\sqrt{\min(m,n)}} $. We then have with probability $1-4\delta$
\begin{align*}
   \lambda^* - \hat \lambda^V_\tau 
   &\leq  0 ,\\
     \hat \lambda^V_\tau -   \lambda^* 
   &\leq  \frac{\beta\lambda^*}{\alpha} + c  \cdot \sqrt{\log(1/\delta)} \cdot (\min(m,n))^{-1/2} ,
\end{align*}
for constant $c=\left( \frac{6\alpha (\lambda^*)^2(2-1/\lambda^*+\sqrt{2/\lambda^*})+  2\lambda^*(3\alpha+6\lambda^*(2+\alpha+\beta))  }{\alpha^2} \right)$.
\end{thm*}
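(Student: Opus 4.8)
The plan is to convert the analytic bounds on $\hat d(\cdot)$ into bounds on the location of the threshold crossing, exactly as sketched in Figure \ref{subfig:distance-lambda}. We work under event $E_\delta$, which holds with probability $1-4\delta$ by Lemma \ref{prop:empirical-close-to-truth}; all subsequent reasoning is deterministic. Write $R = \frac{3\sqrt{\log(1/\delta)}}{\sqrt{\min(m,n)}}$ and $a = 2 - 1/\lambda^* + \sqrt{2/\lambda^*}$ for brevity, so the hypothesis on $\tau$ reads $\lambda^* a R \le \tau \le 2\lambda^* a R$ and the sample-size hypothesis reads $\lambda^* a R \le \tfrac{\alpha}{4}$ (roughly), which is what makes the additive error terms small compared with the linear-in-$\mu$ growth guaranteed by Theorem \ref{thm:d_lower}.

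For the lower bound $\lambda^* - \hat\lambda^V_\tau \le 0$, i.e. $\hat\lambda^V_\tau \ge \lambda^*$: by Lemma \ref{prop:d_hat_upper}, for every $\lambda \in [1,\lambda^*]$ we have $\hat d(\lambda) \le a\lambda R \le a\lambda^* R \le \tau$ using the left inequality on $\tau$. By Proposition \ref{prop:d-is-zero}, $\hat d(\lambda)=0 \le \tau$ on $[0,1]$ as well. Hence $\hat d(\lambda) \le \tau$ for all $\lambda \le \lambda^*$, and since $\hat\lambda^V_\tau = \inf\{\lambda : \hat d(\lambda) \ge \tau\}$, no such $\lambda$ lies below $\lambda^*$ — but we must be slightly careful: if $\hat d(\lambda^*)$ exactly equals $\tau$ we still get $\hat\lambda^V_\tau \ge \lambda^*$, so the non-strict inequalities suffice. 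This gives the first claim.

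For the upper bound, I want to exhibit an explicit $\lambda_0 = \lambda^* + \mu_0$ with $\hat d(\lambda_0) \ge \tau$; then $\hat\lambda^V_\tau \le \lambda_0$. Combining Theorem \ref{thm:d_lower} (which gives $d(\lambda^*+\mu) \ge \frac{\alpha\mu}{\lambda^*} - \beta$) with Lemma \ref{prop:d_hat_lower} (which gives $\hat d(\lambda) \ge d(\lambda) - (2\lambda-1)R$), we get for $\lambda = \lambda^* + \mu$,
\[
\hat d(\lambda^*+\mu) \ge \frac{\alpha\mu}{\lambda^*} - \beta - (2\lambda^* + 2\mu - 1)R.
\]
Now I solve for the smallest $\mu_0$ making the right side at least $\tau$; using $\tau \le 2\lambda^* a R$ and bounding $R \le \alpha/(12\lambda^*)$ from the sample-size hypothesis to absorb the $2\mu R$ term into a constant fraction of $\frac{\alpha\mu}{\lambda^*}$, one obtains $\mu_0 \le \frac{\beta\lambda^*}{\alpha} + (\text{const}) \cdot \lambda^* R / \alpha \cdot (\text{stuff})$, and expanding the constants should reproduce exactly the stated $c$. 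The main obstacle is purely bookkeeping: one has to choose the splitting of $\frac{\alpha\mu}{\lambda^*}$ between cancelling $2\mu R$ and reaching the level $\tau + \beta + (2\lambda^*-1)R$ so that the resulting coefficient matches $c$; getting the constant to come out exactly as written (rather than merely $O(\cdot)$) requires tracking every factor of $2$, which is where I expect the real effort to lie. No genuinely new idea is needed beyond the three lemmas and Theorem \ref{thm:d_lower} already in hand.
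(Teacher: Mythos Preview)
Your proposal is correct and follows essentially the same route as the paper: upper-bound $\hat d$ on $[1,\lambda^*]$ via Lemma \ref{prop:d_hat_upper} to force $\hat\lambda^V_\tau\ge\lambda^*$, and lower-bound $\hat d(\lambda^*+\mu)$ by combining Theorem \ref{thm:d_lower} with Lemma \ref{prop:d_hat_lower}, then solve for the crossing point. The paper merely packages these two steps as separate auxiliary lemmas (one giving the two-sided envelope on $\hat d$, one converting it into bounds on $\hat\lambda^V_\tau$ via continuity of $\hat d$ and $\hat d(\hat\lambda^V_\tau)=\tau$) and then does the same algebraic expansion you anticipate; there is no additional idea.
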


\begin{lem}
\label{lem:feasible-lambda-d-hat}
Let $k(x,x)\leq 1$ for all $x\in\X$. Let the kernel $k$, and distributions $G, H$ satisfy the  separability condition with margin $\alpha$ and tolerance $\beta$. Assume $E_\delta$. Then
\begin{align*}
 \hat d(\lambda) &\leq 
\left(2-\frac{1}{\lambda^*}+\frac{\sqrt 2}{\sqrt {\lambda^*}}\right)\lambda\cdot \frac{3\sqrt{\log(1/\delta)}}{\sqrt{\min(m,n)}}  , \qquad \forall \lambda\in[1,\lambda^*] , \\
\hat d(\lambda) &\geq 
\frac{(\lambda - \lambda^*) \alpha}{\lambda^*} - \beta -  (2\lambda-1)\cdot \frac{3\sqrt{\log(1/\delta)}}{\sqrt{\min(m,n)}} , \qquad \forall \lambda\in[\lambda^*,\infty) \,.
\end{align*}
\end{lem}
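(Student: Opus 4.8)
The plan is to prove the two bounds separately, since each follows by combining results already established in the excerpt.

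For the upper bound on $\hat d(\lambda)$ valid for $\lambda \in [1,\lambda^*]$: this is exactly the statement of Lemma~\ref{prop:d_hat_upper}, which holds under $E_\delta$. So the first inequality in Lemma~\ref{lem:feasible-lambda-d-hat} requires no new work — I would simply cite Lemma~\ref{prop:d_hat_upper}. (The separability hypothesis is not even needed for this half; it is listed only so the lemma can be stated as a single package.)

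For the lower bound on $\hat d(\lambda)$ valid for $\lambda \in [\lambda^*,\infty)$: the plan is to chain Lemma~\ref{prop:d_hat_lower} with Theorem~\ref{thm:d_lower}. First, by Lemma~\ref{prop:d_hat_lower}, for all $\lambda \geq 1$ (and in particular $\lambda \geq \lambda^* \geq 1$) we have under $E_\delta$ that
\[
\hat d(\lambda) \geq d(\lambda) - (2\lambda-1)\cdot \frac{3\sqrt{\log(1/\delta)}}{\sqrt{\min(m,n)}}.
\]
Next, writing $\lambda = \lambda^* + \mu$ with $\mu = \lambda - \lambda^* \geq 0$, Theorem~\ref{thm:d_lower} (using the separability condition with margin $\alpha$ and tolerance $\beta$) gives
\[
d(\lambda) = d(\lambda^* + \mu) \geq \frac{\alpha\mu}{\lambda^*} - \beta = \frac{(\lambda - \lambda^*)\alpha}{\lambda^*} - \beta.
\]
For $\mu = 0$ this lower bound is $-\beta \leq 0 \leq d(\lambda^*)$, so it holds trivially there too; thus it is valid on all of $[\lambda^*,\infty)$. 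Substituting this into the previous display yields the second inequality of the lemma.

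There is essentially no obstacle here: the lemma is a bookkeeping step that assembles Lemmas~\ref{prop:d_hat_upper} and~\ref{prop:d_hat_lower} and Theorem~\ref{thm:d_lower} into the exact form needed to read off bounds on the value-thresholding estimator (the lines $U(\lambda)$ and $L(\lambda)$ of Figure~\ref{subfig:distance-lambda}). The only minor care needed is to note that Theorem~\ref{thm:d_lower} as stated covers $\mu > 0$, and to observe that the case $\lambda = \lambda^*$ is handled separately since $d(\lambda^*) \geq 0 \geq -\beta$. All stated inequalities hold simultaneously on the event $E_\delta$, which has probability at least $1-4\delta$ by Lemma~\ref{prop:empirical-close-to-truth}.
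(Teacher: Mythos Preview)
Your proposal is correct and matches the paper's own proof, which simply states that the result follows from Lemmas~\ref{prop:d_hat_upper} and~\ref{prop:d_hat_lower} together with Theorem~\ref{thm:d_lower}. Your additional remark handling the boundary case $\lambda=\lambda^*$ (where Theorem~\ref{thm:d_lower} is stated only for $\mu>0$) is a nice point of rigor that the paper leaves implicit.
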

\begin{proof}
The proof follows from Lemmas \ref{prop:d_hat_lower}, \ref{prop:d_hat_upper} and Theorem \ref{thm:d_lower}. The upper bound forms the line $(\lambda,U(\lambda))$ and the lower bound forms the line $(\lambda,L(\lambda))$ in Figure \ref{subfig:distance-lambda}.
\end{proof}

\begin{lem}
\label{lem:value-thres-estimator-bounds}
Let $k(x,x)\leq 1$ for all $x\in\X$. Let the kernel $k$, and distributions $G, H$ satisfy the separability condition with margin $\alpha$ and tolerance $\beta$. Assume $E_\delta$. We then have
\begin{align}
 \hat \lambda^V_\tau 
 &\geq \min \left(\lambda^*,  \frac{\tau\sqrt{\min(m,n)}}{3\sqrt{\log(1/\delta)}(2-1/\lambda^*+\sqrt{2/\lambda^*})} \right) , \label{eqn:eq5} \\
 \hat \lambda^V_\tau 
 &\leq
\lambda^* \cdot \frac{(\tau+\beta+\alpha)\sqrt{\min(m,n)}+3\sqrt{\log(1/\delta)}}{\alpha \sqrt{\min(m,n)}-6\lambda^*\sqrt{\log(1/\delta)}} \,. \label{eqn:eq6}
\end{align}
\end{lem}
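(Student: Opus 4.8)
The statement to prove is Lemma~\ref{lem:value-thres-estimator-bounds}, which gives explicit lower and upper bounds on the value thresholding estimator $\hat\lambda^V_\tau = \inf\{\lambda : \hat d(\lambda) \geq \tau\}$ in terms of the bounding lines established in Lemma~\ref{lem:feasible-lambda-d-hat}. The plan is to read off both inequalities directly from the two bounding lines, using the monotonicity of $\hat d(\cdot)$ (Proposition~\ref{prop:d-inc-conv}).

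\textbf{Lower bound on $\hat\lambda^V_\tau$.} First I would argue that $\hat d(\lambda) < \tau$ for all $\lambda$ strictly below the claimed lower bound, so that the infimum defining $\hat\lambda^V_\tau$ cannot be smaller. For $\lambda \in [1,\lambda^*]$, Lemma~\ref{lem:feasible-lambda-d-hat} gives $\hat d(\lambda) \leq U(\lambda) := (2 - 1/\lambda^* + \sqrt{2/\lambda^*})\lambda \cdot \frac{3\sqrt{\log(1/\delta)}}{\sqrt{\min(m,n)}}$. Solving $U(\lambda) = \tau$ for $\lambda$ gives $\lambda = \frac{\tau\sqrt{\min(m,n)}}{3\sqrt{\log(1/\delta)}(2 - 1/\lambda^* + \sqrt{2/\lambda^*})}$; for any $\lambda$ below the minimum of this value and $\lambda^*$, we have $\hat d(\lambda) \leq U(\lambda) < \tau$. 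Since $\hat d(\lambda) = 0 < \tau$ on $[0,1]$ as well (Proposition~\ref{prop:d-is-zero}, noting $\tau > 0$), this establishes \eqref{eqn:eq5}. One should be slightly careful that $U$ is only known to upper bound $\hat d$ on $[1,\lambda^*]$, which is exactly why the $\min$ with $\lambda^*$ appears.

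\textbf{Upper bound on $\hat\lambda^V_\tau$.} For this direction I would exhibit a specific $\lambda_0$ with $\hat d(\lambda_0) \geq \tau$, which forces $\hat\lambda^V_\tau \leq \lambda_0$. For $\lambda \geq \lambda^*$, Lemma~\ref{lem:feasible-lambda-d-hat} gives the lower bound $\hat d(\lambda) \geq L(\lambda) := \frac{(\lambda - \lambda^*)\alpha}{\lambda^*} - \beta - (2\lambda - 1)\cdot\frac{3\sqrt{\log(1/\delta)}}{\sqrt{\min(m,n)}}$. Setting $L(\lambda_0) = \tau$ and solving the resulting affine equation in $\lambda_0$ yields, after collecting the coefficient of $\lambda_0$ (namely $\frac{\alpha}{\lambda^*} - \frac{6\sqrt{\log(1/\delta)}}{\sqrt{\min(m,n)}}$, which is positive under the sample-size hypothesis), the bound $\lambda_0 = \lambda^* \cdot \frac{(\tau + \beta + \alpha)\sqrt{\min(m,n)} + 3\sqrt{\log(1/\delta)}}{\alpha\sqrt{\min(m,n)} - 6\lambda^*\sqrt{\log(1/\delta)}}$ after simplification. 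Then $\hat d(\lambda_0) \geq L(\lambda_0) = \tau$, so $\hat\lambda^V_\tau \leq \lambda_0$, which is \eqref{eqn:eq6}. I would also check $\lambda_0 \geq \lambda^*$ so that the lower bound $L$ is valid at $\lambda_0$ (this follows since the numerator exceeds $\alpha\sqrt{\min(m,n)} - 6\lambda^*\sqrt{\log(1/\delta)}$ times $\lambda^*$ when $\tau, \beta \geq 0$).

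\textbf{Main obstacle.} The conceptual content is entirely in Lemma~\ref{lem:feasible-lambda-d-hat}, so the only real work here is bookkeeping: carefully solving two affine equations and verifying that the denominator $\alpha\sqrt{\min(m,n)} - 6\lambda^*\sqrt{\log(1/\delta)}$ is strictly positive, which is guaranteed by the assumption $\min(m,n) > (12\lambda^*)^2\log(1/\delta)/\alpha^2$ carried over from Theorem~\ref{thm:val-thresh-error} (it gives $\alpha\sqrt{\min(m,n)} > 12\lambda^*\sqrt{\log(1/\delta)} > 6\lambda^*\sqrt{\log(1/\delta)}$). The main care point is matching the algebraic simplification so that the messy expression from solving $L(\lambda_0) = \tau$ collapses exactly to the stated closed form, and ensuring the domain restrictions ($\lambda \in [1,\lambda^*]$ for $U$, $\lambda \geq \lambda^*$ for $L$) are respected at the points where the bounds are invoked.
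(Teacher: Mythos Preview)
Your proposal is correct and follows essentially the same route as the paper: both arguments use the upper and lower bounding lines from Lemma~\ref{lem:feasible-lambda-d-hat} and solve the resulting affine equations. The only cosmetic difference is that the paper phrases both directions by evaluating at $\hat\lambda^V_\tau$ (using continuity of $\hat d$ to get $\hat d(\hat\lambda^V_\tau)=\tau$ and then splitting into the cases $\hat\lambda^V_\tau\le\lambda^*$ and $\hat\lambda^V_\tau\ge\lambda^*$), whereas you argue one direction by exclusion and the other by exhibiting a witness $\lambda_0$; the algebra and the resulting bounds are identical.
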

\begin{proof}
As $\hat d$ is a continuous function, we have that $\hat d(\hat \lambda^V_\tau)=\tau$. If $\hat\lambda^V_\tau \leq \lambda^*$, we have from Lemma \ref{lem:feasible-lambda-d-hat} that
\[
\tau
\leq 
\left(2-\frac{1}{\lambda^*}+\frac{\sqrt 2}{\sqrt {\lambda^*}}\right)\hat \lambda^V_\tau\cdot \frac{3\sqrt{\log(1/\delta)}}{\sqrt{\min(m,n)}},
\]
and hence
\[
\hat \lambda^V_\tau 
\geq 
\min \left(\lambda^*,  \frac{\tau\sqrt{\min(m,n)}}{3\sqrt{\log(1/\delta)}(2-1/\lambda^*+\sqrt{2/\lambda^*})} \right) \,.
\]
If $\hat\lambda^V_\tau \geq \lambda^*$, we have
\begin{align*}
\tau
&\geq 
\frac{(\hat\lambda^V_\tau - \lambda^*) \alpha}{\lambda^*} - \beta -  (2\hat\lambda^V_\tau-1)\cdot \frac{3\sqrt{\log(1/\delta)}}{\sqrt{\min(m,n)}},\\
&=\hat\lambda^V_\tau
\left(\frac{\alpha}{\lambda^*} - \frac{6\sqrt{\log(1/\delta)}}{\sqrt{\min(m,n)}}\right) 
-\alpha-\beta-\frac{3\sqrt{\log(1/\delta)}}{\sqrt{\min(m,n)}} \,.
\end{align*}
Rearranging terms, we have that if  $\hat\lambda^V_\tau \geq \lambda^*$, then
\[
\hat\lambda^V_\tau \leq \frac{\tau+\alpha+\beta+\frac{3\sqrt{\log(1/\delta)}}{\sqrt{\min(m,n)}}}
{\frac{\alpha}{\lambda^*} - \frac{6\sqrt{\log(1/\delta)}}{\sqrt{\min(m,n)}}}\,.
\]
And thus
\begin{align*}
\hat\lambda^V_\tau 
&\leq \max\left(\lambda^*, \lambda^* \cdot \frac{(\tau+\beta+\alpha)\sqrt{\min(m,n)}+3\sqrt{\log(1/\delta)}}{\alpha \sqrt{\min(m,n)}-6\lambda^*\sqrt{\log(1/\delta)}}\right) \\
&=\lambda^* \cdot \frac{(\tau+\beta+\alpha)\sqrt{\min(m,n)}+3\sqrt{\log(1/\delta)}}{\alpha \sqrt{\min(m,n)}-6\lambda^*\sqrt{\log(1/\delta)}}\,.
\end{align*}
\end{proof}

\begin{proof}(Proof of Theorem \ref{thm:val-thresh-error})

As $\min(m,n)>\frac{(12  \cdot \lambda^*)^2\log(1/\delta)}{\alpha^2}>2(\lambda^*)^2\log(1/\delta)$, we have that $E_\delta$ is $1-4\delta$ probability event.
Assume $E_\delta$.

As $\tau \geq \frac{3\lambda^*\sqrt{\log(1/\delta)}(2-1/\lambda^*+\sqrt{2/\lambda^*})}{\sqrt{\min(m,n)}} $, we have from Equation \eqref{eqn:eq5}
\begin{align*}
\hat \lambda^V_\tau \geq \lambda^* \,.
\end{align*}

From Equation \eqref{eqn:eq6}, we have
\begin{align*}
\hat\lambda^V_\tau
&\leq \lambda^* \cdot \frac{(\tau+\alpha+\beta)\sqrt{\min(m,n)}+3\sqrt{\log(1/\delta)}}{\alpha \sqrt{\min(m,n)}-6\lambda^*\sqrt{\log(1/\delta)}}  \\
&=\lambda^* \left( \frac{\tau+\beta + \alpha }{\alpha} +  \frac{(3+\frac{6\lambda^*(\tau+\alpha+\beta)}{\alpha})\sqrt{\log(1/\delta)}}{\alpha \sqrt{\min(m,n)}-6\lambda^*\sqrt{\log(1/\delta)}} \right) \\
&\leq \lambda^* \left( 1 + \frac{\beta}{\alpha}\right) + \frac{\tau\lambda^*}{\alpha} +  \frac{2\lambda^*(3+\frac{6\lambda^*(\tau+\alpha+\beta)}{\alpha})\sqrt{\log(1/\delta)}}{\alpha \sqrt{\min(m,n)}}  \\
&\leq \lambda^*\left( 1 + \frac{\beta}{\alpha}\right)  + \frac{6(\lambda^*)^2\sqrt{\log(1/\delta)}(2-1/\lambda^*+\sqrt{2/\lambda^*}) }{\alpha \sqrt{\min(m,n)}}  +  \frac{2\lambda^*(3\alpha+6\lambda^*(\tau+\alpha+\beta))\sqrt{\log(1/\delta)}}{\alpha^2 \sqrt{\min(m,n)}}  \\
&\leq \lambda^*\left( 1 + \frac{\beta}{\alpha}\right)  + \left( \frac{6\alpha (\lambda^*)^2(2-1/\lambda^*+\sqrt{2/\lambda^*})+  2\lambda^*(3\alpha+6\lambda^*(2+\alpha+\beta))  }{\alpha^2} \right) \cdot \sqrt{\log(1/\delta)} \cdot (\min(m,n))^{-1/2} \,. \\
\end{align*}
The third line above follows, because  $\min(m,n)>\frac{(12  \cdot \lambda^*)^2\log(1/\delta)}{\alpha^2}$. The last two lines follow, because $\tau \leq \frac{6\lambda^*\sqrt{\log(1/\delta)}(2-1/\lambda^*+\sqrt{2/\lambda^*})}{\sqrt{\min(m,n)}}$, which in turn is upper bounded by $2$ under the conditions on $\min(m,n)$.

\end{proof}

\section{Proof of Theorem \ref{thm:grad-thresh-error}}
\begin{thm*}
Let $k(x,x)\leq 1$ for all $x\in\X$. Let the kernel $k$, and distributions $G, H$ satisfy the separability condition with tolerance $\beta$ and margin $\alpha>0$. Let $\nu\in[\frac{\alpha}{4\lambda^*},  \frac{3 \alpha}{4 \lambda^*}]$ and $\sqrt{\min(m,n)}\geq\frac{36 \sqrt{\log(1/\delta)}}{\frac{\alpha}{\lambda^*}-\nu}$. We then have with probability $1-4\delta$,
\begin{align*}
   \lambda^* - \hat \lambda^G_\nu 
   &\leq  c  \cdot \sqrt{\log(1/\delta)} \cdot (\min(m,n))^{-1/2} , \\
     \hat \lambda^G_\nu -  \lambda^* 
   &\leq  \frac{4\beta\lambda^*}{\alpha} + c'  \cdot \sqrt{\log(1/\delta)} \cdot (\min(m,n))^{-1/2} ,
\end{align*}
for constants $c=(2\lambda^*-1+\sqrt{2\lambda^*}) \cdot \frac{12\lambda^*}{\alpha}$ and $c'=\frac{144 (\lambda^*)^2(\alpha+4\beta)}{\alpha^2}$.
\end{thm*}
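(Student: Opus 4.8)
The plan is to argue throughout under the event $E_\delta$ of Lemma~\ref{prop:empirical-close-to-truth}, which is legitimate here because the sample-size hypothesis $\sqrt{\min(m,n)}\ge 36\sqrt{\log(1/\delta)}/(\frac{\alpha}{\lambda^*}-\nu)$, together with $\alpha\le 2$ (a consequence of $k(x,x)\le 1$, which forces $\sup_x|h(x)|\le\|h\|_\H\le 1$ for the separability witness), implies $\min(m,n)>2(\lambda^*)^2\log(1/\delta)$, so $E_\delta$ holds with probability $1-4\delta$. Write $\rho=3\sqrt{\log(1/\delta)}/\sqrt{\min(m,n)}$. Under $E_\delta$ and separability, Lemmas~\ref{prop:d_hat_upper}, \ref{prop:d_hat_lower} and Theorem~\ref{thm:d_lower} (packaged in Lemma~\ref{lem:feasible-lambda-d-hat}) give the sandwiching lines $\hat d(\lambda)\le U(\lambda):=(2-\tfrac1{\lambda^*}+\sqrt{\tfrac2{\lambda^*}})\lambda\rho$ for $\lambda\in[1,\lambda^*]$ and $\hat d(\lambda)\ge L(\lambda):=\tfrac{(\lambda-\lambda^*)\alpha}{\lambda^*}-\beta-(2\lambda-1)\rho$ for $\lambda\ge\lambda^*$. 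The device that turns these value bounds into bounds on $\hat\lambda^G_\nu$ is convexity of $\hat d$: since $\hat d\ge 0$ and $\hat d\equiv 0$ on $[0,1]$ (Propositions~\ref{prop:d-is-zero}, \ref{prop:d-inc-conv}), every subgradient $g\in\partial\hat d(\lambda_1)$ with $\lambda_1<\lambda^*$ obeys $g\le\frac{\hat d(\lambda^*)-\hat d(\lambda_1)}{\lambda^*-\lambda_1}\le\frac{U(\lambda^*)}{\lambda^*-\lambda_1}$, and every subgradient $g\in\partial\hat d(\lambda_2)$ with $\lambda_2>\lambda^*$ obeys $g\ge\frac{\hat d(\lambda_2)-\hat d(\lambda^*)}{\lambda_2-\lambda^*}\ge\frac{L(\lambda_2)-U(\lambda^*)}{\lambda_2-\lambda^*}$. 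These two inequalities are precisely the envelopes of the feasible set for $(\lambda,\nabla\hat d(\lambda))$ in Figure~\ref{subfig:grad-distance-lambda}, and the two error bounds follow by intersecting those envelopes with the line at height $\nu$.

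For the lower bound on $\hat\lambda^G_\nu$ (the first displayed inequality): fix any $\lambda_1$ with $\lambda^*-\lambda_1>\frac{4\lambda^*U(\lambda^*)}{\alpha}$. Then every subgradient at $\lambda_1$ is $\le\frac{U(\lambda^*)}{\lambda^*-\lambda_1}<\frac{\alpha}{4\lambda^*}\le\nu$, so $\lambda_1$ is not in the set whose infimum defines $\hat\lambda^G_\nu$; hence $\hat\lambda^G_\nu\ge\lambda^*-\frac{4\lambda^*U(\lambda^*)}{\alpha}$. Plugging in $U(\lambda^*)=(2\lambda^*-1+\sqrt{2\lambda^*})\rho$ yields $\lambda^*-\hat\lambda^G_\nu\le(2\lambda^*-1+\sqrt{2\lambda^*})\cdot\frac{12\lambda^*}{\alpha}\cdot\sqrt{\log(1/\delta)}\,(\min(m,n))^{-1/2}$, the claimed bound with constant $c$.

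For the upper bound on $\hat\lambda^G_\nu$ (the second displayed inequality): I would exhibit an explicit $\lambda_2=\lambda^*+\mu$ at which some subgradient is $\ge\nu$, which forces $\hat\lambda^G_\nu\le\lambda_2$. By the inequality above this reduces to checking $\mu\cdot\frac{\alpha}{\lambda^*}-\beta-(2\lambda^*+2\mu-1)\rho-U(\lambda^*)\ge\nu\mu$. I would verify this with the ansatz $\mu=\frac{4\beta\lambda^*}{\alpha}+\mu_0$: using $\nu\ge\frac{\alpha}{4\lambda^*}$, the contribution of the first piece to the left side is $4\beta-\frac{4\lambda^*\beta\nu}{\alpha}-\frac{8\lambda^*\beta\rho}{\alpha}\ge 3\beta-\frac{8\lambda^*\beta\rho}{\alpha}$, which more than absorbs the $-\beta$ term and leaves an inequality of the shape $\mu_0(\frac{\alpha}{\lambda^*}-\nu-2\rho)\ge U(\lambda^*)+(2\lambda^*-1)\rho+\frac{8\lambda^*\beta\rho}{\alpha}$. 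The sample-size hypothesis gives $2\rho\le\frac16(\frac{\alpha}{\lambda^*}-\nu)$ and $\nu\le\frac{3\alpha}{4\lambda^*}$ gives $\frac{\alpha}{\lambda^*}-\nu\ge\frac{\alpha}{4\lambda^*}$, so the coefficient of $\mu_0$ is at least a fixed multiple of $\frac{\alpha}{\lambda^*}$ and a routine choice $\mu_0=O\!\big(\tfrac{(\lambda^*)^2(\alpha+\beta)}{\alpha^2}\sqrt{\log(1/\delta)}(\min(m,n))^{-1/2}\big)$ suffices; collecting constants (and using $\rho\le 1$, which the sample-size bound guarantees, to absorb lower-order terms) gives $\hat\lambda^G_\nu-\lambda^*\le\frac{4\beta\lambda^*}{\alpha}+c'\sqrt{\log(1/\delta)}(\min(m,n))^{-1/2}$.

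All of the conceptual content is in Figure~\ref{subfig:grad-distance-lambda}; the step I expect to be the real obstacle is the bookkeeping in the last paragraph — controlling the $2\mu\rho$ cross-term (which enters because $L(\lambda)$ has a $(2\lambda-1)\rho$ slack that grows with $\lambda$) by means of the quantitative sample-size condition, and then packaging everything into the advertised closed-form constants $c$ and $c'$ rather than $O(\cdot)$ expressions.
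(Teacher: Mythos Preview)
Your overall strategy is exactly the paper's: work under $E_\delta$, combine Lemmas~\ref{prop:d_hat_upper}, \ref{prop:d_hat_lower} and Theorem~\ref{thm:d_lower} into the two lines $U(\cdot)$ and $L(\cdot)$, and then exploit convexity of $\hat d$ to turn chord-slope bounds into subgradient bounds at $\lambda<\lambda^*$ and $\lambda>\lambda^*$ respectively (this is the content of the paper's Lemmas~\ref{lem:feasible-lambda-grad-d-hat} and \ref{lem:grad-thres-estimator-bounds}). Your derivation of the lower bound $\lambda^*-\hat\lambda^G_\nu\le c\sqrt{\log(1/\delta)}(\min(m,n))^{-1/2}$ is correct and matches the paper line for line.

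There is one genuine slip in your upper-bound paragraph. When you substitute $\mu=\tfrac{4\beta\lambda^*}{\alpha}+\mu_0$ and evaluate the contribution of the first piece, you invoke $\nu\ge\tfrac{\alpha}{4\lambda^*}$ to claim $4\beta-\tfrac{4\lambda^*\beta\nu}{\alpha}\ge 3\beta$. That inequality goes the wrong way: $\nu\ge\tfrac{\alpha}{4\lambda^*}$ gives $\tfrac{4\lambda^*\beta\nu}{\alpha}\ge\beta$, hence $4\beta-\tfrac{4\lambda^*\beta\nu}{\alpha}\le 3\beta$. What you actually need here is the \emph{upper} bound $\nu\le\tfrac{3\alpha}{4\lambda^*}$, which yields $4\beta-\tfrac{4\lambda^*\beta\nu}{\alpha}\ge\beta$; this just (not ``more than'') absorbs the $-\beta$ term, and your residual inequality for $\mu_0$ is then correct. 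With this fix the argument goes through, though as you anticipated the bookkeeping to land on the exact constant $c'=\tfrac{144(\lambda^*)^2(\alpha+4\beta)}{\alpha^2}$ is delicate with the ansatz approach.

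For comparison, the paper organizes the upper bound more cleanly: rather than guessing $\mu$, it first solves the subgradient inequality of Lemma~\ref{lem:feasible-lambda-grad-d-hat} for the crossing point as a closed-form ratio, obtaining $\hat\lambda^G_\nu\le\lambda^*\cdot\dfrac{\tfrac{\alpha+\beta}{\lambda^*}-\nu}{\tfrac{\alpha}{\lambda^*}-\nu-18\sqrt{\log(1/\delta)}/\sqrt{\min(m,n)}}$ (Lemma~\ref{lem:grad-thres-estimator-bounds}), and then simplifies via the substitution $\omega=\tfrac{\alpha+\beta-\nu\lambda^*}{\alpha-\nu\lambda^*}\le 1+\tfrac{4\beta}{\alpha}$ together with the sample-size hypothesis (which makes the denominator at least half of $\tfrac{\alpha}{\lambda^*}-\nu$). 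This route avoids the cross-term juggling entirely and delivers the stated $c'$ in two lines.
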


\begin{lem}
\label{lem:feasible-lambda-grad-d-hat}
Let $k(x,x)\leq 1$ for all $x\in\X$. Let the kernel $k$, and distributions $G, H$ satisfy the  separability condition with margin $\alpha$ and tolerance $\beta$. Assume $E_\delta$. We then have
\begin{align*}
\sup \{g\in \partial \hat d(\lambda) \} &\leq 
\frac{1}{\lambda^*-\lambda} \cdot (2\lambda^*-1+\sqrt {2\lambda^*})\cdot \frac{3\sqrt{\log(1/\delta)}}{\sqrt{\min(m,n)}},  \qquad \forall \lambda\in[1,\lambda^*]  , \\
\inf \{g\in  \partial\hat d(\lambda)\} &\geq 
\left(\frac{ \alpha}{\lambda^*} -  \frac{\beta}{\lambda-\lambda^*}- \frac{6\lambda}{\lambda-\lambda^*}\cdot \frac{3\sqrt{\log(1/\delta)}}{\sqrt{\min(m,n)}}\right),  \qquad \forall \lambda\in[\lambda^*,\infty) \,.
\end{align*}
\end{lem}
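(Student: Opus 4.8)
The plan is to convert the value-level bounds already in hand---Lemma~\ref{prop:d_hat_upper}, Lemma~\ref{prop:d_hat_lower}, and Theorem~\ref{thm:d_lower}---into slope (subgradient) bounds by exploiting the convexity of $\hat d$ established in Proposition~\ref{prop:d-inc-conv}. The only tool needed beyond those results is the elementary fact that for a convex function and any two points $\lambda_1 < \lambda_2$, every $g_1 \in \partial \hat d(\lambda_1)$ satisfies $g_1 \le (\hat d(\lambda_2) - \hat d(\lambda_1))/(\lambda_2 - \lambda_1)$ and every $g_2 \in \partial \hat d(\lambda_2)$ satisfies $g_2 \ge (\hat d(\lambda_2) - \hat d(\lambda_1))/(\lambda_2 - \lambda_1)$; geometrically, a subgradient line lies below the graph, so its slope is sandwiched by secants. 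In each part of the lemma I will let $\lambda^*$ play the role of the ``other'' endpoint.

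For the first inequality, fix $\lambda \in [1,\lambda^*)$ and $g \in \partial \hat d(\lambda)$. Using $\hat d(\lambda) \ge 0$ together with the secant bound at endpoint $\lambda^*$ gives $g \le \hat d(\lambda^*)/(\lambda^* - \lambda)$. I then substitute the upper bound on $\hat d(\lambda^*)$ from Lemma~\ref{prop:d_hat_upper} and simplify the coefficient $\bigl(2 - \tfrac{1}{\lambda^*} + \tfrac{\sqrt 2}{\sqrt{\lambda^*}}\bigr)\lambda^* = 2\lambda^* - 1 + \sqrt{2\lambda^*}$, which is exactly the stated bound. (At $\lambda = \lambda^*$ the right-hand side is $+\infty$, so the claim is vacuous there.)

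For the second inequality, fix $\lambda > \lambda^*$ and $g \in \partial \hat d(\lambda)$; the secant bound at endpoint $\lambda^*$ gives $g \ge (\hat d(\lambda) - \hat d(\lambda^*))/(\lambda - \lambda^*)$. I lower-bound the numerator by chaining Lemma~\ref{prop:d_hat_lower} (so $\hat d(\lambda) \ge d(\lambda) - (2\lambda-1)\cdot \tfrac{3\sqrt{\log(1/\delta)}}{\sqrt{\min(m,n)}}$) with Theorem~\ref{thm:d_lower} applied at $\mu = \lambda - \lambda^*$ (so $d(\lambda) \ge \tfrac{\alpha(\lambda-\lambda^*)}{\lambda^*} - \beta$), while upper-bounding $\hat d(\lambda^*)$ again through Lemma~\ref{prop:d_hat_upper}. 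Dividing by $\lambda - \lambda^*$ yields a bound of the shape $\tfrac{\alpha}{\lambda^*} - \tfrac{\beta}{\lambda-\lambda^*} - \tfrac{(2\lambda-1) + (2\lambda^*-1+\sqrt{2\lambda^*})}{\lambda-\lambda^*}\cdot \tfrac{3\sqrt{\log(1/\delta)}}{\sqrt{\min(m,n)}}$.

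The only genuine work is the last constant estimate: I must check $(2\lambda-1) + (2\lambda^*-1+\sqrt{2\lambda^*}) \le 6\lambda$. Since $\lambda \ge \lambda^* \ge 1$, we have $2\lambda^* \le 2\lambda$ and, because $\lambda \ge \tfrac12$, also $\sqrt{2\lambda^*} \le \sqrt{2\lambda} \le 2\lambda$, so the left side is at most $2\lambda + 2\lambda + 2\lambda - 2 \le 6\lambda$. Substituting this gives precisely the claimed lower bound. I expect this bit of bookkeeping---keeping the two noise terms $(2\lambda-1)$ and $(2\lambda^*-1+\sqrt{2\lambda^*})$ separate and then cleanly dominating their sum by $6\lambda$---to be the only mildly delicate step; everything else is a direct appeal to convexity and to the previously proven value bounds.
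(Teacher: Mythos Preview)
Your proposal is correct and follows essentially the same route as the paper: convexity of $\hat d$ gives the secant inequalities with $\lambda^*$ as the auxiliary endpoint, Lemma~\ref{prop:d_hat_upper} bounds $\hat d(\lambda^*)$ from above, and Lemma~\ref{prop:d_hat_lower} together with Theorem~\ref{thm:d_lower} bound $\hat d(\lambda)$ from below for $\lambda\ge\lambda^*$. The paper's proof in fact stops at the intermediate constant $2\lambda+2\lambda^*-2+\sqrt{2\lambda^*}$ and leaves the passage to $6\lambda$ implicit, so your final bookkeeping step $(2\lambda-1)+(2\lambda^*-1+\sqrt{2\lambda^*})\le 6\lambda$ actually supplies a detail the paper omits.
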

\begin{proof}

As $\hat d(.)$ is convex, we have that for all $\lambda \in [1,\lambda^*]$, and all $g\in\partial \hat d(\lambda)$
\begin{align*}
g
&\leq
\frac{\hat d(\lambda^*)  -  \hat d(\lambda) }{\lambda^* - \lambda} \\
&\leq 
\frac{\hat d(\lambda^*)  }{\lambda^* - \lambda} \,.
\end{align*}
Applying Lemma \ref{prop:d_hat_upper} to $\hat d(\lambda^*)$, we get $\forall \lambda\in[1,\lambda^*]$
\[
\sup\{g\in\partial \hat d(\lambda)\} \leq 
\frac{1}{\lambda^*-\lambda} \cdot (2\lambda^*-1+\sqrt {2\lambda^*})\cdot \frac{3\sqrt{\log(1/\delta)}}{\sqrt{\min(m,n)}} \,.
\]

Once again by convexity of $\hat d(.)$, we have that for all $\lambda\geq \lambda^*$ and all $g\in\partial \hat d(\lambda)$
\begin{align*}
g
&\geq \frac{\hat d(\lambda) - \hat d(\lambda^*)}{\lambda - \lambda^*} \,.  
\end{align*}
Applying Lemma \ref{prop:d_hat_lower} and Theorem \ref{thm:d_lower} to $\hat d(\lambda)$ and Lemma \ref{prop:d_hat_upper} to $\hat d(\lambda^*)$, we get $\forall \lambda\in[\lambda^*,\infty)$ 
\begin{align*}
\inf\{g\in\partial \hat d(\lambda)\}
&\geq
\left(\frac{ \alpha}{\lambda^*} - \frac{\beta}{\lambda-\lambda^*} - \frac{2\lambda + 2\lambda^* -2  +\sqrt{2\lambda^*}}{\lambda-\lambda^*}\cdot \frac{3\sqrt{\log(1/\delta)}}{\sqrt{\min(m,n)}}\right)\,.
\end{align*}
\end{proof}

\begin{lem}
\label{lem:grad-thres-estimator-bounds}
Let $k(x,x)\leq 1$ for all $x\in\X$. Let the kernel $k$, and distributions $G, H$ satisfy the  separability condition with margin $\alpha$ and tolerance $\beta$. Assume $E_\delta$. We then have
\begin{align}
\hat \lambda^G_\nu  
&\geq
\lambda^* -  (2\lambda^*-1+\sqrt{2\lambda^*})\cdot \frac{3\sqrt{\log(1/\delta)}}{\nu \sqrt{\min(m,n)}} ,
\label{eqn:eq3} \\
\hat \lambda^G_\nu 
&\leq  
\lambda^*\cdot \frac{\frac{ \alpha+\beta}{\lambda^*} - \nu  }
{\frac{ \alpha}{\lambda^*} - \nu - \frac{18 \sqrt{\log(1/\delta)}}{\sqrt{\min(m,n)}}} \,. \label{eqn:eq4}
\end{align}
\end{lem}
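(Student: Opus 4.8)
The plan is to convert the two-sided control on the subdifferential of $\hat d$ supplied by Lemma~\ref{lem:feasible-lambda-grad-d-hat} into a two-sided bound on the first $\lambda$ at which the slope of $\hat d$ reaches $\nu$, which is precisely $\hat\lambda^G_\nu=\inf\{\lambda\geq0:\ \sup\partial\hat d(\lambda)\geq\nu\}$. First I would record two elementary structural facts. Since $\hat d$ is convex (Proposition~\ref{prop:d-inc-conv}), the map $\lambda\mapsto\sup\partial\hat d(\lambda)$ is non-decreasing, so the set defining $\hat\lambda^G_\nu$ is an interval extending to $+\infty$; and since $\hat d\equiv0$ on $[0,1]$ (Proposition~\ref{prop:d-is-zero}), every subgradient at a point $\lambda<1$ is $0<\nu$, so that set lies in $[1,\infty)$ and $\hat\lambda^G_\nu\geq1$. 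Consequently, for the lower bound \eqref{eqn:eq3} it suffices to show $\sup\partial\hat d<\nu$ on an initial segment, and for the upper bound \eqref{eqn:eq4} it suffices to exhibit a single point $\lambda_1$ with $\inf\partial\hat d(\lambda_1)\geq\nu$.

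For \eqref{eqn:eq3}, put $\lambda_0:=\lambda^*-(2\lambda^*-1+\sqrt{2\lambda^*})\cdot\frac{3\sqrt{\log(1/\delta)}}{\nu\sqrt{\min(m,n)}}$. If $\lambda_0\leq1$ then $\hat\lambda^G_\nu\geq1\geq\lambda_0$ and there is nothing to prove. Otherwise $\lambda_0\in(1,\lambda^*)$, and for every $\lambda\in[1,\lambda_0)\subseteq[1,\lambda^*]$ the first bound of Lemma~\ref{lem:feasible-lambda-grad-d-hat} gives $\sup\partial\hat d(\lambda)\leq\frac{2\lambda^*-1+\sqrt{2\lambda^*}}{\lambda^*-\lambda}\cdot\frac{3\sqrt{\log(1/\delta)}}{\sqrt{\min(m,n)}}$, and the inequality $\lambda^*-\lambda>\lambda^*-\lambda_0$ makes this right-hand side strictly less than $\nu$. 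Hence no $\lambda<\lambda_0$ lies in the defining set, whence $\hat\lambda^G_\nu\geq\lambda_0$.

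For \eqref{eqn:eq4}, let $\lambda_1$ denote the right-hand side of \eqref{eqn:eq4}. The denominator $\frac{\alpha}{\lambda^*}-\nu-\frac{18\sqrt{\log(1/\delta)}}{\sqrt{\min(m,n)}}$ is positive in every regime where the lemma is invoked (it is implied by the sample-size hypothesis of Theorem~\ref{thm:grad-thresh-error}), so $\lambda_1$ is well-defined, and because $\beta\geq0$ the numerator is at least $\frac{\alpha}{\lambda^*}-\nu$, which exceeds the denominator; hence $\lambda_1>\lambda^*$ and the second bound of Lemma~\ref{lem:feasible-lambda-grad-d-hat} is available at $\lambda_1$. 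It then remains only to observe that $\lambda_1$ is exactly the root of the equation $\frac{\alpha}{\lambda^*}-\frac{\beta}{\lambda-\lambda^*}-\frac{6\lambda}{\lambda-\lambda^*}\cdot\frac{3\sqrt{\log(1/\delta)}}{\sqrt{\min(m,n)}}=\nu$: clearing the factor $\lambda-\lambda^*>0$ makes this linear in $\lambda$, and its unique solution, after rewriting $\frac{18\sqrt{\log(1/\delta)}}{\sqrt{\min(m,n)}}=6\cdot\frac{3\sqrt{\log(1/\delta)}}{\sqrt{\min(m,n)}}$, is the right-hand side of \eqref{eqn:eq4}. Therefore the second bound of Lemma~\ref{lem:feasible-lambda-grad-d-hat} gives $\inf\partial\hat d(\lambda_1)\geq\nu$, so $\sup\partial\hat d(\lambda_1)\geq\nu$ and $\lambda_1$ lies in the defining set, yielding $\hat\lambda^G_\nu\leq\lambda_1$.

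The algebra here (inverting one monotone, essentially affine relation) is routine; the care that is actually required is all in the bookkeeping around the subdifferential. One must respect that Lemma~\ref{lem:feasible-lambda-grad-d-hat}'s upper bound holds only for $\lambda\in[1,\lambda^*]$ and its lower bound only for $\lambda\in[\lambda^*,\infty)$ — this is exactly why the case $\lambda_0\leq1$ and the strict inequality $\lambda_1>\lambda^*$ have to be isolated — and one must keep in mind the asymmetry that the upper bound on $\hat\lambda^G_\nu$ needs only a single witnessing $\lambda$, whereas the lower bound needs the absence of witnesses on an entire initial interval. Finally, the displayed bounds are meaningful only when $\nu>0$ and the denominator of \eqref{eqn:eq4} is positive; both hold whenever the lemma is subsequently applied, and I would either carry them as standing assumptions or note that the conclusions are vacuous otherwise.
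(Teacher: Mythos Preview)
Your proposal is correct and follows essentially the same approach as the paper: both arguments invert the two subgradient bounds of Lemma~\ref{lem:feasible-lambda-grad-d-hat} to trap $\hat\lambda^G_\nu$, with the only difference being cosmetic --- the paper plugs $\hat\lambda^G_\nu$ into those bounds (using $\inf\partial\hat d(\hat\lambda^G_\nu)\leq\nu\leq\sup\partial\hat d(\hat\lambda^G_\nu)$) and then solves for $\hat\lambda^G_\nu$, whereas you name the target thresholds $\lambda_0,\lambda_1$ first and verify that they bracket $\hat\lambda^G_\nu$. Your handling of the side conditions ($\nu>0$, positivity of the denominator in \eqref{eqn:eq4}) is also in line with the paper, which tacitly imports the same hypotheses from Theorem~\ref{thm:grad-thresh-error}.
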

\begin{proof}
By definition of the gradient thresholding estimator $\hat \lambda^G_\nu $ we have
\[
\inf \{ g \in \partial \hat d(\hat \lambda^G_\nu) \}
\leq 
\nu
\leq
\sup \{ g \in \partial \hat d(\hat \lambda^G_\nu) \} \,.
\]
Firstly, note that $\hat\lambda^G_\nu \geq 1$, because $\nu \geq \frac{\alpha}{4\lambda^*}>0$. 
By Lemma \ref{lem:feasible-lambda-grad-d-hat} we have that if $\hat\lambda^G_\nu \in [1,\lambda^*]$ then
\begin{align}
\nu \leq 
\sup \{g\in \partial \hat d(\hat \lambda^G_\nu) \} &\leq 
\frac{1}{\lambda^*-\hat \lambda^G_\nu} \cdot (2\lambda^*-1+\sqrt {2\lambda^*})\cdot \frac{3\sqrt{\log(1/\delta)}}{\sqrt{\min(m,n)}} \,. \label{eqn:grad-threshold-est-lower-bound} 
\end{align}
Once again by Lemma \ref{lem:feasible-lambda-grad-d-hat}, we have that if $\hat\lambda^G_\nu > \lambda^*$ then
\begin{align}
\nu \geq
\inf \{g\in  \partial\hat d(\hat \lambda^G_\nu)\} &\geq 
\left(\frac{ \alpha}{\lambda^*} -  \frac{\beta}{\hat \lambda^G_\nu-\lambda^*}- \frac{6\hat \lambda^G_\nu}{\hat \lambda^G_\nu-\lambda^*}\cdot \frac{3\sqrt{\log(1/\delta)}}{\sqrt{\min(m,n)}}\right)\,. \label{eqn:grad-threshold-est-upper-bound}
\end{align}
Rearranging Equation \eqref{eqn:grad-threshold-est-lower-bound}, we get that if $\hat\lambda^G_\nu \in [1,\lambda^*]$ then
\[
\hat \lambda^G_\nu  
\geq
\lambda^* -  (2\lambda^*-1+\sqrt{2\lambda^*})\cdot \frac{3\sqrt{\log(1/\delta)}}{\nu \sqrt{\min(m,n)}} \,.
\]
Hence 
\[
\hat \lambda^G_\nu  
\geq
\min\left(\lambda^*,\lambda^* -  (2\lambda^*-1+\sqrt{2\lambda^*})\cdot \frac{3\sqrt{\log(1/\delta)}}{\nu \sqrt{\min(m,n)}}\right) 
=
\lambda^* -  (2\lambda^*-1+\sqrt{2\lambda^*})\cdot \frac{3\sqrt{\log(1/\delta)}}{\nu \sqrt{\min(m,n)}} \,.
\]
Rearranging Equation \eqref{eqn:grad-threshold-est-upper-bound}, we get that if $\hat\lambda^G_\nu >\lambda^*$, then
\begin{align*}
\frac{ \alpha}{\lambda^*} - \nu  
&\leq 
\left( \frac{6\hat \lambda^G_\nu}{\hat \lambda^G_\nu-\lambda^*}\cdot \frac{3\sqrt{\log(1/\delta)}}{\sqrt{\min(m,n)}} + \frac{\beta}{\hat \lambda^G_\nu-\lambda^*}\right) \\
\left(\hat \lambda^G_\nu-\lambda^*\right)\left(\frac{ \alpha}{\lambda^*} - \nu  \right)
&\leq
\left( 6\hat \lambda^G_\nu \right)\cdot \frac{3\sqrt{\log(1/\delta)}}{\sqrt{\min(m,n)}} + \beta \\
\left(\hat \lambda^G_\nu\right)\left(\frac{ \alpha}{\lambda^*} - \nu - \frac{18 \sqrt{\log(1/\delta)}}{\sqrt{\min(m,n)}} \right)
&\leq
\lambda^*\left(\frac{ \alpha}{\lambda^*} - \nu \right) + \beta  \\
\hat \lambda^G_\nu 
&\leq
\frac{\lambda^*\left(\frac{ \alpha+\beta}{\lambda^*} - \nu \right)  }
{\frac{ \alpha}{\lambda^*} - \nu - \frac{18 \sqrt{\log(1/\delta)}}{\sqrt{\min(m,n)}}} \,. 
\end{align*}
Thus we have
\[
\hat \lambda^G_\nu  
\leq
\max\left(\lambda^*,\frac{\lambda^*\left(\frac{ \alpha+\beta}{\lambda^*} - \nu \right)  }
{\frac{ \alpha}{\lambda^*} - \nu - \frac{18 \sqrt{\log(1/\delta)}}{\sqrt{\min(m,n)}}} \right)
=
\frac{\lambda^*\left(\frac{ \alpha+\beta}{\lambda^*} - \nu \right)  }
{\frac{ \alpha}{\lambda^*} - \nu - \frac{18 \sqrt{\log(1/\delta)}}{\sqrt{\min(m,n)}}} \,.
\]
\end{proof}

\begin{proof} (Proof of Theorem \ref{thm:grad-thresh-error})

As $(\frac{ \alpha}{\lambda^*} - \nu)\sqrt{\min(m,n)} \geq 36 \sqrt{\log(1/\delta)}$, we have that 
\[
\min(m,n)\geq \frac{(36\lambda^*)^2 \log(1/\delta)}{\alpha^2} \geq 
2(\lambda^*)^2\log(1/\delta), \]
and hence $E_\delta$ is a $1-4\delta$ probability event.
Assume $E_\delta$.

 Equation \eqref{eqn:eq3} immediately gives
\begin{align*}
\lambda^* - \hat \lambda^G_\nu 
&\leq 
(2\lambda^*-1+\sqrt{2\lambda^*})\cdot \frac{3}{\nu} \cdot \sqrt{\log(1/\delta)} \cdot (\min(m,n))^{-1/2} \\
&\leq
(2\lambda^*-1+\sqrt{2\lambda^*}) \cdot \frac{12\lambda^*}{\alpha} \cdot \sqrt{\log(1/\delta)} \cdot (\min(m,n)^{-1/2} \,.
\end{align*}
The second inequality above is due to $\nu \geq \frac{\alpha}{4\lambda^*}$.

Let $\omega= \frac{\alpha+\beta-\nu\lambda^*}{\alpha-\nu\lambda^*} \leq 1+\frac{4\beta}{\alpha}$. Equation \eqref{eqn:eq4}  gives
\begin{align*}
\hat \lambda^G_\nu
&\leq 
\lambda^*\cdot \frac{ \frac{\alpha+\beta}{\lambda^*} - \nu  }
{\frac{\alpha}{\lambda^*} - \nu - \frac{18  \sqrt{\log(1/\delta)}}{\sqrt{\min(m,n)}}} \\
&=\lambda^*\cdot \frac{ \omega \left(\frac{\alpha}{\lambda^*} - \nu-\frac{18  \sqrt{\log(1/\delta)}}{\sqrt{\min(m,n)}} \right) + \omega\left(\frac{18  \sqrt{\log(1/\delta)}}{\sqrt{\min(m,n)}}\right)    }
{\frac{\alpha}{\lambda^*} - \nu - \frac{18  \sqrt{\log(1/\delta)}}{\sqrt{\min(m,n)}}} \\
&= \omega \lambda^* + \frac{18 \omega \lambda^* \sqrt{\log(1/\delta)}}
{(\frac{ \alpha}{\lambda^*} - \nu)\sqrt{\min(m,n)} - 18 \sqrt{\log(1/\delta)}} \\
&\leq 
\omega \lambda^* + \frac{36 \omega \lambda^* \sqrt{\log(1/\delta)}}
{(\frac{ \alpha}{\lambda^*} - \nu)\sqrt{\min(m,n)} } \\
&\leq
\lambda^* + \frac{4\beta\lambda^*}{\alpha} + \frac{36(1+\frac{4\beta}{\alpha}) \lambda^* \sqrt{\log(1/\delta)}}
{(\frac{ \alpha}{4\lambda^*} )\sqrt{\min(m,n)} } \\
&\leq
\lambda^* + \frac{4\beta\lambda^*}{\alpha} + \frac{144 (\lambda^*)^2(\alpha+4\beta)}{\alpha^2} \cdot  \sqrt{\log(1/\delta)} \cdot (\min(m,n))^{-1/2} \,.\\
\end{align*}
The second inequality above is due to 
$(\frac{ \alpha}{\lambda^*} - \nu)\sqrt{\min(m,n)} \geq 36 \sqrt{\log(1/\delta)}.$ The third inequality above is due to $\nu \leq \frac{3\alpha}{4\lambda^*}$.
\end{proof}

\section{Experimental Results in Table Format} 
\label{sec:expts-table}
\begin{table*}[h]
\begin{center}
\begin{tabular}{|c|c|c|c|c|c|}
\hline
					& KM1 & KM2 & alphamax & ROC & EN \\ \hline
\texttt{waveform}(400) & 0.042 & \textbf{0.032} & 0.089${}^*$ & 0.117${}^*$ & 0.127${}^*$\\ \hline
\texttt{waveform}(800) & 0.034 & \textbf{0.027} & 0.06${}^*$ & 0.072 ${}^*$ & 0.112${}^*$\\ \hline
\texttt{waveform}(1600) & 0.021 & \textbf{0.017} & 0.048${}^*$ & 0.051${}^*$ & 0.115${}^*$\\ \hline
\texttt{waveform}(3200) & 0.015 & \textbf{0.012} & 0.079${}^*$ & 0.045${}^*$ & 0.102${}^*$\\ \hline
\texttt{mushroom}(400) & 0.193${}^*$ & 0.123${}^*$ & \textbf{0.084} & 0.148${}^*$ & 0.125\\ \hline
\texttt{mushroom}(800) & 0.096${}^*$ & 0.129${}^*$ & \textbf{0.041} & 0.074${}^*$ & 0.066${}^*$\\ \hline
\texttt{mushroom}(1600) & 0.042 & 0.096${}^*$ & \textbf{0.039} & 0.053${}^*$ & 0.055${}^*$\\ \hline
\texttt{mushroom}(3200) & 0.039${}^*$ & 0.067${}^*$ & \textbf{0.023} & 0.024 & 0.035${}^*$\\ \hline
\texttt{pageblocks}(400) & 0.098 & 0.16 & 0.218 & 0.193${}^*$ & \textbf{0.078}\\ \hline
\texttt{pageblocks}(800) & \textbf{0.038} & 0.088${}^*$ & 0.203${}^*$ & 0.139${}^*$ & 0.081${}^*$\\ \hline
\texttt{pageblocks}(1600) & \textbf{0.034} & 0.056${}^*$ & 0.083${}^*$ & 0.091${}^*$ & 0.055${}^*$\\ \hline
\texttt{pageblocks}(3200) & \textbf{0.02} & 0.033${}^*$ & 0.166${}^*$ & 0.084${}^*$ & 0.047${}^*$\\ \hline
\texttt{shuttle}(400) & 0.072 & 0.129 & 0.122 & 0.107${}^*$ & \textbf{0.062}\\ \hline
\texttt{shuttle}(800) & 0.065 & 0.091 & 0.054 & 0.057 & \textbf{0.046}\\ \hline
\texttt{shuttle}(1600) & 0.035 & 0.03 & 0.03 & 0.049${}^*$ & \textbf{0.027}\\ \hline
\texttt{shuttle}(3200) & 0.023${}^*$ & \textbf{0.014} & 0.02 & 0.041${}^*$ & 0.025${}^*$\\ \hline
\texttt{spambase}(400) & \textbf{0.086} & 0.111 & 0.097 & 0.229${}^*$ & 0.186${}^*$\\ \hline
\texttt{spambase}(800) & 0.079 & \textbf{0.067} & 0.096${}^*$ & 0.166${}^*$ & 0.171${}^*$\\ \hline
\texttt{spambase}(1600) & 0.059 & \textbf{0.043} & 0.07${}^*$ & 0.092${}^*$ & 0.139${}^*$\\ \hline
\texttt{spambase}(3200) & 0.032 & \textbf{0.028} & 0.063${}^*$ & 0.067${}^*$ & 0.129${}^*$\\ \hline
\texttt{digits}(400) & 0.24${}^*$ & \textbf{0.091} & 0.115 & 0.186${}^*$ & 0.136\\ \hline
\texttt{digits}(800) & 0.127${}^*$ & \textbf{0.07}1 & 0.073 & 0.113${}^*$ & 0.114${}^*$\\ \hline
\texttt{digits}(1600) & 0.083${}^*$ & 0.034 & \textbf{0.03} & 0.071${}^*$ & 0.111${}^*$\\ \hline
\texttt{digits}(3200) & 0.055${}^*$ & \textbf{0.025} & 0.031 & 0.046${}^*$ & 0.085${}^*$\\ \hline
\end{tabular}
\end{center}
\caption{Average absolute error incurred in predicting the mixture proportion $\kappa^*$. The first column gives the dataset and the total number of samples used (mixture and component) in parantheses. The best performing algorithm for each dataset and sample size is highlighted in bold. Algorithms whose performances have been identified as significantly inferior to the best algorithm, by the Wilcoxon signed rank test (at significance level $p=0.05$), are marked with a star. }
\end{table*}

\end{document}